\documentclass{article}
\usepackage{arxiv}

\usepackage[utf8]{inputenc} %
\usepackage[T1]{fontenc}    %
\usepackage{hyperref}       %
\usepackage{url}            %
\usepackage{booktabs}       %
\usepackage{amsfonts}       %
\usepackage{nicefrac}       %
\usepackage{microtype}      %
\usepackage{xcolor}         %

\usepackage{amsmath,amsfonts,bm, amsthm, mathtools}

\newcommand{\node}{{\bm x}}
\newcommand{\invar}{{\bm z}}

\newcommand{\weight}{{\lambda}}

\newcommand{\parent}{{\mathrm{pa}}}
\newcommand{\child}{{\mathrm{ch}}}

\newcommand{\sib}{{\mathrm{sib}}}
\newcommand{\ean}{{\mathrm{an}_{\mathrm{sib}}}}
\newcommand{\para}{{\bm{\theta}}}
\newcommand{\synth}{{\tilde{\bm{h}}}}
\newcommand{\grad}{{\bm{g}}}
\newcommand{\sgrad}{{\tilde\grad}}
\newcommand{\paragrad}{{\grad}}
\newcommand{\sparagrad}{{\tilde\paragrad}}
\newcommand{\hparagrad}{\hat{\grad}}
\newcommand{\hsparagrad}{\hat{\tilde\paragrad}}
\newcommand{\jacob}{{\bm{J}}}
\newcommand{\mse}{{\Delta}}
\newcommand{\smse}{{\tilde \Delta}}
\newcommand{\cmse}{{\rho}}
\newcommand{\scmse}{{\tilde \rho}}
\newcommand{\error}{{\nu}}
\newcommand{\serror}{{\tilde\nu}}
\newcommand{\bias}{{b}}
\newcommand{\sbias}{{\tilde{b}}}
\newcommand{\Mse}{\Delta}
\newcommand{\sMse}{{\tilde{\Delta}}}

\newcommand{\Berror}{{{\epsilon}}}

\newcommand{\sss}{{\tilde{\bm{p}}}}
\newcommand{\qqq}{{\tilde{ \bm{s}}}}

\DeclareMathOperator{\tr}{tr}

\newtheorem{theorem}{Theorem}

\newtheorem{assumption}{Assumption}

\newtheorem{lemma}{Lemma}
\newtheorem{definition}{Definition}

\def\eqref#1{equation~\ref{#1}}

\def\1{\bm{1}}

\DeclareMathAlphabet{\mathsfit}{\encodingdefault}{\sfdefault}{m}{sl}
\SetMathAlphabet{\mathsfit}{bold}{\encodingdefault}{\sfdefault}{bx}{n}

\def\gF{{\mathcal{F}}}

\def\gL{{\mathcal{L}}}

\def\gN{{\mathcal{N}}}
\def\gO{{\mathcal{O}}}

\def\gX{{\mathcal{X}}}

\def\sR{{\mathbb{R}}}

\newcommand{\E}{\mathbb{E}}

\newcommand{\Var}{\mathrm{Var}}

\newcommand{\Cov}{\mathrm{Cov}}

\DeclareMathOperator*{\argmin}{arg\,min}

\usepackage{algorithm}
\usepackage{algpseudocode}
\usepackage{caption}
\usepackage{float}
\usepackage{autonum}
\usepackage{wrapfig}
\usepackage{natbib}
\usepackage{booktabs}
\usepackage{siunitx}
\usepackage{enumitem}
\usepackage{amssymb}
\usepackage{graphicx}
\usepackage{subcaption}
\usepackage{fancyhdr}

\fancypagestyle{firstpage}{
    \fancyhf{} %
    
    \fancyfoot[L]{\textit{Preprint.}} %
    \fancyfoot[C]{\thepage} %
    \renewcommand{\headrulewidth}{0pt} %
    \renewcommand{\footrulewidth}{0pt} %
}

\renewcommand{\headrulewidth}{0pt} %
\title{Is Backpropagation Optimal? When Synthetic Gradients Improve Sample Efficiency}

\author{Yibo Jacky Zhang \\
Stanford University\\
\texttt{yiboz@stanford.edu} \\
\And
Zeyu Tang \\
Stanford University \\
\texttt{zeyu@cs.stanford.edu} \\
\And
Sanmi Koyejo \\
Stanford University \\
\texttt{sanmi@stanford.edu} \\
}

\begin{document}

\maketitle
\thispagestyle{firstpage}

\begin{abstract}
     Backpropagation is the default learning rule for artificial neural networks and is often treated as the settled approach whenever differentiability is available. In this work, we revisit this convention through a theoretical lens of sample efficiency. We introduce a unified vectorized feedback framework for loss-based and reward-based learning on computational graphs, in which synthetic gradients emerge as a natural alternative to backpropagation. We characterize the conditions under which synthetic gradients can achieve a lower gradient-estimation mean squared error than backpropagation. We construct examples illustrating that this sample efficiency advantage can be arbitrarily large. Experiments on contextual bandits and reinforcement learning tasks demonstrate the potential of our theoretical findings.
\end{abstract}

\section{Introduction}\label{sec:intro}

Backpropagation~\citep{Linnainmaa1976taylor, werbos1981applications, rumelhart1986learning} has become the default learning rule for artificial neural networks, repeatedly outperforming alternatives~(e.g., \citep{lillicrap2016random, lee2015difference, jaderberg2017decoupled, scellier2017equilibrium, hinton2022forward}).
As a result, backpropagation (BP) is often treated as the settled answer to neural networks learning whenever differentiability is available. In this paper, we revisit this convention from the perspective of gradient-estimation sample efficiency.

To formalize the sample efficiency analysis, we consider the gradient estimation problem on a computational graph with vectorized terminal feedback. This vectorized feedback framework unifies loss-based learning, where the vectorized feedback is the gradient of a loss, and reward-based learning, where it may be given by policy gradients \citep{williams1992simple, sutton1999policy}. 
In this picture, as detailed in the next section, backpropagation is analogous to a Monte Carlo (MC) estimator, and temporal-difference (TD) learning~\citep{sutton1988learning} provides a classical bootstrapped alternative to MC estimation. In the vectorized feedback setting, this bootstrapped alternative turns out to be the synthetic gradients~\citep{jaderberg2017decoupled, czarnecki2017understanding}.

Synthetic gradients (SG) were originally introduced from the perspectives of decoupled computation and bio-plausibility,  and their connection to TD learning was also noted early on~\citep{jaderberg2017decoupled}. However, prior work largely studied synthetic gradients in essentially chain-structured computational graphs, and established no clear empirical or theoretical advantage over unconstrained backpropagation. As a result, the topic has gradually fallen out of favor, with only limited recent interest \citep{pemberton2024bpmathbflambda, janfaza2023ada}. 

Our point of departure is that this negative picture reflects an incomplete analysis. As with TD versus MC methods, the advantage of synthetic gradient propagation is not universal, but regime-dependent. TD methods can offer practical sample-efficiency advantages over MC methods through bootstrapping, and their theoretical analysis is an active area of research~\citep{kearns2000bias, lazaric2010finite, pmlr-v75-bhandari18a, cheikhi2023statistical, cheng2024surprising}. In contrast, synthetic gradients remain theoretically underexplored, with limited existing work focused mainly on training convergence~\citep{czarnecki2017understanding, huo2018decoupled}. 
However, the analogy between TD and synthetic gradients suggests a possible sample-efficiency advantage over backpropagation, which leads to our central question:
\begin{center}
    \textbf{\textit{When can synthetic gradients improve gradient-estimation sample efficiency over backpropagation?}}
\end{center}

Formally, this paper studies gradient estimation on differentiable computational graphs. We compare the mean squared error of empirical gradient estimators formed from i.i.d. samples under backpropagation and synthetic gradient propagation.

The main contribution of this paper is thus a formal theoretical investigation into this underexplored yet potentially important question. Our results show that \emph{backpropagation is not necessarily the minimum-MSE estimator of the true gradient when gradient feedback contains uncertainty. We identify conditions under which synthetic gradients reduce gradient-estimation MSE}. \textbf{Our results can be summarized as follows.}

We introduce a vectorized feedback framework in Section~\ref{sec:motivation} that unifies loss-based and reward-based learning on computational graphs. Motivated by this framework, we formally define the gradient estimation problem in Section~\ref{sec:theory}. We then identify two conditions, defined formally in Section~\ref{sec:condition}, that govern the comparison. Condition $(A)$ says there must be uncertainty in gradient estimations. Specifically, $(A)$ requires \emph{at least one} of the following to hold: $(A. 1)$ \textit{randomness in the terminal gradient feedback}; $(A. 2)$ \textit{internal stochasticity in the network computation}; $(A. 3)$\textit{ sparse connectivity in the network that leads to partial observability at individual nodes}. Condition $(B)$ is a \textit{local sufficiency requirement}: each node's local observation contains sufficient information for the conditional mean of its incoming gradient feedback. Under these conditions, the comparison between backpropagation and synthetic gradients becomes sharp.

Our theory yields the following separation. With condition $\neg (A) \wedge (B)$, backpropagation is optimal in estimating mean squared error (MSE) among all conditionally unbiased gradient estimators. 
In contrast, with $(A) \wedge (B)$, an oracle synthetic gradient generator achieves a smaller estimation MSE than backpropagation. Thus, for synthetic gradient propagation to improve sample efficiency, some source of randomness or structural sparsity is necessary. This also helps explain why prior work often found synthetic gradients to be worse: synthetic gradients were typically evaluated in deterministic and fully connected networks on relatively clean datasets, essentially where condition $(A)$ is absent. When condition $(B)$ does not hold exactly, or when the synthetic gradient is not an idealized oracle, synthetic gradients may be biased. In this case, we show that a mixture of synthetic and backpropagated gradients can be preferable due to a better bias-variance tradeoff.

Inspired by the conditions $(A)$ and $(B)$, in Section~\ref{sec:example} we show examples where synthetic gradient propagation can be arbitrarily more sample efficient than backpropagation with estimated non-oracle gradient predictors. This construction exploits a concrete realization of the local sufficiency condition $(B)$: a network of specialized expert nodes that decomposes a global task into locally sufficient components.

Motivated by our theoretical findings, in Section~\ref{sec:exp} we apply synthetic gradients to a class of sparsely connected networks. We evaluate this approach on a contextual bandit problem with MNIST contexts \citep{lecun2002gradient} and on POPGym, a recent reinforcement learning benchmark for partially observable environments \citep{moradpopgym}. In both settings, the results reflect our theoretical findings: synthetic gradient propagation improves sample efficiency when non-trivial gradient uncertainty is present. We then discuss related work in Section~\ref{sec:literature}. Interestingly, the conditions that favor synthetic gradient propagation in our analysis also resemble several features of biological neural systems. We discuss this connection, along with limitations and future directions, in Section~\ref{sec:discuss}.

\section{Vectorized Feedback on Computational Graphs}\label{sec:motivation}
In this section, we formalize the analogy between TD versus MC estimation and synthetic gradients versus backpropagation on a layered DAG. 
We begin by fixing some general notation.
Vectors and matrices use boldface characters, e.g., \(\grad\) and \(\jacob\). Unless otherwise specified, these are random variables. A sample is denoted with a subscript, e.g., \(\grad_i\), while an estimator formed from samples is denoted by a hat, e.g., \(\hat{\grad}\). We use a tilde to denote quantities associated with synthetic gradient propagation, e.g., \(\tilde{\grad}\). We write \([n] := \{1,2,\dots,n\}\). Additional notation is introduced as needed.

Consider a neural network modeled by a layered DAG $G=(V,E)$. Let $\node_v$ denote the activation at a neuron $v\in V$. For each layer $l=0,\dots,L$, let $V_l$ denote the set of nodes in that layer. The input-layer (layer $l=0$) receives external inputs, and then each non-input node $v\in V_l$ computes its activation through a (possibly stochastic) differential function based on the collective activations $\invar_v$ of its parent neurons $\parent(v)\subseteq V_{l-1}$.
Let $v_L \in V_L$ denote a terminal node whose activation $\bm y:=\node_{v_L}$ serves as system output. Then, the terminal node $v_L$ receives a global vectorized feedback $\grad_{v_L}$. 
This vectorized feedback unifies several cases. In loss-based learning, one may take $\grad_{v_L} = \nabla_{\bm y}\ell$ for a loss term $\ell$ that may depend on the input data, target data, and the network output $\bm y$. In reward-based learning, the global feedback $\grad_{v_L}$ may be obtained from policy gradient. 

\begin{figure}[t]
    \centering
        \vspace{-2em}
        \includegraphics[width=0.75\linewidth]{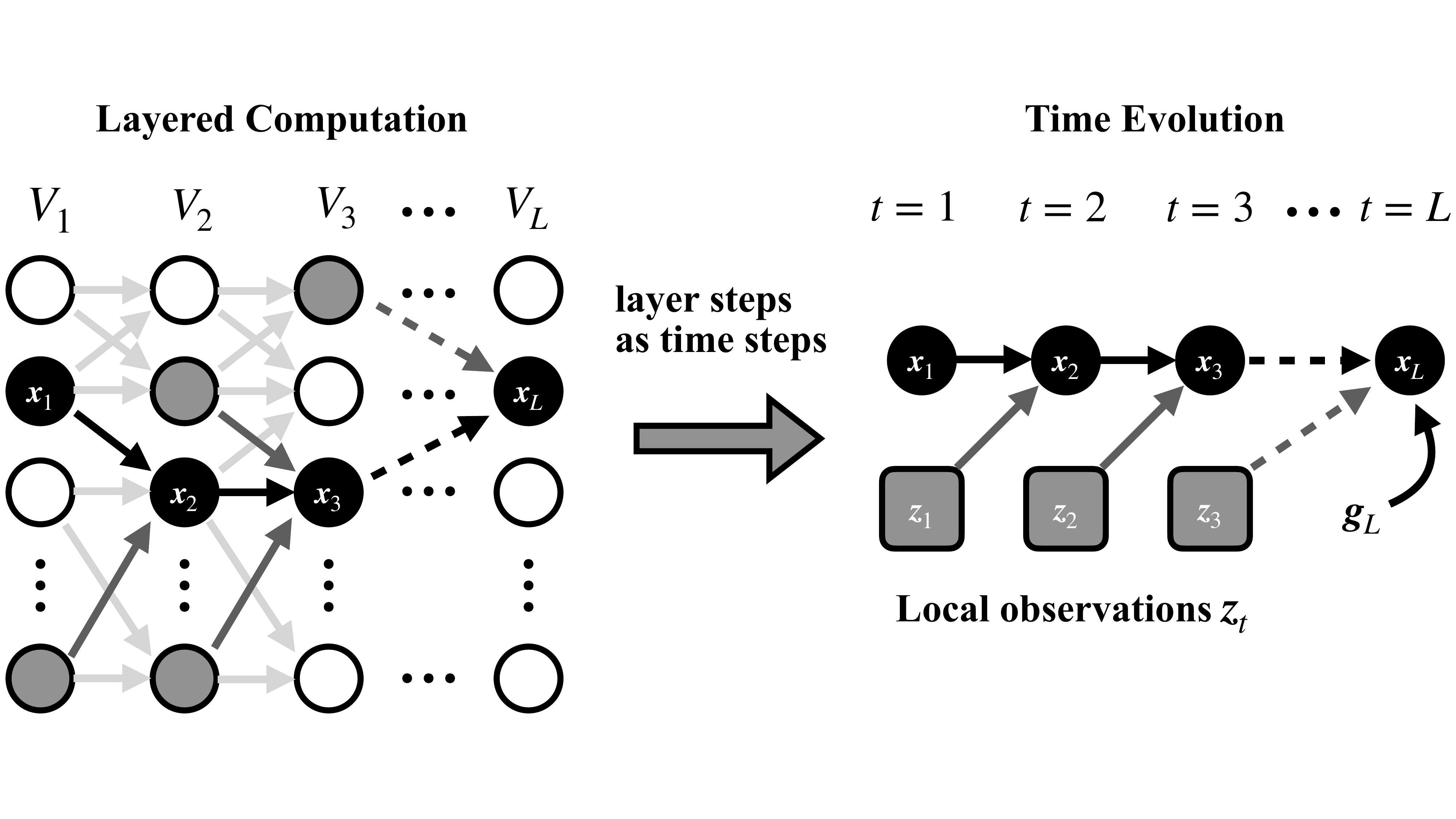}
        \vspace{-1.5em}
        \caption{\textbf{From a network path to a state evolution trajectory.} Left: a single path through a layered network is highlighted in black. Right: after treating layer steps as time steps, the same path is viewed as the trajectory of a state machine. Shaded nodes denote local observations available at each step.  Dashed arrows indicate information that skips across time. Curved arrow indicates a vectorized feedback  $\grad_L$.
        }
        \label{fig:motivation}
        \vspace{-1em}
\end{figure}

Backpropagation propagates this terminal feedback backward through the graph by repeated application of the chain rule. 
This recursion is exact, yet restrictive: each intermediate neuron plays a purely passive role. It receives downstream feedback and multiplies it by a Jacobian. 
To see the alternative viewpoint, consider a single path through the network, from a neuron ${v_1}\in V_1$ in the first layer to $v_L\in V_L$ in the final layer. Since we now focus on a single path, we use $\node_l:=\node_{v_l}$ for notational ease, and similarly drop the node indicator ``$v$'' for other relevant variables in the remainder of this section.

Now, let us view its layered computation $v_1\to v_2\to \cdots \to v_L$ as the time evolution of a state machine. At step $t$, the variable $\node_{t}$ is the local state of this machine, while the collection of variables at its parent nodes is viewed as a local observation $\invar_{t}$. 
After $L$ such local transitions, the trajectory receives terminal vectorized feedback $\grad_L$. This process is illustrated in Figure~\ref{fig:motivation}.
Under this view, backpropagation resembles Monte Carlo (MC) credit assignment: for any intermediate step $t$, the signal assigned to $x_t$ is obtained by directly transporting the terminal feedback $\grad_L$ through the chain rule. Thus, the entire global feedback $\grad_L$ is effectively given to the whole trajectory. 

Reward-based learning did not stop at MC, as TD methods can improve sample efficiency by replacing raw sampled future rewards with predicted values~\citep{sutton1988learning, lazaric2010finite}. 
The analogy between TD and synthetic gradients is particularly clean in a finite-horizon reward process with reward $r_L$ only at the terminal time. For value functions, the task is to estimate the value $V_t(\invar_t)=\E[r_L\mid \invar_t]$ from the local information $\invar_t$. For synthetic gradients, the task is to estimate the gradient feedback $\grad_t$ from the same type of local information $\invar_t$. 
Let $\jacob_t^{t+1}$ denote the transposed local Jacobian that maps feedback from step $t+1$ to step $t$, and let
$\jacob_t^L := \jacob_t^{t+1}\jacob_{t+1}^{t+2}\cdots \jacob_{L-1}^{L}$
be the full transposed Jacobian from step $L$ back to step $t$. Then the four estimators can be illustrated as follows.
\[
\renewcommand{\arraystretch}{1.35}
\begin{array}{rcl@{\quad }rcl}
\textbf{MC:}
& \widehat V_t^{\mathrm{MC}}(\invar_t)
&\leftarrow r_L
&
\textbf{TD:}
& \widehat V_t^{\mathrm{TD}}(\invar_t)
&\leftarrow \widehat V_{t+1}^{\mathrm{TD}}(\invar_{t+1}),
\qquad
\widehat V_t^{\mathrm{TD}}(\invar_t)\approx \E[r_L \mid \invar_t],
\\
\textbf{BP:}
& \widehat \grad_t
&\leftarrow \jacob_t^L \, \grad_L
&
\textbf{SG:}
& \widehat \synth_t(\invar_t)
&\leftarrow \jacob_t^{t+1}\widehat \synth_{t+1}(\invar_{t+1}),
\qquad
\widehat \synth_t(\invar_t)\approx \E[\grad_t \mid \invar_t].
\end{array}
\]

This extends the prior TD--SG analogy~\citep{jaderberg2017decoupled,pemberton2024bpmathbflambda} from a single chain of computations to a generic layered DAG. As we later show in Section~\ref{sec:condition}, this generalization is necessary for discovering conditions under which synthetic gradients can improve sample efficiency. However, let us first formalize the gradient-estimation problem and metrics for sample efficiency.

\section{Problem Formulation: Gradient Propagation and Gradient Estimation}\label{sec:theory}

In this section, we formalize gradient propagation and define the estimation criterion used throughout the paper. We begin by extending the notation from the previous section. Throughout the paper, we assume that all random variables have finite second moments, so that their mean squared errors are well defined.

Consider a layered DAG $G=(V,E)$ with node-wise random variables. For each layer $l\in\{0,\dots,L\}$, node $(l,i)\in V_l$ denotes the $i$-th node in layer $l$, where $i\in [m_l]$, and $V=\cup_{l=0}^L V_l$. Let $\node_v\in\sR^{d_v}$ denote the random variable at node $v$, for some dimension $d_v$. The input-layer variables $\{\node_{0,i}\}_{i=1}^{m_0}$ are drawn from an external distribution. For each layer $l\in [L]$ and each node $v\in V_l$, the node variable is generated by a differentiable function $f_v$ with a local parameter $\para_v$:
\begin{align}
    \node_v = f_v(\invar_v;\para_v),
    \qquad
    \text{where }
    \invar_v:=\bigl((\node_{v^-})_{v^-\in \parent(v)},\, \xi_v\bigr).
\end{align}
Here $\parent(v)$ denotes the set of parent nodes of $v$, and $\xi_v$ is an optional auxiliary random variable, independent of everything else, that governs the stochastic behavior of node $v$. For input nodes we may define $\invar_{(0, i)}=\node_{(0, i)}$. 
For each terminal node $(L,i)$, we are given a feedback vector $\grad_{(L,i)}\in \sR^{d_{(L,i)}}$, living in the same Euclidean space as $\node_{L,i}$. This vectorized feedback is the starting point for gradient propagation.

We next define the backward propagation of feedback vectors. For each $l\le L-1$ and each node $v\in V_l$, let $\child(v)\subseteq V_{l+1}$ denote the set of children of $v$. The backpropagated gradient at node $v$ is defined recursively by
\begin{align}
    \grad_v
    =
    \sum_{v^+\in \child(v)} \jacob_v^{\,v^+}\grad_{v^+},
    \qquad
    \text{where }
    \jacob_v^{\,v^+}
    :=
    \left(\frac{\partial \node_{v^+}}{\partial \node_v}\right)^\top .
\end{align}
Here $\frac{\partial \node_{v^+}}{\partial \node_v}$ is the Jacobian of the local computation at node $v^+$, and it is generally a random matrix because it depends on the $f_v$'s input $\invar_v$, which is a random variable.
This recursion is the usual backpropagation: each node aggregates the feedback from its children under the chain rule. 

Synthetic gradients replace part of the exact incoming gradient with a locally generated one. We formalize this in the following definition.

\begin{definition}[\textbf{Synthetic Gradient Propagation}]\label{def:sgprop}
For each edge $v\to v^+$ with $v\in V_l$, $l\le L-1$, and $v^+\in \child(v)$, let $\synth_v^{\,v^+}\in \sR^{d_v}$ denote a synthetic gradient generated locally by node $v^+$ for node $v$, and let $\weight_v^{\,v^+}\in [0,1]$ be a mixing weight. We define the propagated feedback recursively by
\begin{align}
    \sgrad_v
    =
    \sum_{v^+\in \child(v)}
    \Bigl(
        \weight_v^{\,v^+}\jacob_v^{\,v^+}\sgrad_{v^+}
        +
        \bigl(1-\weight_v^{\,v^+}\bigr)\synth_v^{\,v^+}
    \Bigr),
\end{align}
with terminal condition $\sgrad_v=\grad_v$ for all $v\in V_L$.
\end{definition}

When all mixing weights $\weight=1$, it reduces to ordinary backpropagation. When all $\weight=0$, backward propagation is fully synthetic. Intermediate values allow one to  trade off bias and variance.

The propagated feedback at node $v$, whether exact or synthetic, induces a parameter gradient at the local parameter $\para_v$:
\begin{align}
    \sparagrad_v^\para
    =
    \jacob_v^{\,\para}\sgrad_v,
    \qquad 
    \paragrad_v^\para
    =
    \jacob_v^{\,\para}\grad_v,
    \qquad
    \text{where }
    \jacob_v^{\,\para}
    :=
    \left(\frac{\partial \node_v}{\partial \para_v}\right)^\top.
\end{align}

We now define the criterion of estimator quality used throughout the paper. Let $\hsparagrad_v^\para$ be an estimator of the parameter gradient at node $v$, e.g., the empirical mean of i.i.d.\ samples. We evaluate estimators by mean squared error (MSE) relative to the true target $\E\,\paragrad_v^\para$.

\begin{definition}[\textbf{Gradient Estimator Mean Squared Error}]\label{def:mse}
Given an estimator $\hsparagrad_v^\para$ of the parameter gradient at node $v$, its MSE is
\begin{align}
    \smse_v^2
    :=
    \E\,\bigl\|
        \hsparagrad_v^\para - \E\,\paragrad_v^\para
    \bigr\|_2^2.
\end{align}
Similarly, when the estimator uses only backpropagated gradients, we write $\hparagrad_v^\para$ and $\mse_v^2$ for the estimator and its MSE.
\end{definition}

This is a classic criterion to capture sample complexity. A smaller MSE means fewer samples are needed to estimate the target gradient accurately. One may derive sample complexity bounds from $\smse_v^2$ using Chebyshev-type inequalities. Stronger assumptions, e.g., sub-Gaussianity, would yield sharper concentration bounds. In this paper, we impose no tail assumptions and work directly with MSE due to its exactness.

To analyze $\smse_v^2$, we decompose it into the following terms.

\begin{definition}\label{def:variance_terms}
For a fixed node $v$, define the expected conditional variance ($\scmse_v^2$), the variance of the conditional mean ($\serror_v^2$), and the squared bias ($\sbias^2_v$):
\begin{align}
    \scmse_v^2
    :=
    \E\,\bigl\|
        \sparagrad_v^\para - \E[\sparagrad_v^\para\mid \invar_v]
    \bigr\|_2^2,\quad
    \serror_v^2
    :=
    \E\,\bigl\|
        \E[\sparagrad_v^\para\mid \invar_v] - \E\,\sparagrad_v^\para
    \bigr\|_2^2,\quad
    \sbias_v^2
    :=
    \bigl\|
        \E\,\sparagrad_v^\para - \E\,\paragrad_v^\para
    \bigr\|_2^2.
\end{align}
Similarly, we define the corresponding backpropagation quantities $\cmse_v^2$ and $\error_v^2$. By definition, backpropagation is unbiased, so $\bias_v^2=0$.
\end{definition}

Intuitively, the quantity $\scmse_v^2$ measures the gradient signal that cannot be predicted from the node's local observations. Then, $\serror_v^2$ measures the part of the signal that is predictable from $\invar_v$. Finally, $\sbias_v^2$ measures the bias introduced by the gradient estimator. %

\begin{lemma}\label{lemma:decomposition}
Let $\hsparagrad_v^\para = \frac{1}{n}\sum_{i\in[n]}\sparagrad_{v,i}^\para$
be the empirical mean of $n$ i.i.d.\ samples. Then
\begin{align}
    \smse_v^2
    =\frac{1}{n}\left( \scmse_v^2+\serror_v^2\right)
    +
    \sbias_v^2.
\end{align}
In particular, for a conditionally unbiased estimator, i.e., $\E\,[\sparagrad_v^\para\mid \invar_v]=\E\,[\paragrad_v^\para\mid \invar_v]$, we have
\begin{align}
    \smse_v^2
    = \frac{1}{n}\left( \scmse_v^2+\error_v^2\right).
\end{align}
\end{lemma}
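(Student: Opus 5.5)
The plan is the standard bias--variance decomposition followed by the law of total variance. Write $\mu := \E\,\sparagrad_v^\para$ and $\mu_0 := \E\,\paragrad_v^\para$, and split the estimator error as
\begin{align}
    \hsparagrad_v^\para - \mu_0 = \bigl(\hsparagrad_v^\para - \mu\bigr) + \bigl(\mu - \mu_0\bigr).
\end{align}
The second term is deterministic. The first has mean zero because $\hsparagrad_v^\para$ is an average of samples with common mean $\mu$. Expanding the squared norm, the cross term therefore vanishes, and we get $\smse_v^2 = \E\,\|\hsparagrad_v^\para-\mu\|_2^2 + \sbias_v^2$.

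Next I handle the variance term using independence. Write $\hsparagrad_v^\para - \mu = \frac{1}{n}\sum_i (\sparagrad_{v,i}^\para - \mu)$ and expand the squared norm into a double sum of inner products. For $i\neq j$ the samples are independent and each is centered, so $\E\langle \sparagrad_{v,i}^\para-\mu,\sparagrad_{v,j}^\para-\mu\rangle = 0$; this uses the standing finite-second-moment assumption, so the expectations exist. Only the $n$ diagonal terms remain, which gives $\frac{1}{n}\E\,\|\sparagrad_v^\para-\mu\|_2^2$.

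I then split the single-sample variance by inserting the conditional mean $\E[\sparagrad_v^\para\mid\invar_v]$:
\begin{align}
    \sparagrad_v^\para-\mu = \bigl(\sparagrad_v^\para-\E[\sparagrad_v^\para\mid\invar_v]\bigr) + \bigl(\E[\sparagrad_v^\para\mid\invar_v]-\mu\bigr).
\end{align}
The cross term vanishes by the tower property: condition on $\invar_v$, note that the second factor is $\invar_v$-measurable, and observe that the first factor has zero conditional mean. The squared norms of the two pieces are exactly $\scmse_v^2$ and $\serror_v^2$, which gives the first displayed identity. The orthogonality in this step is the only point needing care; it relies on square-integrability so that conditional expectations are $L^2$ projections.

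For the conditionally unbiased case, assume $\E[\sparagrad_v^\para\mid\invar_v]=\E[\paragrad_v^\para\mid\invar_v]$. Taking expectations gives $\mu=\mu_0$, so $\sbias_v^2=0$. The conditional means coincide as random variables and their overall means coincide, so $\serror_v^2=\error_v^2$ directly from Definition~\ref{def:variance_terms}. Substituting both facts into the first identity yields the second formula.
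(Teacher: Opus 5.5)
Your proposal is correct and follows essentially the same route as the paper. Both arguments use the bias--variance split, obtain the $1/n$ factor from i.i.d.\ averaging, and then apply the law of total variance after inserting $\E[\sparagrad_v^\para\mid\invar_v]$, with the conditionally unbiased case giving $\sbias_v^2=0$ and $\serror_v^2=\error_v^2$; the only difference is that you spell out why the cross terms vanish, which the paper leaves implicit.
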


Lemma~\ref{lemma:decomposition} makes the bias-variance tradeoffs explicit, which is expected from the MC vs TD analogy. The next section is devoted to characterizing when this tradeoff favors synthetic gradient propagation.

\section{Conditions for When Synthetic Gradient Improves Sample Efficiency}\label{sec:condition}

We now turn to the central question. \emph{When can synthetic gradients improve sample efficiency?} 

To identify the conditions, we need to introduce the following definitions. 
For a node $v\in V$, define the sibling set (self-included) of a node $v\in V_{l}$ by
\begin{align}
    \sib(v) := \{v'\in V_{l}:\parent(v')\cap \parent(v)\neq \varnothing\}.
\end{align}
\begin{definition}[\textbf{Extended Ancestor Set}]\label{def:ances}
For a non-input node $v\in V_l$, we define its sibling-extended ancestor set by the following recursion. 
\begin{align}
    \ean(v):=\left( \cup_{v^-\in \parent(\sib(v))} \ean(v^-) \right)\cup \sib(v), 
\end{align}
i.e., $\ean(v)$ contains the siblings of $v$ and the extended ancestors of its siblings' parents, including its own ancestors.  We define the sibling-extended ancestor set of any input node to be empty.
\end{definition}
For any set of nodes $U\subset V$, define $\bm Z_{U}:=\{\invar_u:u\in U\}$ to be the collection of the corresponding local observables. 

Under the MSE criterion, we discover that the sample efficiency is governed by two conditions.

\begin{definition}\label{def:conditions}
We introduce conditions $(A)$ and $(B)$.

\textbf{(A): Uncertainty sources in the gradient feedback}: (A) holds if \textbf{any} of the following conditions is true.
\begin{itemize}
    \item[(A.1)] \textbf{Stochastic terminal gradient feedback.}  
    The external gradient feedback $\grad_{(L,i)}$ to some terminal node $(L, i)\in V_L$ is not almost surely a deterministic function of its output $\node_{(L, i)}$.

    \item[(A.2)] \textbf{Stochastic network computation.}  
    Some node $v\in V$ performs stochastic computation.

    \item[(A.3)] \textbf{Sparsely connected network.}  
    Some non-input node $v\in V_l$ is sparsely connected to its previous layer, i.e., $\parent(v)\neq V_{l-1}$.

\end{itemize}
\textbf{(B): Conditional mean sufficiency.} Every non-input node's local observation $\invar_v$ is sufficient for its conditional mean gradient $\E[\grad_v\mid \invar_v]$ relative to its extended ancestors' information. Formally, for any node $v\in V_l$ and any ancestor subset $U_l\subseteq \ean(v)$, we have $\E[\grad_v\mid \invar_v]=\E[\grad_v\mid \invar_v,\bm{Z}_{U_l}]$ almost surely.

Note that both conditions are stated in terms of the raw gradient $\grad_v$, not in terms of any particular propagation rule. They are properties of the problem, not of the estimator.
\end{definition}

Condition $(A)$ ensures that there are uncertainty sources in the incoming gradient feedback at some node, and hence room for variance reduction. The first two cases are direct, while the third case is subtler: even when the terminal feedback and all node computations are deterministic, a sparsely connected node may still see randomness in its incoming gradient due to a partial observation of its upstream states.

Then, condition $(B)$ makes the comparison sharp. First, when $(A)$ fails, it implies that there is no realized uncertainty left in the incoming gradient. Second, when $(A)$ holds, it ensures an oracle synthetic gradient to be unbiased. Condition $(B)$ can also be realized concretely as a form of specialization: an expert node whose functions are largely independent of others. This interpretation will become more transparent in Section~\ref{sec:example}.

We note that $(B)$ can be viewed as a weaker version of the Markov assumption:  standard Markov assumption requires future variables to be conditionally independent of the full history given the current observables, while condition $(B)$ requires only that the conditional mean of the incoming gradient feedback be determined by the node's current observables. Existing sample-complexity analyses comparing TD and MC estimators typically impose Markovian state representations, often in tabular or linear settings~\citep{kearns2000bias, lazaric2010finite, pmlr-v75-bhandari18a, cheikhi2023statistical, cheng2024surprising}.

In the next section, we show that these two conditions yield a clean separation between backpropagation and synthetic gradient propagation under an idealized oracle synthetic gradient estimator. As condition $(B)$ and the oracle assumption can be strong, we then relax both of them to develop a more general comparison in Section~\ref{subsec:generalization}.

\subsection{Clean Comparisons Under Idealized Conditions}\label{subsec:clean}

First, we establish the results for the optimality of backpropagation.

\begin{theorem}\label{thm:bp-optimal}
Under $\neg (A) \wedge (B)$, every gradient estimator $\hat{\sgrad}^\para$ formed by empirical means of i.i.d. conditionally unbiased gradient samples (i.e., $\E\,[\sparagrad_v^\para\mid \invar_v]=\E\,[\paragrad_v^\para\mid \invar_v]$) has its estimation MSE greater or equal to that of the empirical-mean backpropagation estimator: $\smse_v^2\geq \mse_v^2$ for any non-input nodes $v\in V$.

\end{theorem}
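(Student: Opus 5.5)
By Lemma~\ref{lemma:decomposition}, any conditionally unbiased empirical-mean estimator has
\begin{align}
    \smse_v^2=\tfrac{1}{n}\bigl(\scmse_v^2+\error_v^2\bigr),
    \qquad
    \mse_v^2=\tfrac{1}{n}\bigl(\cmse_v^2+\error_v^2\bigr).
\end{align}
The variance-of-conditional-mean term $\error_v^2$ is therefore shared, and it suffices to show $\cmse_v^2\le\scmse_v^2$. Since $\scmse_v^2\ge 0$, the plan is to prove the stronger statement that under $\neg(A)\wedge(B)$ we have $\cmse_v^2=0$. Equivalently, $\paragrad_v^\para$ is almost surely a (measurable) function of $\invar_v$. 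Because $\jacob_v^{\,\para}$ is a function of $\invar_v$ and $\para_v$ (no $\xi_v$, by $\neg$(A.2)), it is enough to show that $\grad_v$ is a.s.\ $\sigma(\invar_v)$-measurable. Then $\E[\paragrad_v^\para\mid\invar_v]=\paragrad_v^\para$, so the backpropagated sample has zero conditional variance and $\mse_v^2$ is minimal.

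First I unpack $\neg(A)$. By $\neg$(A.2), every $\node_u$ is a deterministic function of its parents, hence of the input layer. By $\neg$(A.1), each terminal feedback $\grad_{(L,i)}$ is a deterministic function of $\node_{(L,i)}$. By $\neg$(A.3), every non-input node of layer $l$ has $\parent(v)=V_{l-1}$, so the network is fully connected. Consequently, for $v\in V_l$ with $l\ge 1$, the observation $\invar_v=(\node_u)_{u\in V_{l-1}}$ is the entire state of layer $l-1$, identical for all nodes of layer $l$. The forward computation determines every later layer from it: $\node_{v'}$ for $v'\in V_{l'}$, $l'\ge l$, and hence all Jacobians $\jacob_{v'}^{\,v''}$ with $v'\in V_{l'}$, $l'\ge l$, which depend only on layer $l'\ge l$ states. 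The terminal feedbacks are functions of the terminal outputs, so they too are functions of $\invar_v$. Unrolling the backpropagation recursion from layer $L$ down to $v$, $\grad_v$ is a finite sum of products of these Jacobians with terminal feedbacks, so $\grad_v$ is $\sigma(\invar_v)$-measurable. This is exactly the remark after Definition~\ref{def:conditions} that when (A) fails no uncertainty remains in the incoming gradient.

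Condition (B) plays only a consistency role in this direction: full connectivity makes $\ean(v)$ contain all earlier layers, and (B) is then automatically satisfied. So I would note that the argument does not genuinely need it. I expect the main obstacle to be bookkeeping rather than any deep inequality. I need to state carefully that the layer-$l$ nodes' functions $f_{v'}$, with fixed parameters, depend only on $\invar_{v'}=\invar_v$, and to handle $v\in V_L$ as the base case using $\neg$(A.1), where $\grad_v$ depends on $\node_v=f_v(\invar_v)$. I would also have to check that the measurable-function composition is legitimate; this is standard because all maps are deterministic and measurable.

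Finally, combining the pieces gives $\cmse_v^2=0\le\scmse_v^2$, hence $\smse_v^2\ge\mse_v^2$ for every non-input node $v$. Equality holds precisely when the competing estimator also has zero conditional variance.
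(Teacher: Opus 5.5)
Your proposal is correct, and its skeleton matches the paper's proof. Both use Lemma~\ref{lemma:decomposition} to reduce the claim to $\cmse_v^2=0$. Both obtain this by showing that $\grad_v$ is a.s.\ a function of $\invar_v$, relying on two facts: $\jacob_{v^-}^{\,v}$ is a function of $\invar_v$, and under $\neg$(A.2) and $\neg$(A.3), $\invar_v$ is a function of $\invar_{v^-}$. The paper runs a backward induction on the layer statement $S(l)$, with base case $\neg$(A.1). You instead unroll the recursion directly from forward determinism. These are the same argument with different bookkeeping.

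The one substantive difference is your observation that (B) is not needed, and it is correct. The paper invokes (B) to write $\E[\grad_v\mid\invar_v]=\E[\grad_v\mid\invar_v,\invar_{v^-}]$. Under the inductive hypothesis $\grad_v=\E[\grad_v\mid\invar_v]$ a.s., however, that identity holds trivially. So the theorem in fact holds under $\neg(A)$ alone.

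Your stated justification for why (B) is automatic is slightly off, though. A larger $\ean(v)$ makes (B) harder to satisfy, not easier, since more subsets $U_l$ must be checked. The actual reason is the fact you prove: $\grad_v$ is already $\sigma(\invar_v)$-measurable, so conditioning additionally on any $\bm Z_{U_l}$ leaves $\E[\grad_v\mid\invar_v]$ unchanged.
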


 In other words, under $(B)$, it is necessary for $(A)$ to hold for synthetic gradient propagation to be more sample efficient: one needs a source of uncertainty for any improvement over exact backpropagation to even be possible. %
We first consider an idealized synthetic gradient oracle, i.e., the prediction of the incoming gradient based on local observables:
\begin{align}
    \synth_{v}^{\, v^+}=\jacob_{v}^{\, v^+}\synth_{v^+},\quad \text{where }\ \synth_{v^+}:=\E\,[\sgrad_{v^+}\mid \invar_{v^+}]. \label{eq:synth-def}
\end{align}

Note that Conditions $(A.1)$--$(A.3)$ each correspond to a source variable $\bm \chi$ that introduces realized uncertainty  at some node $v\in V$:
\begin{itemize}[leftmargin=2em]
    \item under \textnormal{(A.1)}, \(\bm \chi\) is an exogenous source of terminal feedback randomness;
    \item under \textnormal{(A.2)}, \(\bm \chi\) is an internal source of stochastic computation;
    \item under \textnormal{(A.3)}, \(\bm \chi\) is a collection of node variables unobserved by $v$ due to sparse
    connectivity.
\end{itemize}

We impose the following technical assumption to exclude degenerate cases that would otherwise obscure the main message of our theorem.
\begin{assumption}[\textbf{Non-degenerate uncertainty source}]\label{assume:1}
When condition $(A)$ holds, at least one of its uncertainty sources is non-degenerate: there exist non-terminal $v\in V$ and a source variable $\bm{\chi}$ corresponding to
$(A.1)$--$(A.3)$, such that  $\E\left[\grad_v^\theta \mid \invar_v, \bm Z_{\child(v)}\right]
        \neq
        \E\left[\grad_v^\theta \mid \invar_v, \bm Z_{\child(v)} ,\bm{\chi}\right]$ with positive probability.
\end{assumption}
This rules out the degenerate case where the source $\bm \chi$ does not introduce effective uncertainty: it should remain effective to some parameter gradient conditioned on local observables $\{\invar_v, \bm Z_{\child(v)} \}$.

\begin{theorem}\label{thm:synth-better}
    Under $(A)\wedge (B)$ and Assumption~\ref{assume:1}, consider synthetic gradient defined in \eqref{eq:synth-def} and all mixing weights $\lambda=0$. Then, for all non-input $v\in V$: $\smse^2_v\leq \mse^2_v$, and there exists some $v$: $\smse^2_v< \mse^2_v$.
\end{theorem}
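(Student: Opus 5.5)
With all $\weight=0$ and the oracle of \eqref{eq:synth-def}, the propagated feedback at a non-terminal node is
\begin{align}
\sgrad_v=\sum_{v^+\in\child(v)}\jacob_v^{\,v^+}\,\E[\sgrad_{v^+}\mid\invar_{v^+}].
\end{align}
The plan has three parts: show this estimator is conditionally unbiased, compare the two MSEs through Lemma~\ref{lemma:decomposition}, and use Assumption~\ref{assume:1} to get a strict gap.

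\textbf{Step 1: conditional unbiasedness.} We argue by backward induction on layers. The inductive claim is that $\E[\sgrad_u\mid\invar_u]=\E[\grad_u\mid\invar_u]$ for every $u$, and that this equality survives conditioning on any extra $\bm Z_U$ with $U\subseteq\ean(u)$.
\begin{itemize}
\item The terminal layer is trivial, since there $\sgrad=\grad$.
\item For the inductive step, fix $v$. Each $\jacob_v^{\,v^+}$ is a function of $\invar_{v^+}$, so
\begin{align}
\E[\sgrad_v\mid\invar_v]=\sum_{v^+}\E\bigl[\jacob_v^{\,v^+}\E[\grad_{v^+}\mid\invar_{v^+}]\,\big|\,\invar_v\bigr].
\end{align}
\item We then need to replace $\E[\grad_{v^+}\mid\invar_{v^+}]$ by $\E[\grad_{v^+}\mid\invar_{v^+},\invar_v]$ and apply the tower property. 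This recovers $\E[\sum_{v^+}\jacob_v^{\,v^+}\grad_{v^+}\mid\invar_v]=\E[\grad_v\mid\invar_v]$.
\item That replacement is exactly what condition (B) licenses, because $v\in\parent(v^+)$ implies $v\in\ean(v^+)$.
\end{itemize}
Multiplying by $\jacob_v^{\,\para}$, which is $\invar_v$-measurable, gives $\E[\sparagrad_v^\para\mid\invar_v]=\E[\paragrad_v^\para\mid\invar_v]$. Hence $\sbias_v=0$ and $\serror_v=\error_v$. I expect this step to be the main obstacle. The induction has to be carried with respect to the correct $\sigma$-algebras, which is presumably why (B) is stated over the extended ancestor set $\ean$, including siblings, rather than over parents alone. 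The inductive hypothesis will likely need strengthening to conditioning on $\invar_v,\bm Z_{U}$ for $U\subseteq\ean(v)$.

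\textbf{Step 2: variance comparison.} By Lemma~\ref{lemma:decomposition} it suffices to show $\scmse_v^2\le\cmse_v^2$. Write $\mathcal G$ for the $\sigma$-algebra generated by $\invar_v$ and $\bm Z_{\child(v)}$. The main observation is
\begin{align}
\sparagrad_v^\para=\E[\paragrad_v^\para\mid\mathcal G].
\end{align}
This holds because $\sparagrad_v^\para=\jacob_v^{\,\para}\sum_{v^+}\jacob_v^{\,v^+}\E[\grad_{v^+}\mid\invar_{v^+}]$, and by (B) together with Step 1 each $\E[\grad_{v^+}\mid\invar_{v^+}]$ equals $\E[\grad_{v^+}\mid\mathcal G]$. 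This in turn holds because the other children's observations and $\invar_v$ lie in $\ean(v^+)$ through the sibling relation. Given this identity, the law of total variance conditional on $\invar_v$ yields
\begin{align}
\cmse_v^2=\scmse_v^2+\E\bigl\|\paragrad_v^\para-\E[\paragrad_v^\para\mid\mathcal G]\bigr\|_2^2\;\ge\;\scmse_v^2 .
\end{align}
Therefore $\smse_v^2\le\mse_v^2$ for all non-input $v$. Terminal nodes have equality trivially.

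\textbf{Step 3: strictness.} Assumption~\ref{assume:1} supplies a non-terminal $v$ and a source $\bm\chi$ such that $\E[\paragrad_v^\para\mid\mathcal G]\ne\E[\paragrad_v^\para\mid\mathcal G,\bm\chi]$ with positive probability. This forces $\paragrad_v^\para$ to differ from $\E[\paragrad_v^\para\mid\mathcal G]$ with positive probability. The gap term above is then strictly positive, so $\smse_v^2<\mse_v^2$ at that node.
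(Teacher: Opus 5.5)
Your proposal is correct and follows the paper's proof essentially step for step. It uses the same backward induction for conditional unbiasedness, the same identity $\sparagrad_v^\para=\E[\paragrad_v^\para\mid\invar_v,\bm Z_{\child(v)}]$ obtained from (B) via the sibling relation, and the same total-variance gap made strictly positive by Assumption~\ref{assume:1}. The only difference is that the strengthened inductive hypothesis you anticipate is unnecessary: the paper carries only $\E[\sgrad_v\mid\invar_v]=\E[\grad_v\mid\invar_v]$ and applies (B) with the single extra node $v^-$ at each step, exactly as your bullets already do.
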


Therefore, under condition $(B)$, condition $(A)$ cleanly characterizes when the oracle synthetic gradients can be more sample efficient. In the next section, we relax both the condition $(B)$ and the oracle assumption, moving to a more general regime where there is indeed a bias-variance tradeoff achieved by non-trivial mixing weights $\lambda$.

\subsection{Comparison Under Generalized Conditions }\label{subsec:generalization}

\begin{definition}[\textbf{(B'): Generalized conditional mean sufficiency}]\label{def:B-eps}
Every non-input node's local observation $\invar_v$ is approximately sufficient for its conditional mean gradient $\E[\grad_v\mid \invar_v]$ relative to its extended ancestors' information. Formally,  $\forall v\in V_l$ and any ancestor subset $U_l\subseteq \ean(v)$, there is a constant $\Berror>0$: 
\begin{align}
    \left\|\E[\grad_v\mid \invar_v] - \E[\grad_v\mid \invar_v,\bm{Z}_{U_l}] \right\|_2 \leq \Berror,\qquad a.s.
\end{align}

\end{definition}

To relax the synthetic gradient oracle setting, we consider a generic synthetic gradient generator. Specifically, for some constant $\epsilon'>0$ and for each edge $v\to v^+$, let $\synth_v^{\,v^+}=\jacob_v^{\,v^+}\synth_{v^+}$ satisfy
\begin{align}
    \left\|
        \synth_{v^+}
        -
        \E[\sgrad_{v^+}\mid \invar_{v^+}]
    \right\|_2
    \leq \epsilon',
    \qquad a.s.
    \label{eq:approx-synth-def}
\end{align}

Thus, we allow two sources of approximation: imperfect local sufficiency through $(B')$, and imperfect synthetic gradient generator through \eqref{eq:approx-synth-def}. We next study how these approximations change the comparison between backpropagation and synthetic gradient propagation, and in particular how they lead to nontrivial optimal choices of the mixing weights $\weight$. In this subsection, for all node $v$, we consider $\weight_v=\weight_v^{\, v^+}$ for any of its children $v^+\in \child(v)$. This means that each node treats the synthetic gradient from their children equally.

If bias is present, the optimal mixing weight must trade bias against variance. Fix a non-terminal node $v\in V_l$. The parameter gradient induced by synthetic gradient propagation can be written as
\begin{align}
\sgrad^{\para}_v&=\weight_{v}\underbrace{\jacob^\para_v\sum_{v^+\in \child(v)} \jacob_{v}^{\, v^+} \sgrad_{v^+}}_{=:\sss_v}  + (1-\weight_{v}) \underbrace{\jacob^\para_v\sum_{v^+\in \child(v)}\synth_{v}^{\, v^+}}_{=:\qqq_v}=\weight_{v}\sss_v + (1-\weight_{v})\qqq_v.
\end{align}
Here $\sss_v$ is the component obtained by backpropagating gradient feedback through Jacobians, whereas $\qqq_v$ is the component obtained from synthetic gradients.

Let $\hat\sss_v$ and $\hat\qqq_v$ denote the empirical means of $\sss_v$ and $\qqq_v$, respectively, based on $n$ i.i.d.\ samples. Then the MSE $\smse^2_v$ of the empirical mean estimator and the optimal mixing weight $\weight^\star$ is
\begin{align}
    \smse^2_v=\E\,\| \hsparagrad^\para_v-\E\, \grad_{v}^\para\|_2^2 =\E\,\left\| \weight_{v}\hat\sss_v + (1-\weight_{v})\hat\qqq_v-\E\,\grad_{v}^\para\right\|_2^2 ,\quad\text{and}\  \ \weight_v^\star = \argmin_{\weight_v\in [0, 1]}\ \smse^2_v.
\end{align}

    If $\sss_v = \qqq_v$ almost surely, then $\weight_v$ is irrelevant and we pick $\weight_v^\star=0$.  Otherwise, standard quadratic optimization shows that there is a unique solution:
    \begin{align}
        \weight^\star_v = \frac{\E\,\| \hat\qqq_v-\E\,\grad_{v}^\para\|_2^2-\E\,\langle \hat\qqq_v-\E\,\grad_{v}^\para,\hat\sss_v- \E\, \grad_{v}^\para\rangle}{\E\,\| \hat\qqq_v-\hat\sss_v\|_2^2}\quad \text{projection onto }\ [0, 1]. \label{eq:lambda-star}
    \end{align}
    Thus $\weight_v^\star<1$ if incorporating a nonzero synthetic component improves over pure backpropagation at node $v$.
    The next theorem gives a  sufficient condition for this to happen.

\begin{theorem}\label{thm:general-benefits}
    Under condition (B') and \eqref{eq:approx-synth-def}, assume bounded jacobians $\forall v, v^+:\ \max\{\|\jacob_{v}^{\, v^+}\|_2, \|\jacob_{v}^{\para}\|_2\}\leq J$. 
    A sufficient condition for $\weight^\star_v<1$ is:
   \begin{align}
        \frac{\tr\Cov(\sss_v)-\tr\Cov(\qqq_v)}{n}>\left(J^2(\Berror+\epsilon')\sum_{v^+\in \child(v)} (1+c_{v^+})\right)^2
    \end{align}
    where $\forall u\in V: c_u$ is defined recursively via 
    \begin{align}
        c_{u}=J\sum_{u^+\in \child(u)}(1-\weight_{u}+c_{u^+}),\quad \text{and for terminal nodes } \ \forall u\in V_L: c_u=0. 
    \end{align}
    Here $\tr\Cov(\cdot)$ denotes the trace of the covariance matrix, i.e., the total variance of a random vector, i.e., $\tr\Cov(\grad)=\E\, \|\grad-\E \,\grad\|_2^2$.

\end{theorem}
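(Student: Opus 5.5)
Since $\smse^2_v(\weight_v)$ is a convex quadratic in $\weight_v$, $\weight_v^\star<1$ holds exactly when the derivative of $\smse^2_v$ at $\weight_v=1$ is strictly positive. So the plan is to prove that this endpoint derivative is positive under the stated inequality.

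Write $\bm g^\star:=\E\,\grad_v^\para$ and expand
\begin{align}
\smse^2_v(\weight)=\E\,\bigl\|\weight(\hat\sss_v-\bm g^\star)+(1-\weight)(\hat\qqq_v-\bm g^\star)\bigr\|_2^2 .
\end{align}
The derivative at $\weight=1$ is $2\,\E\,\langle \hat\sss_v-\hat\qqq_v,\hat\sss_v-\bm g^\star\rangle$. We need this to be strictly positive. Splitting each empirical mean into its expectation plus a centered fluctuation, the derivative becomes
\begin{align}
\tfrac{2}{n}\Bigl(\tr\Cov(\sss_v)-\tr\langle\Cov(\sss_v,\qqq_v)\rangle\Bigr)+2\langle \E\sss_v-\E\qqq_v,\ \E\sss_v-\bm g^\star\rangle .
\end{align}
For the cross term, use $\tr\Cov(\sss_v,\qqq_v)\le\sqrt{\tr\Cov(\sss_v)\tr\Cov(\qqq_v)}\le \tfrac12(\tr\Cov(\sss_v)+\tr\Cov(\qqq_v))$. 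The variance part is then at least $\tfrac1n(\tr\Cov(\sss_v)-\tr\Cov(\qqq_v))$.

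For the mean part, write $\bm\beta_s:=\E\sss_v-\bm g^\star$ and $\bm\beta_q:=\E\qqq_v-\bm g^\star$. The term equals $2\langle \bm\beta_s-\bm\beta_q,\bm\beta_s\rangle\ge -2\|\bm\beta_q\|\|\bm\beta_s\|$, and also $\ge -\tfrac12\|\bm\beta_q\|^2$ after completing the square. It therefore suffices to have
\begin{align}
\tfrac1n\bigl(\tr\Cov(\sss_v)-\tr\Cov(\qqq_v)\bigr)>\max\{\|\bm\beta_s\|,\|\bm\beta_q\|\}^2 .
\end{align}
The target bound then follows from showing $\|\bm\beta_s\|,\|\bm\beta_q\|\le J^2(\Berror+\epsilon')\sum_{v^+}(1+c_{v^+})$. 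The exact constant bookkeeping may need the cruder bound $\|\bm\beta_s\|^2+\|\bm\beta_q\|^2$ to be absorbed, but I would first try the completing-the-square route.

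The core of the proof is a bias recursion, which I will set up by induction from the terminal layer backward. For a node $u$, define $e_u:=\operatorname{ess\,sup}\|\E[\sgrad_u-\grad_u\mid \invar_u,\text{upstream}]\|$, the conditional bias of the propagated feedback relative to the true gradient. Terminal nodes have $e_u=0$. For the synthetic part, each edge contributes
\begin{align}
\synth_u^{u^+}-\jacob_u^{u^+}\grad_{u^+}=\jacob_u^{u^+}\Bigl(\synth_{u^+}-\E[\sgrad_{u^+}\mid\invar_{u^+}]\Bigr)+\jacob_u^{u^+}\Bigl(\E[\sgrad_{u^+}-\grad_{u^+}\mid\invar_{u^+}]\Bigr)+\jacob_u^{u^+}\Bigl(\E[\grad_{u^+}\mid\invar_{u^+}]-\grad_{u^+}\Bigr).
\end{align}
The first term is bounded by $J\epsilon'$ using \eqref{eq:approx-synth-def}. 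The second is bounded by $J e_{u^+}$. The third has conditional mean, given the information at $u$, within $J\Berror$ of zero by (B'). Here we use that $\invar_u$ and the other relevant observables lie in the extended ancestor set of $u^+$, so conditioning on them changes $\E[\grad_{u^+}\mid\cdot]$ by at most $\Berror$. This is precisely why Definition~\ref{def:ances} includes siblings, and it mirrors the argument behind Theorem~\ref{thm:synth-better}. The backpropagated part contributes only $J e_{u^+}$. Mixing with weight $\weight_u$ gives
\begin{align}
e_u\le J\sum_{u^+}\bigl((1-\weight_u)(\Berror+\epsilon')+e_{u^+}\bigr).
\end{align}
Normalizing, $e_u\le(\Berror+\epsilon')c_u$ with exactly the recursion for $c_u$ in the statement, after bounding $\Berror+\epsilon'$ terms by a common factor.

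Finally, at node $v$, multiplying by $\jacob_v^\para$ (norm $\le J$) gives two bounds:
\begin{align}
\|\bm\beta_s\|\le J^2\sum_{v^+}(\Berror+\epsilon')c_{v^+},\qquad \|\bm\beta_q\|\le J^2(\Berror+\epsilon')\sum_{v^+}(1+c_{v^+}).
\end{align}
Both are dominated by the right-hand side of the theorem. I expect the main obstacle to be the bias recursion step: the conditioning sets must be nested correctly so that (B') applies with the appropriate $U_l\subseteq\ean(u^+)$, and the tower property must be used carefully so that the $\Berror$ errors do not compound multiplicatively beyond the factor $J$ per layer.
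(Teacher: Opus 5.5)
Your proof is correct. The bias recursion is essentially the paper's Lemma~\ref{lemma-biased}: the same split of the synthetic edge term, the same use of $(B')$ with the enlarged conditioning set $\{u\}\cup U\subseteq\ean(u^+)$ plus the tower property, and the same constants $c_u$. Your bound on $\bm\beta_q$ is exactly the paper's bound on $\|\E\,\qqq_v-\E\,\grad_v^\para\|_2$.

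You differ from the paper in how you reduce $\weight_v^\star<1$ to a variance-versus-bias inequality. Write $f(\weight_v):=\smse_v^2$.

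\begin{itemize}
\item The paper compares endpoint values, requiring $f(0)<f(1)$. Each endpoint MSE splits cleanly as $\tfrac1n\tr\Cov+\text{bias}^2$, so no cross-covariance appears, and it simply drops $\|\bm\beta_s\|^2\ge 0$.
\item You use the endpoint derivative $f'(1)>0$. This is the exact criterion for $\weight_v^\star<1$; the degenerate case $\sss_v=\qqq_v$ a.s.\ is excluded by the hypothesis. By convexity it is implied by the paper's criterion, since $f'(1)\ge f(1)-f(0)$.
\end{itemize}

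Your route costs you the cross term $\tr\Cov(\sss_v,\qqq_v)$. Your AM--GM step handles it and recovers exactly the paper's variance difference. In return, completing the square leaves only $\tfrac12\|\bm\beta_q\|^2$, so you in fact prove the theorem with the right-hand side halved.

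Some smaller points:
\begin{itemize}
\item On this route you never need the bound on $\bm\beta_s$. Your hedges about the $\max$, or about absorbing $\|\bm\beta_s\|^2+\|\bm\beta_q\|^2$, are unnecessary.
\item The inclusion $\{u\}\cup U\subseteq\ean(u^+)$ uses only the parent part of Definition~\ref{def:ances}. The sibling part is what Theorem~\ref{thm:synth-better} needs, to condition on $\bm Z_{\child(v)}$; it is not what this argument relies on.
\end{itemize}
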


Theorem~\ref{thm:general-benefits} has a simple interpretation. The left-hand side is the variance reduction obtained by using the synthetic component $\qqq_v$ instead of the backpropagated component $\sss_v$, after averaging over $n$ samples. The right-hand side is the bias penalty induced by synthetic gradients. Thus, a synthetic mixture can be beneficial whenever the variance reduction dominates the induced bias.

A question remains: how large can this sample efficiency advantage be when the synthetic-gradient generator must itself be estimated from data rather than treated as given? In the next section, we answer this question by constructing examples in which the advantage can be arbitrarily large.

\section{An Example of Finite-Partition Expert Network }\label{sec:example}

We consider a two-hidden-layer architecture in which the first hidden layer a single node, and the second hidden layer has nodes $V_2=\{(2, j)\}_{j\in [m_2]}$. In the rest of the section, we denote $j:=(2, j)$. We focus on gradient estimation on the node $(1,1)\in V_1$ in the first layer, and thus in this section \emph{we omit the subscript when referring to this target node}. For example, $\grad^\para$ refers to the backpropagated gradient on this target node $(1,1)\in V_1$.

The first layer node computes a $d$-dim feature representation $\node=f(\invar; \para)\in \sR^d$, where $\invar$ is the input sampled from an external source  (i.e., from layer $l=0$).  The second layer contains $m_2$ expert nodes, where each expert $j\in[m_2]$ specializes in a $k$-dimensional subspace of the feature vector $\node$. Formally, each expert $j$ effectively acts on $\bm u_{j}=P_{j}\node$, where $P_{j}:\sR^d\to \sR^k$ is a projection. Each expert outputs 
$y_j=f_{j}(\bm u_j)\in\sR$. Then, the final node collects the vector of expert outputs $\bm y=(y_1,\ldots,y_{m_2})\in\sR^{m_2}$.

Each expert node $j$ solves an independent sub-task, such that the global loss function decomposes across the experts. The global loss function and its gradient to expert node $j$ are respectively $\gL:=
    \sum_{j=1}^{m_2}\ell_j(\bm{u}_j,y_j)$, and $\grad_j
    :=
    \frac{\partial \gL}{\partial y_j}
    =
    \frac{\partial \ell_j(\bm u_j,y_j)}{\partial y_j}$.
We compare ordinary backpropagation and synthetic-gradient propagation under three sources of stochastic expert feedback, each inspired by a sub-condition in condition $(A)$. 
\begin{itemize}[leftmargin=1em]
    \item \textbf{Case 1: noisy terminal feedback.}
    The external feedback carries i.i.d. noise:
$
    \hat\grad_j=\grad_j+\xi_j,
    \qquad
    \E[\xi_j]=0,\qquad \E[\xi^2_j]=\sigma^2.$
    \item \textbf{Case 2:  stochastic expert activation.}  
    Each expert is activated with probability $p$ independently every time. The gradient feedback is thus    $\hat\grad_j=\frac{\eta_j}{p}\grad_j, \qquad  \eta_j\sim\mathrm{Bernoulli}(p)$.
\item \textbf{Case 3: partial observation.}  In this case, the terminal node sees some other variables $\invar'_j$, independent across $j$, such that the global loss term $\gL'=
    \sum_{j=1}^{m_2}\ell_j(\bm{u}_j,y_j, \invar'_j)$, and its gradient to expert $j$ becomes
    $\hat\grad_j
    =\frac{\partial \gL'}{\partial y_j}=
    \frac{\partial \ell_j(\bm u_j,y_j, \invar'_j)}{\partial y_j}$.
\end{itemize}

The synthetic gradient from expert $j$ to the target node $(1,1)$ needs to be estimated: $\hat\synth^j(\bm{u}_j) \approx \jacob^j\E[\grad_j\mid \invar_j]$.
To characterize sample efficiency when the synthetic gradient estimator is itself learned from data, we restrict attention to discrete inputs. This yields a tabular setting, analogous to those commonly used in sample-complexity analyses of TD and MC estimators~\citep{cheikhi2023statistical, cheng2024surprising}.

Let $\invar\in \gX$ denote the input variable to our target node $(1,1)\in V_1$. Let the input space $\gX$ be partitioned into $R=2^d$ sets $\gX=\bigsqcup_{r=1}^R \gX_r$.
We write $\invar_r$ for the representative input in partition $r$, and suppose the input variable $\invar$ is sampled from a uniform distribution over these representative inputs. The target node's function is piecewise parameterized:
\begin{align}
    f(\invar; \para) = \sum_{r=1}^R
    \bm 1\{\invar \in\gX_r\}f_{\para_r}(\invar), \quad \text{where}\ \ \para=(\para_1,\ldots,\para_R). \label{eq:def:piecewise}
\end{align}
Without loss of generality, let $\node_r=f_{\para_r}(\invar_r)\in\{\pm1\}^d$.
We assume that the map $r\mapsto \node_r$ is one-to-one and onto
$\{\pm1\}^d$. Therefore, drawing $r$ uniformly from $[R]$ is equivalent
to drawing $\node_r$ uniformly from $\{\pm1\}^d$.
Let each expert projection $P_j$ selects exactly $k$ coordinates of $\node_r$.
Thus $\bm{u}_{j,r}:=P_j\node_r\in\{\pm1\}^k$.
For any $\bm{u}\in\{\pm1\}^k$, let the equivalent class under this projection be 
\begin{align}
    \mathcal F_j(r) := \{s\in[R]:P_j\node_s=P_j\node_r\}, \qquad \text{and thus }\ \ \forall r\in [R]:\ |\mathcal F_j(r)|=2^{d-k}.
\end{align}

In this example, this state-pooling effect is the source of the sample-efficiency advantage of each expert nodes' synthetic gradient  $\synth^j\approx \jacob^j\E[\hat\grad_j\mid \bm{u}_j]$. We defer the full construction of the example and the theorem statement to Appendix~\ref{sec:detail-example}, and give an informal version below.

\begin{theorem}[Informal version of Theorem~\ref{thm:per-state-mse}]
\label{thm:per-state-mse-informal}
Under the settings described in this section, for all the three cases we have
\begin{align}
    \mse^2= 2^{d-k}\, \smse^2.
\end{align}

\end{theorem}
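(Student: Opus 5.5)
The argument is a per-partition (tabular) MSE computation. Both estimators are decomposed into the $R=2^d$ parameter blocks $\para_r$. The key fact is that the synthetic estimator for block $r$ pools feedback from every sample falling in the equivalence class $\mathcal F_j(r)$, whereas backpropagation uses only samples with exactly $\invar=\invar_r$. I will fix the estimators explicitly, since the informal statement leaves them implicit.

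\textbf{Setup.} We observe $n$ i.i.d. samples with $r_i$ uniform on $[R]$. The gradient with respect to $\para_r$ is nonzero only when $r_i=r$, and then equals $\jacob^\para_r\sum_j P_j^\top \jacob_{j}\hat\grad_j$. Here $\jacob_j$ is the derivative of $f_j$ at $\bm u_{j,r}$, which is deterministic given $r$.

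\textbf{Backpropagation estimator.} Its block $r$ is $\frac1n\sum_i \bm 1\{r_i=r\}\,\jacob^\para_r\sum_j P_j^\top\jacob_j\hat\grad_{j,i}$.

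\textbf{Synthetic estimator.} Each expert $j$ first estimates the tabular synthetic gradient $\hat\synth^j(\bm u)$ as the empirical mean of $\jacob_j\hat\grad_{j,i}$ over samples with $\bm u_{j,i}=\bm u$. It then plugs this into block $r$: for each sample with $r_i=r$, it uses $\hat\synth^j(\bm u_{j,r})$ in place of $\jacob_j\hat\grad_{j,i}$. Equivalently, in the population form, block $r$ gets weight $1/R$ times $\jacob^\para_r\sum_j P_j^\top\hat\synth^j(\bm u_{j,r})$.

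I will take the version in which the multiplicity $\bm 1\{r_i=r\}/n$ is replaced by its mean $1/R$ in both estimators, or else condition on the counts. This keeps the comparison on the feedback noise only. I would state this normalization explicitly in the formal Theorem~\ref{thm:per-state-mse}.

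\textbf{Step 1: reduce to per-expert, per-state variances.} Under the decomposable loss, the experts' feedback noises are independent across $j$, and the clean part $\grad_j$ is a deterministic function of $\bm u_j$ in all three cases. In Case 1 the noise is $\xi_j$. In Case 2 it is $(\eta_j/p-1)\grad_j$. In Case 3 it is $\hat\grad_j-\E[\hat\grad_j\mid\bm u_j]$, using independence of $\invar'_j$ across $j$.

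Hence both estimators are conditionally unbiased: $\E[\hat\grad_j\mid \bm u_j]=\grad_j$, or the corresponding conditional mean in Case 3. It follows that $\mse^2$ and $\smse^2$ are each a sum over $r$ and $j$ of $\|\jacob^\para_rP_j^\top\jacob_j\|^2$ times the variance of an empirical mean of the noise.

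This also matches Theorem~\ref{thm:synth-better}. Condition (B) holds here because $\E[\hat\grad_j\mid \invar]$ depends only on $\bm u_j$.

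\textbf{Step 2: count the samples.} For block $r$ and expert $j$, backpropagation averages noise over the samples with $r_i=r$, about $n/R$ of them. The synthetic estimator averages over the samples with $r_i\in\mathcal F_j(r)$, about $n2^{d-k}/R$ of them. Since the noise has the same conditional variance $s_{j}^2(\bm u_{j,r})$ for every state in the class, the per-term variance ratio is exactly $|\mathcal F_j(r)|=2^{d-k}$, for every $r$ and $j$. Summing gives $\mse^2=2^{d-k}\smse^2$.

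The conditional variances are:
\begin{itemize}
\item Case 1: $\sigma^2$.
\item Case 2: $\grad_j^2(1-p)/p$, which depends only on $\bm u_j$.
\item Case 3: $\Var(\hat\grad_j\mid\bm u_j)$.
\end{itemize}

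\textbf{Main obstacle.} The hard step is making the sample counts exact rather than approximate, since the numbers of samples landing in $r$ or in $\mathcal F_j(r)$ are random binomials. An exact factor $2^{d-k}$ does not survive naive handling, because $\E[1/N]$ differs from $1/\E N$.

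The first thing I would try is a stratified design in which each representative input appears equally often, say $n=Rm$ samples with $m$ per state. Then the class $\mathcal F_j(r)$ contains exactly $2^{d-k}m$ samples and the identity holds exactly. Failing that, I would use the Horvitz–Thompson form with $1/R$ weights; there the pooled sum's variance is again a $2^{d-k}$ fraction, because it is $|\mathcal F_j(r)|$ independent copies rescaled by $1/|\mathcal F_j(r)|$.

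A secondary check is that cross terms between different experts $j\ne j'$ vanish. This follows from independence of the noises across experts together with the zero conditional means.
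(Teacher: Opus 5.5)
Once you commit to the stratified design (the same number of draws per representative input), your argument is correct and is essentially the paper's proof: that design is exactly the paper's phased-sampling model, and your per-$(r,j)$ comparison (noise averaged over $\mathcal F_j(r)$ versus over $r$ alone, with cross-expert terms vanishing by independence and zero mean) is precisely Lemma~\ref{lem:common-per-state-mse}; since the factor $2^{d-k}$ holds term by term, your argument also does not need the paper's constancy-across-experts hypotheses. Drop the Horvitz--Thompson fallback, though: with i.i.d.\ uniform draws the shared multinomial counts add variance involving the clean gradients $\grad_{j,s}$ and correlate both distinct states in $\mathcal F_j(r)$ and distinct experts, so the ratio there is not exactly $2^{d-k}$ unless those clean gradients vanish.
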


Thus, in this example, the estimation MSE of backpropagation can be arbitrarily larger than that of synthetic gradient propagation. In particular, the gap scales exponentially with the dimension reduction induced by the expert nodes' subspace specialization.

\section{Experiments}\label{sec:exp}

The experiments serve a single purpose: \emph{to investigate whether synthetic gradients can still provide sample-efficiency benefits in more applied settings where the conditions in Section~\ref{sec:condition} may not hold and the synthetic-gradient predictors must be learned from data.}
We investigate synthetic gradients in three experimental settings\footnote{Code is available at \url{https://github.com/jackyzyb/sgprop}}: from the finite-partition expert network as seen in the previous section, to learning contextual bandits and partially observable RL.

\textbf{Simulation of the finite-partition expert network.}
We first simulate the expert network from Section~\ref{sec:example}.  %
Figure~\ref{fig:main-exp-summary}~(a) shows the estimated MSE ratio for the setting of noisy terminal feedback (Case 1). The simulated MSE ratio grows exponentially with $d-k$, matching the predicted results from Theorem~\ref{thm:per-state-mse}. implementation details and simulation results for other cases are shown in Appendix~\ref{subsec:full-exp-expert}.

\textbf{SparseNet.} Following our theoretical findings, we investigate a sparsely connected neural network architecture, where every neuron can propagate synthetic gradients. For a neuron to perform both non-linear computations and synthetic gradient prediction, a natural idea is to use a soft-gating mechanism where it has shared keys for input matching and separate heads for non-linear transformation and synthetic gradient prediction. Each neuron also estimates their optimal mixing weight according to \eqref{eq:lambda-star}. This architecture is named \texttt{SparseNet}, as detailed in Appendix~\ref{subsec:full-exp-sparsenet}. 

\paragraph{MNIST contextual bandit.}
We next test a contextual bandit task learned with policy gradient. The context is a sampled MNIST image~\citep{lecun2002gradient}, the action is a sampled digit label, and the reward is one for a correct label and zero otherwise; during training the reward is flipped with probability $p_{\mathrm{flip}}=0.4$ to simulate a noisy environment. The SparseNet is a five-layer sparse feedforward network with width $200$ and $5$ parents per neuron. The first layer receives input images, and final layer's output is converted into probability logits for sampling actions. We compare SparseNet with synthetic gradient propagation and backpropagation, and a dense MLP. Figures~\ref{fig:main-exp-summary} (b)\&(c) show that synthetic gradient with adaptive mixing weights improves convergence speed relative to number of samples seen.  All curves report means over $6$ seeds with standard error of the mean (SEM). Implementation details and ablation on the number of parents, $p_{\mathrm{flip}}$, learning rates, and $\lambda$ dynamics are shown in Appendix~\ref{subsec:full-exp-mnist}. 

\begin{figure}[t]
    \centering
    \begin{subfigure}[b]{0.32\textwidth}
        \centering
        \includegraphics[width=\linewidth]{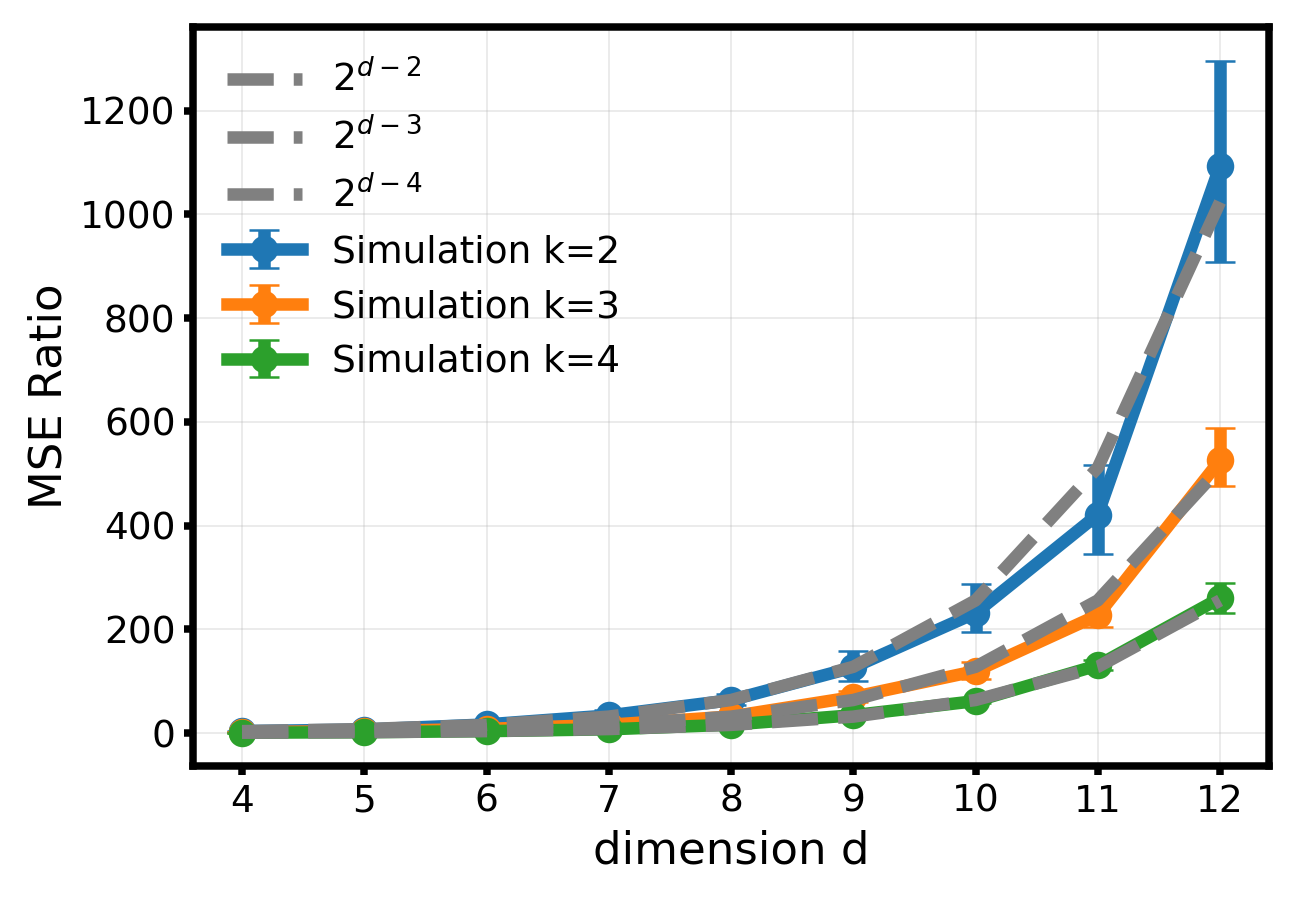}
        \caption{Expert network}
        \label{fig:main-expert-noisy}
    \end{subfigure}
    \begin{subfigure}[b]{0.32\textwidth}
        \centering
        \includegraphics[width=\linewidth]{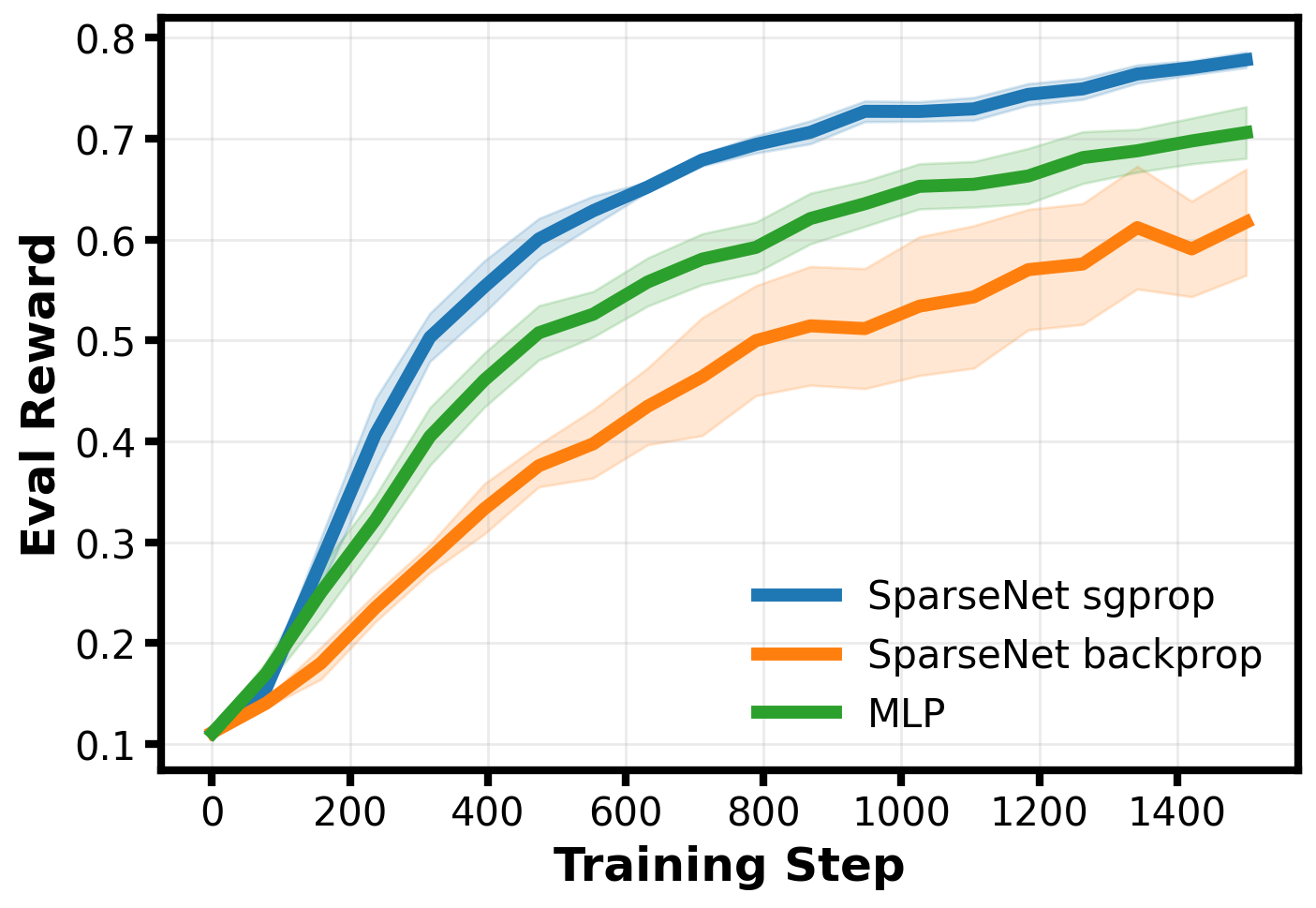}
        \caption{bandit training dynamics}
        \label{fig:main-mnist-training}
    \end{subfigure}
    \begin{subfigure}[b]{0.32\textwidth}
        \centering
        \includegraphics[width=\linewidth]{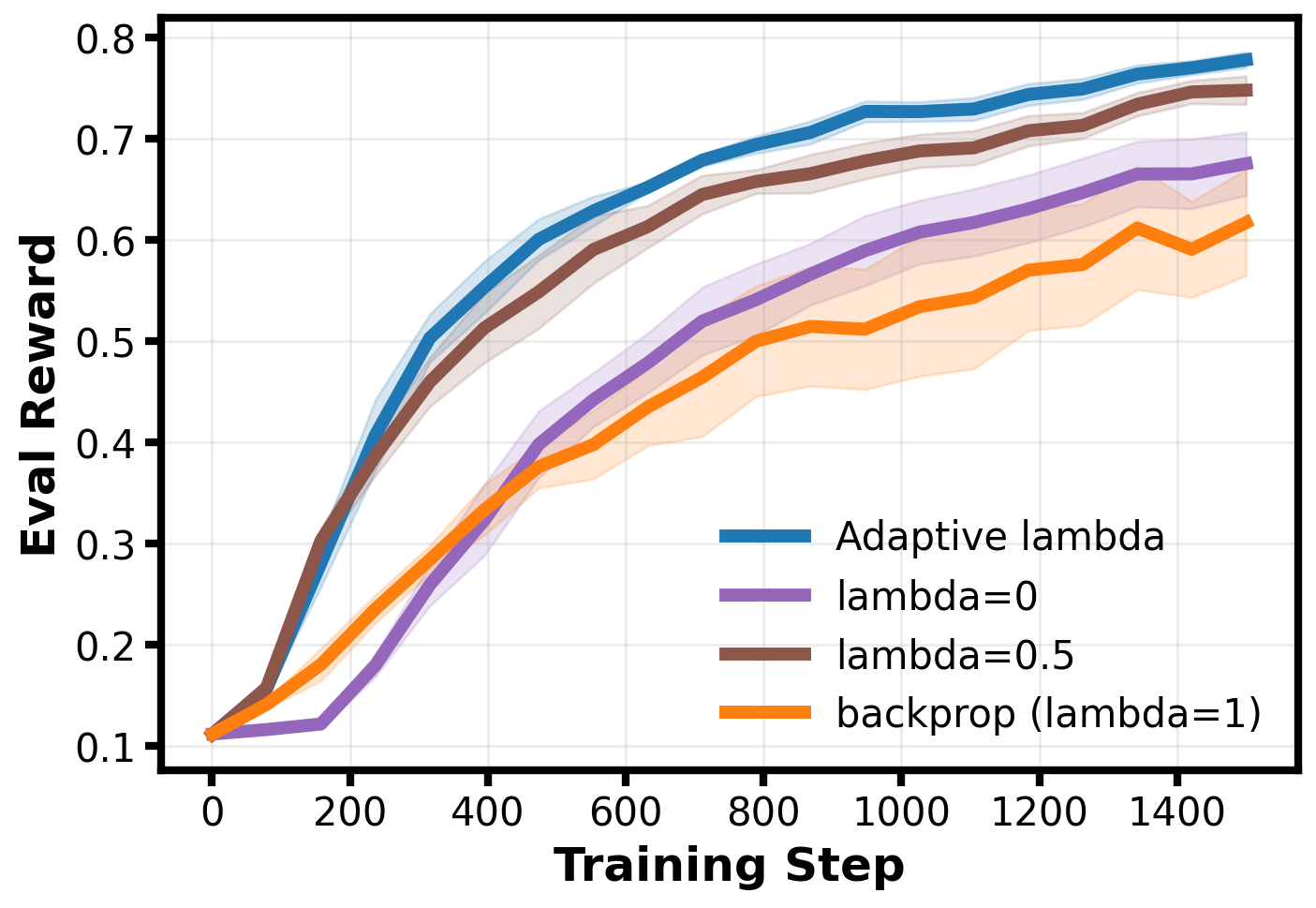}
        \caption{bandit mixing weight}
        \label{fig:main-mnist-theta}
    \end{subfigure}
    \caption{\textbf{Finite-partition and MNIST bandit results.} (a) In the expert simulation, synthetic gradients reduces gradient estimation MSE relative to backpropagation, where the reduction ratio fits the theoretical prediction. (b) In the MNIST contextual bandit, synthetic gradients requires fewer steps (numer of samples seen) to converge. (c) Mixing weights sensitivity in the MNIST contextual bandit experiment.}
    \label{fig:main-exp-summary}
\end{figure}

\paragraph{POPGym reinforcement learning.}
We investigate whether synthetic gradient propagation can help recurrent neural network learning in RL. We use two maze navigation environments from POPGym~\citep{moradpopgym}: \texttt{labyrinth escape} and \texttt{labyrinth explore}, where only adjacent tiles can be observed by the agent. SparseNet is made recurrent by applying the same sparse neuron transition through time: $128$ neurons each is connected to $6$ other neurons and optionally the external input, where the connectivity is uniformly randomly picked and fixed at initialization. During each step, the collection of all neuron outputs is converted into probability logits for sampling actions. Synthetic gradient propagation is applied during backpropagation through time (BPTT) to mix backpropagated gradient with synthetic gradients. We compare SparseNet (sgprop), SparseNet (backprop), GRU~\citep{cho2014learning} and LSTM~\citep{schmidhuber1997long}.  
Figure~\ref{fig:main-popgym-training} shows the four main training curves. All curves report means over $6$ seeds with standard error of the mean (SEM). Implementation details and ablations on learning rates and adaptive mixing weights are presented in Appendix~\ref{subsec:full-exp-popgym}.

\begin{figure}[t]
    \centering
    \begin{subfigure}[b]{0.33\textwidth}
        \centering
        \includegraphics[width=\linewidth]{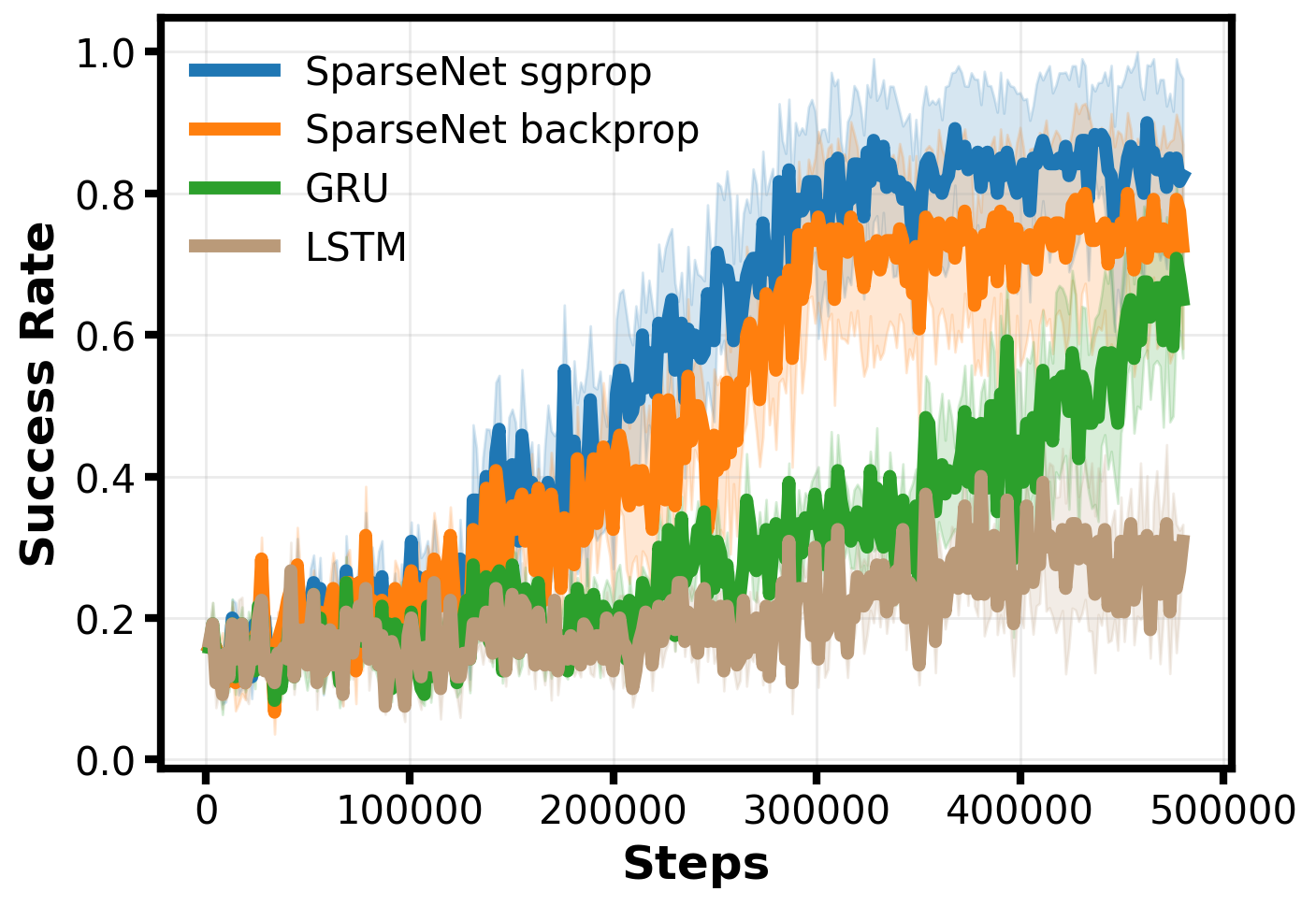}
        \caption{\texttt{escape}, 20 trials.}
        \label{fig:main-popgym-escape20}
    \end{subfigure}
    \begin{subfigure}[b]{0.33\textwidth}
        \centering
        \includegraphics[width=\linewidth]{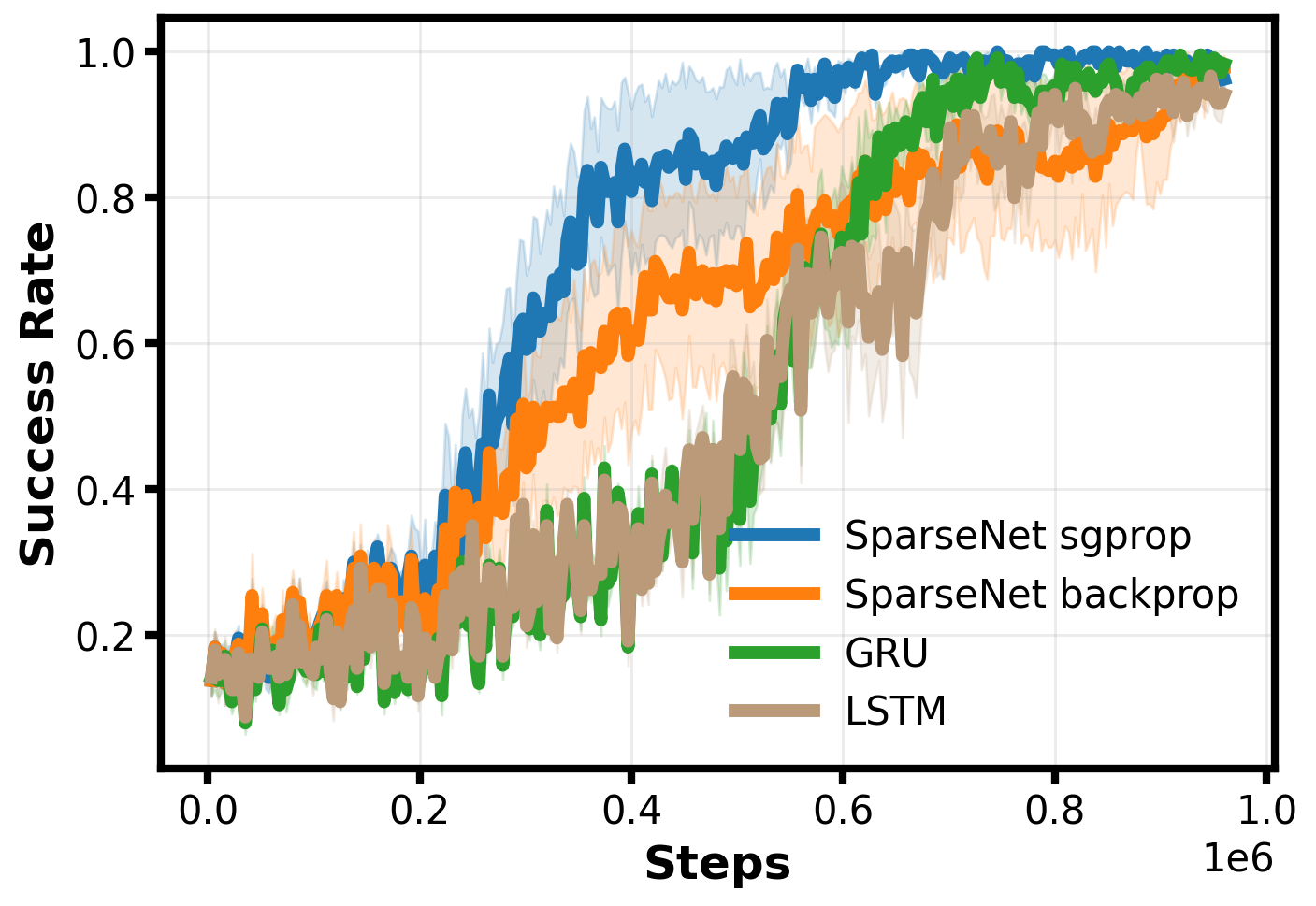}
        \caption{\texttt{escape}, 40 trials.}
        \label{fig:main-popgym-escape40}
    \end{subfigure}
    \begin{subfigure}[b]{0.33\textwidth}
        \centering
        \includegraphics[width=\linewidth]{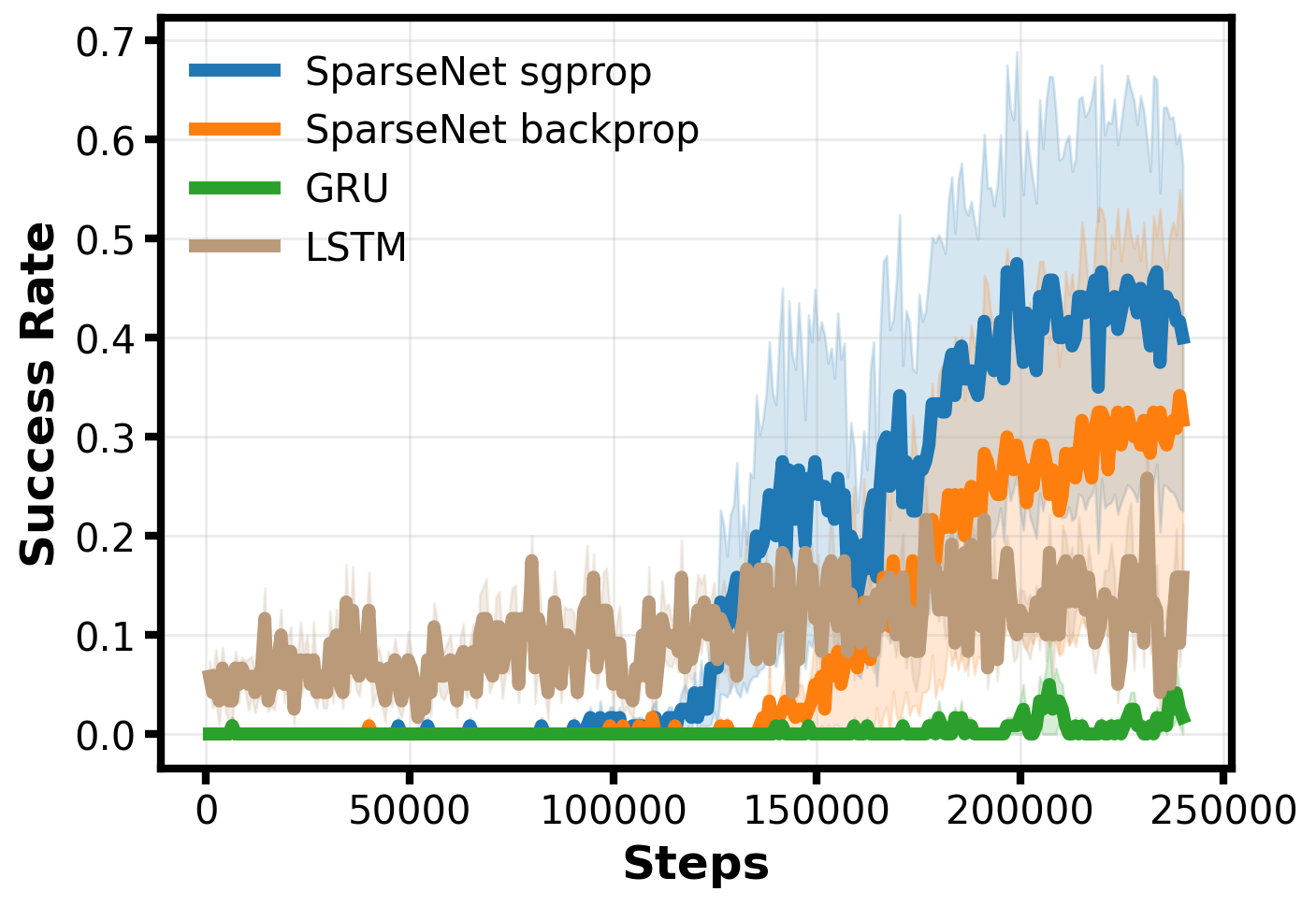}
        \caption{\texttt{explore}, 20 trials.}
        \label{fig:main-popgym-explore20}
    \end{subfigure}
    \begin{subfigure}[b]{0.33\textwidth}
        \centering
        \includegraphics[width=\linewidth]{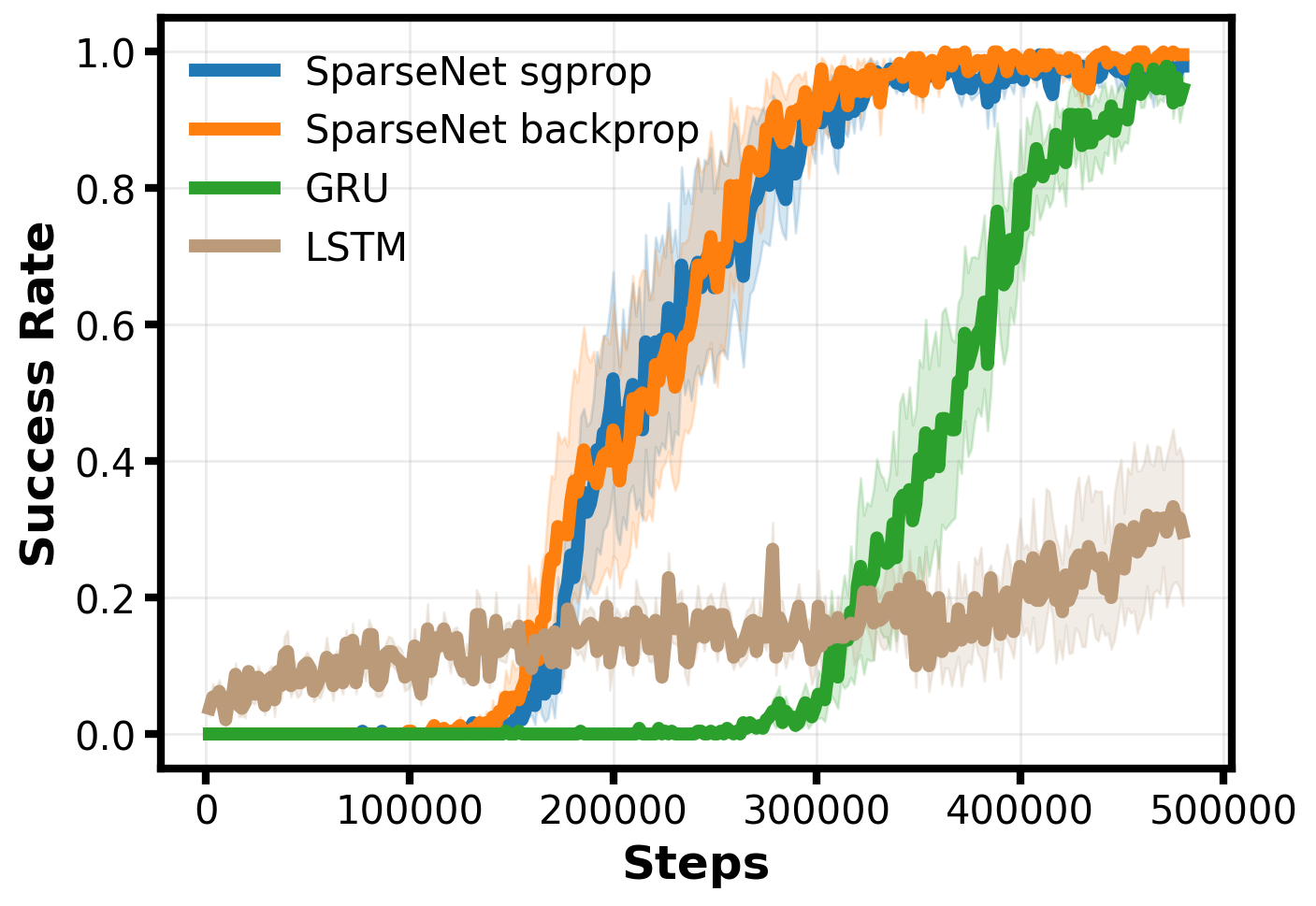}
        \caption{\texttt{explore}, 40 trials.}
        \label{fig:main-popgym-explore40}
    \end{subfigure}
    \caption{\textbf{POPGym  \texttt{labyrinth escape} and \texttt{labyrinth explore}.} Training dynamics for all methods with $20$ sampled trials per gradient update and $40$ sampled trials per gradient update.  Across these settings, synthetic gradients improve over backpropagation in few-sample regimes.}
    \label{fig:main-popgym-training}
\end{figure}

\section{Related Work}\label{sec:literature}

\textbf{Synthetic Gradients.} 
Synthetic gradients were introduced as decoupled neural interfaces to remove the locking constraints imposed by backpropagation, allowing layers to update using locally predicted gradients rather than waiting for a full backward pass~\citep{jaderberg2017decoupled}. Their convergence was later analyzed to  show that synthetic-gradient training can lead to different layer-wise representations~\citep{czarnecki2017understanding}. Subsequent variants improved the design and training of synthetic-gradient models~\citep{czarnecki2017sobolev,miyato2017synthetic,belilovsky2020decoupled}, including evaluations on larger-scale architectures such as convolutional neural networks~\citep{lecun1989backpropagation,lecun2015deep} and ResNets~\citep{he2016deep}. Synthetic gradients,  defined as a local learning rule,  has also been studied in settings where globally backpropagated gradients are not fully available, including biologically plausible learning~\citep{bellec2019biologically,kaiser2020synaptic,pemberton2021cortico}, online recurrent learning~\citep{bellec2019biologically,silver2021learning,pemberton2024bpmathbflambda}, transductive meta-learning~\citep{hu2020empirical}, and distributed or edge training~\citep{chen2019exploring,neven2020feed}. These works suggest application-specific benefits of synthetic gradients, as discussed below.

\textbf{Benefits of Synthetic Gradients versus Backpropagation.} One of the clearest benefits of synthetic gradients discovered by prior work is a reduced horizon dependence in recurrent learning~\citep{jaderberg2017decoupled,silver2021learning,pemberton2024bpmathbflambda}. In this setting, backpropagation through time (BPTT) is often applied only over a truncated window, while synthetic gradients can estimate feedback gradients beyond the truncation boundary. Another benefit arises when training is bottlenecked by synchronization or communication: synthetic gradients allow modules to update asynchronously and can improve throughput in distributed settings~\citep{chen2019exploring,neven2020feed}. However, when exact backpropagation is available, prior work generally finds that synthetic gradients, without additional mechanisms, do not match the performance of backpropagation. In this work, we investigate synthetic gradients from a fresh perspective: sample efficiency, and we compare against unconstrained backpropagation. As discussed in Section~\ref{sec:motivation}, the sample efficiency benefit is analogous to the advantage of temporal-difference methods over Monte Carlo methods in value function estimation.

\textbf{TD versus MC.}
Temporal-difference (TD) methods~\citep{sutton1988learning} have long been studied as alternatives to Monte Carlo (MC) value function estimation, replacing unbiased full-return targets with bootstrapped recursive targets. Despite empirical success, understanding when this bootstrapping improves sample efficiency remains a longstanding question and an active research area. Early work on phased TD established an explicit bias-variance tradeoff~\citep{kearns2000bias}. Related oracle-style analyses further showed that TD methods can exploit Markov structure more efficiently than MC in Markov reward processes~\citep{grunewalder2007optimality}. Subsequent work developed nonasymptotic guarantees for TD and its variants in tabular and linear function approximation settings~\citep{lazaric2010finite,bhandari2018finite,cheikhi2023statistical,li2024high,cheng2024surprising}, while lower bounds showed that plain TD is not always statistically optimal~\citep{khamaru2021temporal,li2023accelerated}. More recent analyses extend finite-sample TD theory to nonlinear function approximation~\citep{cai2019neural,sun2022finite,berthier2022non} and partially observable environments~\citep{cayci2024finite}. Overall, the literature suggests that TD is not uniformly superior to MC, but can be substantially more sample-efficient when it successfully exploits the Bellman structure.

\section{Discussion, Limitation, and Future Work}\label{sec:discuss}

In this work, as detailed in Section~\ref{sec:condition} and illustrated in Section~\ref{sec:example}, we identify four conditions under which synthetic gradients may show sample efficiency benefits. These four conditions on neural network learning can be interpreted as: $(A. 1)$ stochastic external feedback; $(A. 2)$ stochastic neuron computation; $(A. 3)$ neuron sparsely  connected to others; and $(B)$ neuron specialization and local sufficiency.  These conditions picture a learning regime that is arguably closer to biological neural systems than to conventional artificial neural networks. We therefore briefly discuss potential connections to biological learning, while emphasizing that this work does not attempt a rigorous neuroscience investigation.

Biological learning in real world is intrinsically stochastic, which may be modeled by noisy and delayed external reward signals~\citep{schultz1997neural, angela2005uncertainty, ohmae2015climbing}, providing a natural analog of Condition~$(A. 1)$. 
Internally, stochasticity is also a well-established feature of neural computation: cortical neurons exhibit trial-to-trial spiking and synaptic transmission is probabilistic~\citep{larkman1992presynaptic, softky1993highly, dayan2005theoretical, orban2016neural, wang2025theoretical}, consistent with Condition~$(A. 2)$. Moreover, as described by Condition~$(A. 3)$, biological networks are sparse and structured rather than densely connected~\citep{song2005highly, oh2014mesoscale, wang2025theoretical}. Finally, evidence suggests that cortical areas or  neuronal populations can exhibit a certain level of specialization and task decomposition~\citep{kanwisher1997fusiform, tasic2018shared, mante2013context,wang2025theoretical}. Recent evidence for vectorized feedback signals in cortical dendrites~\citep{francioni2026vectorized} further suggests that biological feedback may be richer than scalar reinforcement signals. Overall, this literature suggests an interesting yet preliminary correspondence between the regimes in which synthetic gradients can be advantageous and several features of biological neural systems. However, a more direct and rigorous investigation of this connection is left for future work.

Our analysis identifies conditions under which synthetic gradient propagation can improve sample efficiency, but it also has several limitations. First, Condition $(B)$ can be strong, and it is not comprehensively explored how well it is satisfied in real network systems. Analysis under further relaxations of this condition is an important direction for future theoretical work. Second, our theory focuses primarily on estimation MSE. A more complete sample-complexity analysis could impose explicit distributional assumptions and derive concentration bounds. Moreover, the adaptive mixing weight used in our experiment is rather rudimentary, and better algorithms for estimating these weights may have important practical benefits.  Finally, our experiments are intended as proof-of-concept demonstrations rather than a comprehensive empirical evaluation. Future work should test synthetic gradient propagation across broader architectures, tasks, and scales. Despite these limitations, our results suggest that synthetic gradients have a sample-efficiency role that is previously under-explored.

\bibliography{ref}
\bibliographystyle{plain}

\appendix
\clearpage

\begin{center}
{\huge\bfseries Appendix}
\end{center}
\vspace{2em}

\section{Proof of Lemma~\ref{lemma:decomposition}}

\textbf{Lemma~\ref{lemma:decomposition}.}
Let $\hsparagrad_v^\para = \frac{1}{n}\sum_{i\in[n]}\sparagrad_{v,i}^\para$
be the empirical mean of $n$ i.i.d.\ samples. Then
\begin{align}
    \smse_v^2
    =\frac{1}{n}\left( \scmse_v^2+\serror_v^2\right)
    +
    \sbias_v^2.
\end{align}
In particular, for an conditionally unbiased estimator, i.e., $\E\,[\sparagrad_v^\para\mid \invar_v]=\E\,[\paragrad_v^\para\mid \invar_v]$, we have
\begin{align}
    \smse_v^2
    = \frac{1}{n}\left( \scmse_v^2+\error_v^2\right).
\end{align}

\begin{proof}
We start from the standard bias-variance decomposition:
\begin{align}
    \smse_v^2
    &=
    \E\,\left\|
        \hsparagrad_v^\para - \E\,\paragrad_v^\para
    \right\|_2^2\\
    &=
    \E\,\left\|
        \hsparagrad_v^\para - \E\,\sparagrad_v^\para
    \right\|_2^2
    +
    \left\|
        \E\,\sparagrad_v^\para - \E\,\paragrad_v^\para
    \right\|_2^2\\
    &=
    \E\,\left\|
        \hsparagrad_v^\para - \E\,\sparagrad_v^\para
    \right\|_2^2
    +
    \sbias_v^2.
\end{align}
Since $\hsparagrad_v^\para$ is the empirical mean of $n$ i.i.d.\ samples,
\begin{align}
    \E\,\left\|
        \hsparagrad_v^\para - \E\,\sparagrad_v^\para
    \right\|_2^2
    &=
    \frac{1}{n}
    \E\,\left\|
        \sparagrad_v^\para - \E\,\sparagrad_v^\para
    \right\|_2^2.
\end{align}
Now add and subtract $\E[\sparagrad_v^\para\mid \invar_v]$, and apply the law of total variance:
\begin{align}
    \E\,\left\|
        \sparagrad_v^\para - \E\,\sparagrad_v^\para
    \right\|_2^2
    &=
    \E\,\left\|
        \sparagrad_v^\para - \E[\sparagrad_v^\para\mid \invar_v]
    \right\|_2^2
    +
    \E\,\left\|
        \E[\sparagrad_v^\para\mid \invar_v] - \E\,\sparagrad_v^\para
    \right\|_2^2\\
    &=
    \scmse_v^2+\serror_v^2.
\end{align}
Combining the two results shows
\begin{align}
    \smse_v^2
    =\frac{1}{n}\left( \scmse_v^2+\serror_v^2\right)
    +
    \sbias_v^2.
\end{align}
For an unbiased estimator, i.e., $\E\,[\sparagrad_v^\para\mid \invar_v]=\E\,[\paragrad_v^\para\mid \invar_v]$, we have $\sbias^2_v=0$ and $\serror^2_v=\error^2_v$. Thus, 
\begin{align}
    \smse_v^2
    = \frac{1}{n}\left( \scmse_v^2+\error_v^2\right).
\end{align}
\end{proof}

\section{Proof of Theorem~\ref{thm:bp-optimal}}

\textbf{Theorem \ref{thm:bp-optimal}.}
Under $\neg (A) \wedge (B)$, every gradient estimator $\hat{\sgrad}^\para$ formed by empirical means of i.i.d. conditionally unbiased gradient samples (i.e., $\E\,[\sparagrad_v^\para\mid \invar_v]=\E\,[\paragrad_v^\para\mid \invar_v]$) has its estimation MSE greater or equal to that of the empirical-mean backpropagation estimator: $\smse_v^2\geq \mse_v^2$ for any non-input nodes $v\in V$..

\begin{proof}
First, we may examine the $\neg (A)$ in more detail. $\neg (A)$ needs \textbf{all} the following conditions to be true.
\begin{itemize}
    \item[$\neg $(A.1)] \textbf{Deterministic terminal gradient feedback.} 
    The gradient feedback $\grad_{(L,i)}$ to any terminal nodes $(L, i)\in V_L$ is almost surely a deterministic function of the terminal output $\node_{(L, i)}$. 
    \item[$\neg $(A.2)] \textbf{Deterministic network computation.}  
    All nodes $v\in V$ perform deterministic computations. 

    \item[$\neg $(A.3)] \textbf{Fully connected network.}  
    All non-input nodes are fully connected to their previous layer.
\end{itemize}

Next, we prove the following statement for all non-input nodes $v$.
\begin{align}
    \neg (A.1) \wedge \neg (A.2) \wedge \neg (A.3) \wedge (B) \ \implies\  \cmse^2_v =0.
\end{align}
 We prove this theorem by induction.

Define $S(l)$ to be the following statement: \textit{the gradient feedback $\grad_{(l,i)}$ to any node $(l, i)\in V_l$ is almost surely a deterministic function of the node's local observation $\invar_{(l, i)}$.} 

Note that $S(L)$ automatically holds under $\neg (A.1)$. Now suppose $S(l)$ holds for a layer $l>1$, i.e., any node $v\in V_l$ satisfies
\begin{align}
    \grad_v = \E[\grad_v\mid \invar_v],\qquad a.s.. \label{eq:thm-bp-1}
\end{align}

Since the jacobian $\jacob^\para_v$ is a function of $\invar_v$, by \eqref{eq:thm-bp-1} we have 
\begin{align}
    \grad^\para_v=\jacob^\para_v \grad_v =\jacob^\para_v \E[\grad_v\mid \invar_v]= \E[\jacob^\para_v \grad_v\mid \invar_v]=\E[ \grad_v^\para\mid \invar_v],\qquad a.s..
\end{align}
This implies 
 \begin{align}
        \cmse^2_v=\E\!\left[\left\| \paragrad_{v}^\para-\E[\grad^\para_{v}\mid \invar_v ]\right\|_2^2\right]=0.
    \end{align}

    This proves the zero conditional variance for nodes $v\in V_l$ in this layer. Next, we prove that $S(l-1)$ holds. 

    For any node $v\in V_l$ in the layer $l$, by $\neg (A. 3)$ and $(B)$ we have 
\begin{align}
    \forall v^-\in V_{l-1}:\ \  \E[\grad_v\mid \invar_v]=\E[\grad_v\mid \invar_v, \invar_{v^-}]. \label{eq:thm-bp-0}
\end{align}

For $\forall v^-\in V_{l-1}$, we start by definition and then apply those conditions:
\begin{align}
    \grad_{v^-}&=\sum_{v\in \child(v^-)} \jacob_{v^-}^{\, v}\grad_v\\
    &=\sum_{v\in \child(v^-)} \jacob_{v^-}^{\, v}\E[\grad_v \mid \invar_v, \invar_{v^-} ]\qquad a.s. \tag{by (\ref{eq:thm-bp-1}) and (\ref{eq:thm-bp-0}) }\\
    &=\sum_{v\in \child(v^-)} \E[\jacob_{v^-}^{\, v}\grad_v \mid \invar_v , \invar_{v^-} ]\qquad a.s. \tag{$\jacob_{v^-}^{\, v}$ is a function of $\invar_v$}\\
    &=\sum_{v\in \child(v^-)} \E[\jacob_{v^-}^{\, v}\grad_v \mid \invar_{v^-} ]\qquad a.s. \tag{by $\neg (A.2)$ and $\neg (A.3)$, $\invar_v$ is a function of $\invar_{v^-}$}\\
    &=\E\left[ \sum_{v\in \child(v^-)} \jacob_{v^-}^{\, v}\grad_v \ \Big|\  \invar_{v^-}\right]\qquad a.s. \\
    &=\E\left[ \grad_{v^-} \mid  \invar_{v^-}\right]\qquad a.s. 
\end{align}
This proves $S(l-1)$. Therefore, by induction, backpropagation achieves $\cmse^2_v=0$ for all non-input nodes $v$. By Lemma~\ref{lemma:decomposition}, the estimator MSE from backpropagation is.
\begin{align}
    \mse_v^2= \frac{1}{n}\left( \cmse_v^2+\error_v^2\right)=\frac{1}{n}\error_v^2.
\end{align}

Therefore, by Lemma~\ref{lemma:decomposition}, any unbiased estimator cannot have smaller estimation MSE:
\begin{align}
    \smse_v^2= \frac{1}{n}\left( \scmse_v^2+\error_v^2\right)\geq \frac{1}{n}\error_v^2=\mse_v^2.
\end{align}

\end{proof}

\section{Proof of Theorem~\ref{thm:synth-better}}

\textbf{Theorem \ref{thm:synth-better}}
    Under $(A)\wedge (B)$ and Assumption~\ref{assume:1}, consider synthetic gradient defined in \eqref{eq:synth-def} and all mixing weights $\lambda=0$. Then, for all non-input  $v\in V$: $\smse^2_v\leq \mse^2_v$, and there exists some $v$: $\smse^2_v< \mse^2_v$.

\begin{proof}
    We begin by proving the following lemma. 

    \begin{lemma}\label{lemma-unbiased}
        Under (B), synthetic gradient propagation with \eqref{eq:synth-def} and all $\lambda=0$ leads to unbiased $\sgrad_v$. Formally, for all non-input nodes $v$ we have 
        \begin{align}
            \E[\sgrad_v\mid \invar_v]=\E[\grad_v\mid \invar_v].
        \end{align}
    \end{lemma}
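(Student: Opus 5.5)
We argue by downward induction on the layer index $l$, from $l=L$ down to $l=1$. The statement proved at layer $l$ is
\[
\E[\sgrad_v\mid \invar_v]=\E[\grad_v\mid \invar_v]\quad\text{for every } v\in V_l .
\]
The base case $l=L$ is immediate, because the terminal condition of Definition~\ref{def:sgprop} gives $\sgrad_v=\grad_v$.

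For the inductive step, fix $v\in V_{l-1}$. With all $\weight=0$ and the oracle \eqref{eq:synth-def},
\[
\sgrad_v=\sum_{v^+\in\child(v)}\jacob_v^{\,v^+}\,\E[\sgrad_{v^+}\mid\invar_{v^+}].
\]
By the inductive hypothesis each inner conditional expectation equals $\E[\grad_{v^+}\mid\invar_{v^+}]$. We then condition the whole expression on $\invar_v$.

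Since $v\in\parent(v^+)$, the variable $\node_v$ is a component of $\invar_{v^+}$. We will check that $\invar_v\in\bm Z_{\ean(v^+)}$. The set $\ean(v^+)$ contains $\ean(v)$ for the parent $v$, and $\ean(v)\ni v$ by the sibling rule, so $v\in\ean(v^+)$. If $v$ is an input node, its extended ancestor set is empty, but the inductive claim only concerns non-input nodes, so that case never arises.

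Condition $(B)$, applied to $v^+$ with $U=\{v\}$, then gives
\[
\E[\grad_{v^+}\mid\invar_{v^+}]=\E[\grad_{v^+}\mid\invar_{v^+},\invar_v].
\]
Because $\jacob_v^{\,v^+}$ is a function of $\invar_{v^+}$, we get
\[
\jacob_v^{\,v^+}\,\E[\grad_{v^+}\mid\invar_{v^+}]=\E\bigl[\jacob_v^{\,v^+}\grad_{v^+}\mid\invar_{v^+},\invar_v\bigr].
\]
Conditioning on $\invar_v$ and applying the tower property yields $\E[\jacob_v^{\,v^+}\grad_{v^+}\mid\invar_v]$. Summing over children and using the backpropagation recursion gives $\E[\sgrad_v\mid\invar_v]=\E[\grad_v\mid\invar_v]$, which closes the induction.

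The main obstacle is the membership $v\in\ean(v^+)$, that is, matching the recursive Definition~\ref{def:ances} to exactly the conditioning set needed. The tower step needs only $\invar_v$, but some care is required here. The recursion says $\ean(v^+)$ collects $\sib(v^+)$ together with $\ean(v^-)$ for parents $v^-$ of siblings. It must also contain the parent $v$ itself, at least when $v$ is non-input. That holds because $\ean(v)\supseteq\sib(v)\ni v$.

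One should also confirm that the sigma-algebra generated by $(\invar_{v^+},\invar_v)$ is what $(B)$ controls: $U_l$ ranges over subsets of $\ean(v^+)$, and $\{v\}$ is such a subset. Everything else is routine use of measurability of the Jacobians and the tower property.
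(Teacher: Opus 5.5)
Your proof is correct and follows the paper's argument step for step: downward induction from the terminal layer, then substituting the oracle synthetic gradient, applying the inductive hypothesis, invoking (B) to add $\invar_v$ to the conditioning, moving the Jacobian inside as a function of $\invar_{v^+}$, and closing with the tower property. Your explicit check that $v\in\ean(v^+)$ (via $v^+\in\sib(v^+)$ and $v\in\sib(v)$ for non-input $v$) is a detail the paper uses without comment in its ``by (B)'' step.
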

    \begin{proof}
        We prove by induction. First, for terminal nodes $\forall v\in V_L$, $\sgrad_v=\grad_v$ by definition, and thus  $\E[\sgrad_v\mid \invar_v]=\E[\grad_v\mid \invar_v]$. 
        
        Now, suppose for a layer $l>1$ we have $\forall v\in V_l: \E[\sgrad_v\mid \invar_v]=\E[\grad_v\mid \invar_v]$. Then, for any $v^- \in V_{l-1}$:
        \begin{align}
            \E[\sgrad_{v^-}\mid \invar_{v^-}]&=\E\left[\sum_{v\in \child(v^-)}\synth_{v^-}^{\, v} \ \Big| \ \invar_{v^-}\right]\\
            &=\E\left[\sum_{v\in \child(v^-)}\jacob_{v^-}^{\, v} \E[\sgrad_v\mid \invar_{v}]  \ \Big| \ \invar_{v^-}\right]\tag{by (\ref{eq:synth-def})}\\
            &=\E\left[\sum_{v\in \child(v^-)}\jacob_{v^-}^{\, v}\E[\grad_v\mid \invar_v] \ \Big| \ \invar_{v^-}\right]\tag{by induction}\\
            &=\E\left[\sum_{v\in \child(v^-)}\jacob_{v^-}^{\, v}\E[\grad_v \mid \invar_v , \invar_{v^-} ] \ \Big| \ \invar_{v^-}\right]\tag{by (B)}\\
            &=\E\left[\sum_{v\in \child(v^-)}\E[\jacob_{v^-}^{\, v}\grad_v \mid \invar_v , \invar_{v^-} ] \ \Big| \ \invar_{v^-}\right]\tag{$\jacob_{v^-}^{\, v}$ is a function of $\invar_v$}\\
            &=\E\left[\sum_{v\in \child(v^-)}\jacob_{v^-}^{\, v}\grad_v \ \Big| \ \invar_{v^-}\right] \tag{double expectation}\\
            &=\E\left[\grad_{v^-} \mid \invar_{v^-}\right] .
        \end{align}
        Therefore, by induction, $\E[\sgrad_v\mid \invar_v]=\E[\grad_v\mid \invar_v]$ holds for all non-input nodes $v$.
    \end{proof}

    To proceed, we first prove the following inequality that explicitly shows how uncertainty appears  under Condition $(A)$ and Assumption~\ref{assume:1}:
    \begin{align}
        &\E\|\grad^\para_v - \E[\grad^\para_v \mid \invar_v, \bm{Z}_{\child(v)} ]\|_2^2 \\
        &= \E\|\grad^\para_v - \E[\grad^\para_v \mid \invar_v, \bm{Z}_{\child(v)},\bm{\chi} ]+\E[\grad^\para_v \mid \invar_v, \bm{Z}_{\child(v)},\bm{\chi} ]-\E[\grad^\para_v \mid \invar_v, \bm{Z}_{\child(v)} ]\|_2^2\\ 
        &= \E\|\grad^\para_v - \E[\grad^\para_v \mid \invar_v, \bm{Z}_{\child(v)},\bm{\chi} ]\|^2_2+ \E\|\E[\grad^\para_v \mid \invar_v, \bm{Z}_{\child(v)},\bm{\chi} ]-\E[\grad^\para_v \mid \invar_v, \bm{Z}_{\child(v)} ]\|_2^2\\ 
        &\geq  \E\|\E[\grad^\para_v \mid \invar_v, \bm{Z}_{\child(v)},\bm{\chi} ]-\E[\grad^\para_v \mid \invar_v, \bm{Z}_{\child(v)} ]\|_2^2\\
        &>0 \tag{by Assumption~\ref{assume:1}}.
    \end{align}

    Therefore, under Condition $(A)$ and Assumption~\ref{assume:1}, there is some node $v\in V_l$ having 
    \begin{align}
        \E\|\grad^\para_v - \E[\grad^\para_v \mid \invar_v, \bm{Z}_{\child(v)} ]\|^2>0, \label{eq:lemma-var-1}
    \end{align}
    where 
    \begin{align}
        \bm{Z}_{\child(v)}:= \{\invar_{v^+}:\ v^+\in \child (v)\}.
    \end{align}

    With Lemma~\ref{lemma-unbiased} and \eqref{eq:lemma-var-1}, we are ready to prove the theorem. Note that Lemma~\ref{lemma-unbiased} implies
    \begin{align}
        \E[\sparagrad^\para_{v}\mid \invar_{v}]&=\E[\jacob_v^{\, \para} \sgrad_{v}\mid \invar_{v}]=\jacob_v^{\, \para} \E[\sgrad_{v}\mid \invar_{v}]\tag{$\jacob_v^{\, \para} $ is a function of $\invar_v$}\\
        &=\jacob_v^{\, \para} \E[\grad_{v}\mid \invar_{v}]\tag{by Lemma \ref{lemma-unbiased}}\\
        &=\E[\jacob_v^{\, \para} \grad_{v}\mid \invar_{v}]=\E[\paragrad^\para_{v}\mid \invar_{v}]. \label{eq:thm-synth-1-1}
    \end{align}

    We start from the conditional MSE. For any non-input node $v$:
    \begin{align}
        \scmse^2_v&=\E\!\left[\left\|  \sparagrad_{v}^\para-\E[\sgrad^\para_v\mid \invar_v]\right\|_2^2 \right]= \E\!\left[\left\|  \sparagrad_{v}^\para-\E[\grad^\para_v\mid \invar_v]\right\|_2^2 \right] \tag{by (\ref{eq:thm-synth-1-1})}.
    \end{align}
    
    Recall synthetic gradient propagation with \eqref{eq:synth-def} and that all $\lambda_v=0$, we have 
    \begin{align}
        \sparagrad_{v}^\para&=\jacob^\para_v \sgrad_v=\jacob^\para_v \sum_{v^+\in \child(v)}\synth_v^{\, v^+}=\jacob^\para_v \sum_{v^+\in \child(v)}\jacob_v^{\, v^+}\E[\sgrad_{v^+}\mid \invar_{v^+}]\\
        &= \jacob^\para_v\ \sum_{v^+\in \child(v)}\jacob_v^{\, v^+}\E\left[\grad_{v^+} \mid \invar_{v^+}\right]\tag{Lemma~\ref{lemma-unbiased}}\\
        &= \jacob^\para_v\ \sum_{v^+\in \child(v)}\jacob_v^{\, v^+}\E\left[\grad_{v^+} \mid \bm{Z}_{\child(v)},  \invar_{v} \right]\tag{by (B)}\\
        &= \E\left[\sum_{v^+\in \child(v)}\jacob^\para_v\ \jacob_v^{\, v^+}\grad_{v^+} \mid \bm{Z}_{\child(v)},  \invar_{v} \right]\tag{$\jacob^\para_v, \jacob_v^{\, v^+}$ are functions of $\bm{Z}_{\child(v)}, \invar_v$}\\
        &=  \E\left[\grad^\para_{v} \mid \bm{Z}_{\child(v)},\invar_v\right]\label{eq:thm-synth-1-2}
    \end{align}
    Because conditioning does not increase variance, we conclude that 
    \begin{align}
        \scmse_v^2 \leq \cmse^2_v.
    \end{align}
    By Lemma~\ref{lemma-unbiased}, $\sgrad^\para_v$ is unbiased, and then by Lemma~\ref{lemma:decomposition}:
    \begin{align}
        \smse_v^2=\frac{1}{n}(\scmse_v^2+\error^2_v) \leq \frac{1}{n}(\cmse_v^2+\error^2_v)=\mse^2_v. \label{eq:thm-synth-1-3}
    \end{align}

    It remains to show that some node $v$ achieves strict inequality.  It follows from \eqref{eq:thm-synth-1-3} that 
    \begin{align}
        \mse_v^2-\smse_v^2 = \frac{1}{n}\left( \cmse_v^2-\scmse_v^2 \right).
    \end{align}
    By \eqref{eq:lemma-var-1}, there is some non-input node $v$ such that 
    \begin{align}
         \E\|\grad^\para_v - \E[\grad^\para_v \mid \invar_v, \bm{Z}_{\child(v)} ]\|_2^2>0,
    \end{align}
    and thus 
    \begin{align}
        \cmse_v^2&=\E\|\grad^\para_v - \E[\grad^\para_v \mid \invar_v ]\|^2=\E\|\grad^\para_v - \E[\grad^\para_v \mid \invar_v ,\bm{Z}_{\child(v)}]+\E[\grad^\para_v \mid \invar_v,\bm{Z}_{\child(v)} ]-\E[\grad^\para_v \mid \invar_v ]\|_2^2\\
        &=\E\|\grad^\para_v - \E[\grad^\para_v \mid \invar_v ,\bm{Z}_{\child(v)}]\|^2_2 + \E \| \E[\grad^\para_v \mid \invar_v,\bm{Z}_{\child(v)} ]-\E[\grad^\para_v \mid \invar_v ]\|_2^2\\
        &> \E \| \E[\grad^\para_v \mid \invar_v,\bm{Z}_{\child(v)} ]-\E[\grad^\para_v \mid \invar_v ]\|_2^2 \tag{by (\ref{eq:lemma-var-1})}\\
        &= \E \| \sgrad_v^\para-\E[\sgrad^\para_v \mid \invar_v ]\|_2^2 \tag{by (\ref{eq:thm-synth-1-2}) and Lemma~\ref{lemma-unbiased}}\\
        &= \scmse_v^2.
    \end{align}
    Therefore, 
    \begin{align}
        \mse_v^2-\smse_v^2 = \frac{1}{n}\left( \cmse_v^2-\scmse_v^2 \right)>0.
    \end{align}

\end{proof}

\section{Proof of Lemma~\ref{lemma-biased}}

We first quantify how the two approximation errors induce bias in synthetic gradient propagation.

\begin{lemma}\label{lemma-biased}
    Under $(B')$, synthetic gradient propagation with \eqref{eq:approx-synth-def}.  Then, for any non-input nodes $v\in V_l$ and collection $\bm Z_{U_{l}}\subseteq \ean(v)$ of extended ancestors, assuming bounded jacobians $\|\jacob_v^{\, v^+}\|_2\leq J$, we have 
    \begin{align}
        \left\|\E[\sgrad_v\mid \invar_v, \bm{Z}_{U_{l}}]-\E[\grad_v\mid \invar_v, \bm{Z}_{U_{l}}]\right\|_2\leq c_v( \Berror+\epsilon'),\qquad a.s. \label{eq:lemma-biased-1}
    \end{align}
    where $\forall v \in V_L: c_v=0$ for terminal nodes, and a node $v\in V_l$ in layer $l<L$:
    \begin{align}
        c_{v}=J\sum_{v^+\in \child(v)}(1-\weight_{v}+c_{v^+}) \label{eq:cv-recursion}
    \end{align}
\end{lemma}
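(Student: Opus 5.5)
We argue by backward induction on the layer index, mirroring Lemma~\ref{lemma-unbiased}. The base case is immediate: for terminal nodes $v\in V_L$ we have $\sgrad_v=\grad_v$, so the left-hand side of \eqref{eq:lemma-biased-1} vanishes and $c_v=0$ suffices.

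For the inductive step, fix $v\in V_l$ with $l<L$ and an admissible collection $\bm Z_{U_l}$. Write $\mathcal G:=(\invar_v,\bm Z_{U_l})$ and use the common mixing weight $\weight_v$. By Definition~\ref{def:sgprop} and $\synth_v^{\,v^+}=\jacob_v^{\,v^+}\synth_{v^+}$, we have
\begin{align}
\sgrad_v-\grad_v=\sum_{v^+\in\child(v)}\Bigl(\weight_v\jacob_v^{\,v^+}(\sgrad_{v^+}-\grad_{v^+})+(1-\weight_v)\jacob_v^{\,v^+}(\synth_{v^+}-\grad_{v^+})\Bigr).
\end{align}
Each Jacobian $\jacob_v^{\,v^+}$ is a function of $\invar_{v^+}$, which is not in general $\mathcal G$-measurable. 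So we first condition on the enlarged $\sigma$-field $\mathcal G^+:=(\mathcal G,\invar_{v^+})$, pull the Jacobian out, and then take $\E[\cdot\mid\mathcal G]$ by the tower property. Since norms are convex, $\|\E[X\mid\mathcal G]\|\le\E[\|X\|\mid\mathcal G]$, so an almost-sure bound on $\|\E[\cdot\mid\mathcal G^+]\|$ transfers to $\mathcal G$. This reduces the problem to bounding two quantities almost surely:
\begin{align}
&\bigl\|\E[\sgrad_{v^+}-\grad_{v^+}\mid \invar_{v^+},\mathcal G]\bigr\|_2,\\
&\bigl\|\E[\synth_{v^+}-\grad_{v^+}\mid \invar_{v^+},\mathcal G]\bigr\|_2.
\end{align}

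For the first term we want to apply the induction hypothesis at $v^+$ with ancestor collection $\{v\}\cup U_l$. This requires $\{v\}\cup U_l\subseteq\ean(v^+)$. Since $v\in\parent(v^+)\subseteq\parent(\sib(v^+))$, Definition~\ref{def:ances} gives $\ean(v)\cup\{v\}$ inside $\ean(v^+)$, at least if $\sib(v)$ is counted. The hypothesis then yields the bound $c_{v^+}(\Berror+\epsilon')$.

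For the second term, $\synth_{v^+}$ is, up to $\epsilon'$, the quantity $\E[\sgrad_{v^+}\mid\invar_{v^+}]$, which is $\invar_{v^+}$-measurable. We therefore decompose
\begin{align}
\synth_{v^+}-\grad_{v^+}
&=\bigl(\synth_{v^+}-\E[\sgrad_{v^+}\mid\invar_{v^+}]\bigr)\\
&\quad+\bigl(\E[\sgrad_{v^+}\mid\invar_{v^+}]-\E[\grad_{v^+}\mid\invar_{v^+}]\bigr)\\
&\quad+\bigl(\E[\grad_{v^+}\mid\invar_{v^+}]-\grad_{v^+}\bigr).
\end{align}
The first piece is at most $\epsilon'$ by \eqref{eq:approx-synth-def}. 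The second is at most $c_{v^+}(\Berror+\epsilon')$ by the induction hypothesis with the empty ancestor set. The third piece, after conditioning on $(\invar_{v^+},\mathcal G)$, becomes $\E[\grad_{v^+}\mid\invar_{v^+}]-\E[\grad_{v^+}\mid\invar_{v^+},\bm Z_{\{v\}\cup U_l}]$, which is at most $\Berror$ by $(B')$. Altogether the second quantity is at most $\Berror+\epsilon'+c_{v^+}(\Berror+\epsilon')$.

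Combining both pieces with $\|\jacob_v^{\,v^+}\|_2\le J$ gives, per child,
\begin{align}
J\bigl(\weight_v c_{v^+}+(1-\weight_v)(1+c_{v^+})\bigr)(\Berror+\epsilon')\le J(1-\weight_v+c_{v^+})(\Berror+\epsilon').
\end{align}
Summing over children reproduces \eqref{eq:cv-recursion}.

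The main obstacle is the set-membership bookkeeping. We must check that conditioning at $v^+$ on $\invar_v$ together with $\bm Z_{U_l}$ (all elements of $\ean(v)$) stays within the collections allowed by $(B')$ and by the hypothesis at $v^+$, that is, $\{v\}\cup\ean(v)\subseteq\ean(v^+)$. Strictly, $v$ lies in layer $l$ and is a parent rather than a sibling, so the inclusion relies on interpreting $\bm Z_{\ean(\cdot)}$ as containing the parents' observables. Under that reading, $\invar_{v^+}$ already determines $\node_v$, so including $\invar_v$ is harmless. I would handle this by strengthening the induction statement to quantify over all ancestor collections reachable this way, exactly as the proof of Lemma~\ref{lemma-unbiased} implicitly does when it invokes $(B)$ with $\invar_{v^-}$.
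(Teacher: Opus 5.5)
Your proposal is correct and follows the paper's backward induction. At each child $v^+$ you condition on $\invar_{v^+}$ together with the enlarged collection $\{v\}\cup U_l$, and apply the hypothesis there for the backpropagated part. For the synthetic part you combine the $\epsilon'$ generator error, the hypothesis with the empty ancestor set, and one application of $(B')$. This gives exactly $c_v=J\sum_{v^+\in\child(v)}(1-\weight_v+c_{v^+})$.

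The set-membership worry in your last paragraph is unfounded, so no strengthening of the induction is needed. Since $\sib(\cdot)$ is self-included, $v\in\sib(v)\subseteq\ean(v)$. Also $v\in\parent(v^+)\subseteq\parent(\sib(v^+))$ gives $\ean(v)\subseteq\ean(v^+)$. Hence $\{v\}\cup U_l\subseteq\ean(v^+)$ holds literally by Definition~\ref{def:ances}.
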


\begin{proof}

    We begin by introducing a convenient notation. 
    \begin{definition}\label{def:O-eps-notation}
        We use $\gO(\epsilon)$ to denote a vector that satisfies $\|\gO(\epsilon)\|_2\leq \epsilon$ almost surely.  
    \end{definition}
    With this notation, we can easily keep track of the scale of errors, e.g., $\gO(\epsilon)+\gO(\epsilon)=\gO(2\epsilon)$; $\jacob \gO(\epsilon)=\gO(\|\jacob\|_2 \epsilon)$.

    Therefore, condition $(B')$ can be expressed as 
    \begin{align}
        \E[\grad_v\mid \invar_v] = \E[\grad_v\mid \invar_v, \bm{Z}_{U_{l}}] +\gO(\epsilon).
    \end{align}
    Similarly, \eqref{eq:approx-synth-def} can be expressed as
    \begin{align}
         \synth_{v^+} = \E[\sgrad_{v^+}\mid \invar_{v^+}] + \gO(\epsilon').
    \end{align}
    
    Next, we prove the lemma by induction. First, for terminal nodes $\forall v\in V_L$, $\sgrad_v=\grad_v$ by definition, and thus  $\E[\sgrad_v\mid \bm{Z}_{U_L}]=\E[\grad_v\mid \bm{Z}_{U_L}]$ for any subset $U_L\subseteq \ean(v)$.
    
    Now, suppose \eqref{eq:lemma-biased-1} holds for a layer $l>1$.  Then, consider any $v^- \in V_{l-1}$ and a subset $U_{l-1}\subseteq \ean(v^-)$. 
    Let $U_l:=\{v^-\}\cup U_{l-1}$. Note that for any $v\in \child(v^-)$, we have $U_l\subseteq \ean(v)$ by Definition~\ref{def:ances}. Therefore,
    \begin{align}
        \E[\sgrad_{v^-}\mid \invar_{v^-}, \bm{Z}_{U_{l-1}}]&= \E[\sgrad_{v^-}\mid \bm{Z}_{U_{l}}]\\
        &=\E\left[\sum_{v\in \child(v^-)} \weight_{v^-} \jacob_{v^-}^{\, v} \sgrad_v + (1-\weight_{v^-}) (\jacob_{v^-}^{\, v}(\E[\sgrad_v \mid \invar_v  ] + \gO(\epsilon'))) \ \Big| \ \bm{Z}_{U_{l}}\right]\\
        &=\weight_{v^-} \underbrace{\E\left[\sum_{v\in \child(v^-)} \jacob_{v^-}^{\, v} \sgrad_v \ \Big| \ \bm{Z}_{U_{l}} \right]}_{A} + (1-\weight_{v^-}) \underbrace{\E\left[\sum_{v\in \child(v^-)}\jacob_{v^-}^{\, v}\E[\sgrad_v \mid \invar_v  ] \ \Big| \ \bm{Z}_{U_{l}} \right] }_{B}\\
        &\qquad + (1-\weight_{v^-})\cdot \gO\left(|\child(v^-)|\cdot J\epsilon'\right)
    \end{align}
    Applying double expectation, let $\epsilon'':=\epsilon+\epsilon'$, we have 
    \begin{align}
        A&=\E\left[\sum_{v\in \child(v^-)} \E\left[  \jacob_{v^-}^{\, v} \sgrad_v \mid \invar_v, \bm{Z}_{U_{l}} \right] \ \Big| \ \bm{Z}_{U_{l}} \right]\\
        &=\E\left[\sum_{v\in \child(v^-)} \jacob_{v^-}^{\, v} \E\left[  \sgrad_v \mid \invar_v, \bm{Z}_{U_{l}} \right] \ \Big| \ \bm{Z}_{U_{l}} \right]\\
        &=\E\left[\sum_{v\in \child(v^-)} \jacob_{v^-}^{\, v} ( \E\left[  \grad_v \mid \invar_v, \bm{Z}_{U_{l}} \right] + \gO(c_v\epsilon'')) \ \Big| \ \bm{Z}_{U_{l}} \right]\tag{by induction}\\
        &=\E\left[\sum_{v\in \child(v^-)} \jacob_{v^-}^{\, v} \E\left[  \grad_v \mid \invar_v, \bm{Z}_{U_{l}} \right] \ \Big| \ \bm{Z}_{U_{l}} \right]+\gO\left(J\epsilon''\sum_{v\in \child(v^-)}c_v \right)\\
         &=\E\left[\sum_{v\in \child(v^-)} \E\left[  \jacob_{v^-}^{\, v} \grad_v \mid \invar_v, \bm{Z}_{U_{l}} \right] \ \Big| \ \bm{Z}_{U_{l}} \right]+\gO\left(J\epsilon''\sum_{v\in \child(v^-)}c_v \right)\\
        &=\E\left[\grad_{v^-} \mid \bm{Z}_{U_{l}} \right]+\gO\left(J\epsilon''\sum_{v\in \child(v^-)}c_v \right)\tag{double expectation}.
    \end{align}
    Similarly, for the other term:
    \begin{align}
        B&=\E\left[\sum_{v\in \child(v^-)}\jacob_{v^-}^{\, v}\E[\sgrad_v \mid \invar_v  ] \ \Big| \ \bm{Z}_{U_{l}} \right] \\
        &=\E\left[\sum_{v\in \child(v^-)}\jacob_{v^-}^{\, v}(\E[\grad_v \mid \invar_v  ]+ \gO(c_v\Berror'')) \ \Big| \ \bm{Z}_{U_{l}}  \right]\tag{by induction} \\
        &=\E\left[\sum_{v\in \child(v^-)}\jacob_{v^-}^{\, v}\E[\grad_v \mid \invar_v  ]\ \Big| \ \bm{Z}_{U_{l}}  \right] +\gO\left(J\epsilon''\sum_{v\in \child(v^-)}c_v \right)\\
        &=\E\left[\sum_{v\in \child(v^-)}\jacob_{v^-}^{\, v}(\E[\grad_v \mid \invar_v, \bm{Z}_{U_{l}}  ] + \gO(\epsilon))\ \Big| \ \bm{Z}_{U_{l}}  \right] +\gO\left(J\epsilon''\sum_{v\in \child(v^-)}c_v \right)\tag{by (B')}\\
        &=\E\left[\sum_{v\in \child(v^-)}\jacob_{v^-}^{\, v}\E[\grad_v \mid \invar_v, \bm{Z}_{U_{l}}  ] \ \Big| \ \bm{Z}_{U_{l}}  \right] +\gO\left(J\epsilon\cdot |\child(v^-)|+J\epsilon''\sum_{v\in \child(v^-)}c_v \right)\\
        &=\E\left[\sum_{v\in \child(v^-)}\E[\jacob_{v^-}^{\, v}\grad_v \mid\invar_v, \bm{Z}_{U_{l}}  ] \ \Big| \ \bm{Z}_{U_{l}}  \right] +\gO\left(J\epsilon\cdot |\child(v^-)|+J\epsilon''\sum_{v\in \child(v^-)}c_v \right)\\
        &=\sum_{v\in \child(v^-)}\E[\jacob_{v^-}^{\, v}\grad_v \mid\bm{Z}_{U_{l}}  ]  +\gO\left(J\epsilon\cdot |\child(v^-)|+J\epsilon''\sum_{v\in \child(v^-)}c_v \right)\tag{double expectation}\\
        &=\E\left[\grad_{v^-} \ \Big| \ \bm{Z}_{U_{l}}  \right] +\gO\left(J\epsilon\cdot |\child(v^-)|+J\epsilon''\sum_{v\in \child(v^-)}c_v \right).
    \end{align}
    Therefore, combining the above, we have 
    \begin{align}
        \E[\sgrad_{v^-}\mid \invar_{v^-}, \bm{Z}_{U_{l-1}}]&=\weight_{v^-}A+(1-\weight_{v^-})B+(1-\weight_{v^-})\cdot \gO\left(|\child(v^-)|\cdot J\epsilon'\right)\\
        &=\E[\grad_{v^-}\mid \invar_{v^-}, \bm{Z}_{U_{l-1}}]+\gO\left((1-\weight_{v^-})J(\Berror+\epsilon')\cdot |\child(v^-)| + J\Berror''\sum_{v\in \child(v^-)}c_v\right)\\
        &=\E[\grad_{v^-}\mid \invar_{v^-}, \bm{Z}_{U_{l-1}}]+\gO\left(c_{v^-}(\Berror+\epsilon')\right),
    \end{align}
    where 
    \begin{align}
        c_{v^-}=J\sum_{v\in \child(v^-)}(1-\weight_{v^-}+c_v).
    \end{align}
    By induction, \eqref{eq:lemma-biased-1} holds for every node.
\end{proof}

\section{Proof of Theorem~\ref{thm:general-benefits}}

\textbf{Theorem~\ref{thm:general-benefits}}
   Under condition (B') and \eqref{eq:approx-synth-def}, assume bounded jacobians $\forall v, v^+:\ \max\{\|\jacob_{v}^{\, v^+}\|_2, \|\jacob_{v}^{\para}\|_2\}\leq J$. 
    A sufficient condition for $\weight^\star_v<1$ is:
   \begin{align}
        \frac{\tr\Cov(\sss_v)-\tr\Cov(\qqq_v)}{n}>\left(J^2(\Berror+\epsilon')\sum_{v^+\in \child(v)} (1+c_{v^+})\right)^2
    \end{align}
    where $\forall u\in V: c_u$ is defined recursively via 
    \begin{align}
        c_{u}=J\sum_{u^+\in \child(u)}(1-\weight_{u}+c_{u^+}),\quad \text{and for terminal nodes } \ \forall u\in V_L: c_u=0. 
    \end{align}

\begin{proof}
    Let $\epsilon'':=\epsilon+\epsilon'$ and we use Definition~\ref{def:O-eps-notation} for a bounded vector such as $\gO(\epsilon'')$. We start by applying Lemma~\ref{lemma-biased} to the following quantity.
    \begin{align}
        \E[\qqq_v]&=\E\left[\jacob^\para_v\sum_{v^+\in \child(v)}\jacob_{v}^{\, v^+}(\E[\sgrad_{v^+} \mid \invar_{v^+} ]+\gO(\epsilon') ) \right]\\
        &=\E\left[\jacob^\para_v\sum_{v^+\in \child(v)}\jacob_{v}^{\, v^+}(\E[\grad_{v^+} \mid \invar_{v^+} ] +\gO(\epsilon'+c_{v^+}\Berror'') )\right]\tag{by Lemma~\ref{lemma-biased}}\\
        &=\E\left[\jacob^\para_v\sum_{v^+\in \child(v)}\jacob_{v}^{\, v^+}(\E[\grad_{v^+} \mid \invar_{v^+},\invar_v ] +\gO(\epsilon'+\Berror+c_{v^+}\Berror'') )\right]\tag{by (B')}\\
        &=\E\left[\sum_{v^+\in \child(v)}\E[\jacob^\para_v\jacob_{v}^{\, v^+}\grad_{v^+} \mid \invar_{v^+},\invar_v ]\right]+\gO\left(J^2\sum_{v^+\in \child(v)}\left( (c_{v^+}+1)\epsilon''\right) \right)\\
        &=\E\left[\grad^\para_v\right]+\gO\left(J^2\epsilon''\sum_{v^+\in \child(v)} (1+c_{v^+})\right) .
    \end{align}
    It is standard to show from \eqref{eq:lambda-star} that $\weight^\star < 1$ if
    \begin{align}
        \E\!\left[\left\| \hat\qqq_v-\E[\grad_{v}^\para]\right\|_2^2\right]<\E\!\left[\left\| \hat\sss_v-\E[\grad_{v}^\para]\right\|_2^2\right]. \label{eq:thm-general-1}
    \end{align}
    The LHS can be decomposed as follows. 
    \begin{align}
        \E\!\left[\left\| \hat\qqq_v-\E[\grad_{v}^\para]\right\|_2^2\right]&=\E\!\left[\left\| \hat\qqq_v-\E[\qqq_v]+\E[\qqq_v]-\E[\grad_{v}^\para]\right\|_2^2\right]\\
        &=\frac{1}{n}\E\!\left[\left\| \qqq_v-\E[\qqq_v]\right\|_2^2\right]+\left\|\E[\qqq_v]-\E[\grad_{v}^\para]\right\|_2^2\\
        &\leq \frac{1}{n}\E\!\left[\left\| \qqq_v-\E[\qqq_v]\right\|_2^2\right]+\left(J^2\epsilon''\sum_{v^+\in \child(v)} (1+c_{v^+})\right)^2 \\ 
        &=\frac{1}{n}\tr\Cov(\qqq_v)+\left(J^2\epsilon''\sum_{v^+\in \child(v)} (1+c_{v^+})\right)^2.
    \end{align}
    Similarly, the RHS shows 
    \begin{align}
        \E\!\left[\left\| \hat\sss_v-\E[\grad_{v}^\para]\right\|_2^2\right]&=\E\!\left[\left\| \hat\sss_v-\E[\sss_v]+\E[\sss_v]-\E[\grad_{v}^\para]\right\|_2^2\right]\\
        &=\frac{1}{n}\E\!\left[\left\| \sss_v-\E[\sss_v]\right\|_2^2\right]+\left\|\E[\sss_v]-\E[\grad_{v}^\para]\right\|_2^2\\
        &\geq\frac{1}{n}\E\!\left[\left\| \sss_v-\E[\sss_v]\right\|_2^2\right]\\
        &=\frac{1}{n}\tr\Cov(\sss_v).
    \end{align}
    Therefore, a sufficient condition for \eqref{eq:thm-general-1} is 
    \begin{align}
        \frac{1}{n}\tr\Cov(\qqq_v)+\left(J^2\epsilon''\sum_{v^+\in \child(v)} (1+c_{v^+})\right)^2<\frac{1}{n}\tr\Cov(\sss_v). 
    \end{align}
\end{proof}

\section{Detailed Setup of the Example of Finite-Partition Expert Network}\label{sec:detail-example}

We consider a two-hidden-layer architecture in which the first hidden layer has a single node, and the second hidden layer has nodes $V_2=\{(2, j)\}_{j\in [m_2]}$. In the rest of the section, we denote $j:=(2, j)$. We focus on gradient estimation on the node $(1,1)\in V_1$ in the first layer, and thus in this section \emph{we omit the subscript when referring to this target node}. For example, $\grad^\para$ refers to the backpropagated gradient on this target node $(1,1)\in V_1$.

The first layer node computes a $d$-dim feature representation $\node=f(\invar; \para)\in \sR^d$, where $\invar$ is the input sampled from an external source  (i.e., from layer $l=0$).  The second layer contains $m_2$ expert nodes, where each expert $j\in[m_2]$ specializes in a $k$-dimensional subspace of the feature vector $\node$. Formally, each expert $j$ effectively acts on $\bm u_{j}=P_{j}\node$, where $P_{j}:\sR^d\to \sR^k$ is a projection. Each expert outputs 
$y_j=f_{j}(\bm u_j)\in\sR$. Then, the final node collects the vector of expert outputs $\bm y=(y_1,\ldots,y_{m_2})\in\sR^{m_2}$.

Each expert node $j$ solves an independent sub-task, such that the global loss function decomposes across the experts. The global loss function and its gradient to expert node $j$ are respectively 
\begin{align}
    \gL:=
    \sum_{j=1}^{m_2}\ell_j(\bm{u}_j,y_j),\qquad\text{and}\ \  \grad_j
    :=
    \frac{\partial \gL}{\partial y_j}
    =
    \frac{\partial \ell_j(\bm u_j,y_j)}{\partial y_j}.
\end{align}

We compare ordinary backpropagation and synthetic-gradient propagation under three sources of stochastic expert feedback, each inspired by a sub-condition in condition $(A)$. 
\begin{itemize}[leftmargin=1em]
    \item \textbf{Case 1: noisy terminal feedback.}
    The external feedback carries i.i.d. noise:
$
    \hat\grad_j=\grad_j+\xi_j,
    \qquad
    \E[\xi_j]=0,\qquad \E[\xi^2_j]=\sigma^2.$
    \item \textbf{Case 2:  stochastic expert activation.}  
    Each expert is activated with probability $p$ independently every time. The gradient feedback is thus    $\hat\grad_j=\frac{\eta_j}{p}\grad_j, \qquad  \eta_j\sim\mathrm{Bernoulli}(p)$.
\item \textbf{Case 3: partial observation.}  In this case, the terminal node sees some other variables $\invar'_j$, independent across $j$, such that the global loss term $\gL'=
    \sum_{j=1}^{m_2}\ell_j(\bm{u}_j,y_j, \invar'_j)$, and its gradient to expert $j$ becomes
    $\hat\grad_j
    =\frac{\partial \gL'}{\partial y_j}=
    \frac{\partial \ell_j(\bm u_j,y_j, \invar'_j)}{\partial y_j}$.
\end{itemize}

The synthetic gradient from expert $j$ to the target node $(1,1)$ needs to be estimated: $\hat\synth^j(\bm{u}_j) \approx \jacob^j\E[\grad_j\mid \invar_j]$.
To characterize sample efficiency when the synthetic gradient estimator is itself estimated from data, we restrict attention to discrete inputs. This yields a tabular setting, analogous to those commonly used in sample-complexity analyses of TD and MC estimators~\citep{cheikhi2023statistical, cheng2024surprising}. We analyze this setting next.

Let $\invar\in \gX$ denote the input variable to our target node $(1,1)\in V_1$. Let the input space $\gX$ be partitioned into $R=2^d$ sets $\gX=\bigsqcup_{r=1}^R \gX_r$.
We write $\invar_r$ for the representative input in partition $r$, and suppose the input variable $\invar$ is sampled from a uniform distribution over these representative inputs. The target node's function is piecewise parameterized:
\begin{align}
    f(\invar; \para) = \sum_{r=1}^R
    \bm 1\{\invar \in\gX_r\}f_{\para_r}(\invar), \quad \text{where}\ \ \para=(\para_1,\ldots,\para_R). %
\end{align}
Let $\node_r=f_{\para_r}(\invar_r)\in\{\pm1\}^d$.
We assume that the map $r\mapsto \node_r$ is one-to-one and onto
$\{\pm1\}^d$. Therefore, drawing $r$ uniformly from $[R]$ is equivalent
to drawing $\node_r$ uniformly from $\{\pm1\}^d$.
Let each expert projection $P_j$ selects exactly $k$ coordinates of $\node_r$.
Thus $\bm{u}_{j,r}:=P_j\node_r\in\{\pm1\}^k$.
For any $\bm{u}\in\{\pm1\}^k$, let the equivalent class under this projection be 
\begin{align}
    \mathcal F_j(r) := \{s\in[R]:P_j\node_s=P_j\node_r\}, \qquad \text{and thus }\ \ \forall r\in [R]:\ |\mathcal F_j(r)|=2^{d-k}.
\end{align}

For each expert $j$ and partition $r$, its has a unique output $y_{j, r}=f_j(\bm u_j)$. let 
\begin{align}
    \grad_{j,r}
    :=
    \frac{\partial \ell_j(\bm u_{j,r},y_{j,r})}{\partial y_j}
\end{align}
denote the exact scalar gradient received by expert $j$. Here, we have fixed all the random variables and thus $\grad_{j,r}$ becomes a constant. 

Since the parameter $\para=(\para_1,\ldots,\para_R)$ is also partitioned, by definition of the piecewise function (\eqref{eq:def:piecewise}), when $\invar_r$ is given, only the parameter $\para^r$ receives gradient feedback:
\begin{align}
    \grad^{\para_r} =  \sum_{j=1}^{m_2}\jacob_r^{\para^r}\jacob^j_r\grad_{j,r}.
\end{align}
Similarly, here $\grad^{\para_r} $ becomes a constant vector.

Because the parameter blocks $\para_r$ are specific to the input $\invar_r$, and each $\invar_r$ is sampled with equal probability, the population mean gradient can be written as
\begin{align}
    \E\, \grad^\para =
    \frac  1 R (\grad^{\para_1},\ldots,\grad^{\para_R}).
\end{align}
Then, for any gradient estimator $\hat\sgrad^\para=\frac{1}{R}(\hat\sgrad^{\para_1},\ldots,\hat\sgrad^{\para_R})$, the estimation MSE decomposes into the sum of per-partition MSE:
\begin{align}
    \smse^2= \E\, \left\|
        \hat\sgrad^\para - \E\,\grad^\para \right\|_2^2
    = \frac1{R^2} \sum_{r=1}^R
    \E\, \|\hat\sgrad^{\para_r}-\grad^{\para_r}\|^2_2=
    \frac1{R^2}
    \sum_{r=1}^R \smse_r^2, \label{eq:mse-to-mser}
\end{align}
where we denote the per-partition MSE by
\begin{align}
    \smse_r^2:=\E\, \|\hat\sgrad^{\para_r}-\grad^{\para_r}\|^2_2.
\end{align}

For every partition $r\in [R]$, we  draw $n$ i.i.d. samples. This is analogue to a phased sampling model used in TD learning analyses (e.g., see~\citep{kearns2000bias}) to isolate the statistical mechanism estimators in a single frozen-parameter phase.  Thus, the samples can be labeled by $(r, i)\in [R]\times [n]$. Define the backpropagation estimator to a parameter block $\para^r$ as
\begin{align}
    \hat\grad^{\para_r}
    :=
    \sum_{j=1}^{m_2}
    \jacob_r^{\para^r}\jacob^j_r
    \left(
        \frac1n\sum_{i=1}^n \hat\grad_{j,r,i}
    \right).
\end{align}
Note that expert $j$'s local observation is $\invar_j=P_j \node=\bm{u}_j$. As the synthetic-gradient generator  approximates $\jacob^j\E[\hat\grad_j\mid \bm{u}_j]$, let the conditional empirical mean as the synthetic gradient generator:
\begin{align}
    \hat \synth^j(\bm{u}_j)
    :=\jacob^j\left( \frac{1}{n\sum_{r} \bm{1}(P_j\node_{r}=\bm{u}_j )} \sum_{r, i} \bm{1}(P_j\node_{r}=\bm{u}_j )\ \hat\grad_{j,r,i} \right)
    =\jacob^j\frac1{n|\mathcal F_j(r)|}
    \sum_{s\in\mathcal F_j(r)}
    \sum_{i=1}^n
    \hat\grad_{j,s,i}.
\end{align}
Thus,  the synthetic-gradient estimator to a parameter block $\para^r$ is
\begin{align}
    \hat\sgrad^{\para_r}
    :=
    \sum_{j=1}^{m_2}
    \jacob_r^{\para^r} \hat\synth_j(\bm{u}_{j,r}).
\end{align}

\begin{theorem}
\label{thm:per-state-mse}
Consider the settings described in this section. Assume that the norm of the gradient feedback to each expert node has a constant norm across experts, i.e., $\forall j, j': |\grad_{j,r}|=|\grad_{j',r}|$, and for Case 3: $\forall j, j': \Var(\hat\grad_{j,r}\mid r)=\Var(\hat\grad_{j',r}\mid r)$ has a constant variance across experts. Then, for all three cases, we have
\begin{align}
    \mse^2=2^{d-k}\, \smse^2.
\end{align}
\end{theorem}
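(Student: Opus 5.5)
The plan is to reduce everything to a single partition using the decomposition \eqref{eq:mse-to-mser}. Both $\mse^2$ and $\smse^2$ equal $\frac{1}{R^2}\sum_r(\cdot)_r^2$, so it suffices to show $\mse_r^2=2^{d-k}\smse_r^2$ for every $r\in[R]$. For each fixed $r$ I will split the per-partition MSE into a squared bias plus a variance, as in Lemma~\ref{lemma:decomposition}. I will then show that both estimators have zero bias and that their variances differ by exactly the factor $|\mathcal F_j(r)|=2^{d-k}$.

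\textbf{Step 1: everything at expert $j$ depends on $r$ only through $\bm u_{j,r}$.} Since $y_{j,r}=f_j(\bm u_{j,r})$ and $\ell_j$ depends only on $(\bm u_j,y_j)$ (plus the exogenous $\invar'_j$ in Case 3), the exact feedback $\grad_{j,r}$ is a function of $\bm u_{j,r}$. The same holds for the transposed Jacobian $\jacob^j_r=P_j^\top\nabla f_j(\bm u_{j,r})$. Hence for every $s\in\mathcal F_j(r)$ we have $\grad_{j,s}=\grad_{j,r}$ and $\jacob^j_s=\jacob^j_r$. The conditional law of the noisy feedback $\hat\grad_{j,s,i}$ given $s$ also depends only on $\bm u_{j,s}$:
\begin{itemize}
\item in Case 1 it is $\grad_{j,r}$ plus noise of variance $\sigma^2$;
\item in Case 2 it is $\frac{\eta}{p}\grad_{j,r}$, with variance $\grad_{j,r}^2(1-p)/p$;
\item in Case 3 it is determined by the law of $\invar'_j$ together with $\bm u_{j,r}$.
\end{itemize}
So in all three cases $\E[\hat\grad_{j,s,i}]=\grad_{j,r}$ and $\Var(\hat\grad_{j,s,i})=:\sigma_{j,r}^2$ are constant across $s\in\mathcal F_j(r)$.

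\textbf{Step 2: unbiasedness.} The backpropagation estimator satisfies $\E\hat\grad^{\para_r}=\sum_j\jacob_r^{\para_r}\jacob^j_r\grad_{j,r}=\grad^{\para_r}$. The pooled synthetic generator $\hat\synth^j(\bm u_{j,r})$ averages $n|\mathcal F_j(r)|$ samples that all have mean $\grad_{j,r}$, and its Jacobian $\jacob^j_r$ is the same for the whole pool by Step 1. Therefore $\E\hat\sgrad^{\para_r}=\grad^{\para_r}$ as well, and both per-partition MSEs reduce to the trace of the covariance.

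\textbf{Step 3: variances.} I will write each estimator as $\sum_j \bm a_{j,r}\bar e_j$, where $\bm a_{j,r}:=\jacob_r^{\para_r}\jacob^j_r$ and $\bar e_j$ is a centered sample average of feedback noise. The cross terms between distinct experts $j\neq j'$ vanish, because the noise sources are independent across experts: independent $\xi_j$, independent $\eta_j$, and $\invar'_j$ independent across $j$. The noise is also independent across samples $(s,i)$. Consequently
\[
\mse_r^2=\frac1n\sum_j\|\bm a_{j,r}\|_2^2\sigma_{j,r}^2,\qquad
\smse_r^2=\frac{1}{n|\mathcal F_j(r)|}\sum_j\|\bm a_{j,r}\|_2^2\sigma_{j,r}^2 .
\]
Since $|\mathcal F_j(r)|=2^{d-k}$ for every $j$, this gives $\mse_r^2=2^{d-k}\smse_r^2$. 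Summing over $r$ via \eqref{eq:mse-to-mser} yields the theorem.

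\textbf{Main obstacle.} The delicate step is justifying the independence and constancy claims in Case 3, namely that $\Var(\hat\grad_{j,s}\mid s)$ is the same for all $s$ in a pool and that the pooled samples from different $s$ are independent. I will handle this by stating explicitly that each sample $(s,i)$ draws a fresh $\invar'_j$, independent across $j$ and across samples, with the same law for every $s$. The stated assumptions on constant norms and variances across experts are then used only where the per-expert weights must be factored out uniformly. In my derivation the ratio already holds term by term, so I expect those assumptions to be at most a convenience.
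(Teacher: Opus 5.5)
Your proposal is correct and follows essentially the same route as the paper. The paper bundles your Steps 2--3 into a common lemma for additive, mean-zero feedback noise that is independent across $(j,s,i)$ and has constant variance within each pool $\mathcal{F}_j(r)$, then checks each case against that lemma. Your two side remarks are also accurate and somewhat more careful than the paper: the ratio $2^{d-k}$ already holds term by term, so the across-expert constancy assumptions are not needed, and Case 3 genuinely requires that $\invar'_j$ have the same law for every $s$ (the paper leaves this implicit).
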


\section{Proof of Theorem~\ref{thm:per-state-mse}}

We first prove the following lemma. 

\begin{lemma}
\label{lem:common-per-state-mse}
Fix a state $r$. Consider the settings and objects described in Theorem~\ref{thm:per-state-mse}. Suppose
\begin{align}
    \hat\grad_{j,s,i}
    =
    \grad_{j,s}+\varepsilon_{j,s,i},
\end{align}
where the variables $\varepsilon_{j,s,i}$ are independent across $j,s,i$, have mean zero, and have variance
\begin{align}
    v_{j,s}:=\E[\varepsilon_{j,s,i}^2].
\end{align}
We have
\begin{align}
    \mse_r^2= \frac1n
    \sum_{j=1}^{m_2}\|\jacob_r^{\para^r}\jacob^j_r\|^2v_{j,r},\qquad \text{and}\ \  
    \smse_r^2=\frac1{n2^{d-k}}
    \sum_{j=1}^{m_2}\|\jacob_r^{\para^r}\jacob^j_r\|^2v_{j,r}.
\end{align}
In particular, when $v_{j,r}$ is a constant across expert $j$, we have 
\begin{align}
    \mse_r^2=2^{d-k}\smse^2_r.
\end{align}
\end{lemma}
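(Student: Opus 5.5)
The plan is to write both per-state estimators as ``truth plus a linear combination of independent zero-mean noises'' and then compute the second moment directly. Cross terms will vanish by independence, leaving diagonal variance terms. Throughout, fix $r$ and abbreviate $\bm a_{j}:=\jacob_r^{\para^r}\jacob^j_r\in\sR^{\dim\para_r}$. This is a vector because expert $j$ outputs a scalar, so $\jacob^j_r$ is a column and $\|\bm a_j\|^2$ is its squared Euclidean norm.

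\textbf{Backpropagation.} Substituting $\hat\grad_{j,r,i}=\grad_{j,r}+\varepsilon_{j,r,i}$ into the definition of $\hat\grad^{\para_r}$ gives
\begin{align}
\hat\grad^{\para_r}-\grad^{\para_r}=\sum_{j=1}^{m_2}\bm a_j\,\bar\varepsilon_{j,r},\qquad \bar\varepsilon_{j,r}:=\frac1n\sum_{i=1}^n\varepsilon_{j,r,i}.
\end{align}
The $\bar\varepsilon_{j,r}$ are independent across $j$, have mean zero, and have variance $v_{j,r}/n$. Expanding the squared norm, the cross terms $\langle\bm a_j,\bm a_{j'}\rangle\E[\bar\varepsilon_{j,r}\bar\varepsilon_{j',r}]$ vanish for $j\neq j'$. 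This yields $\mse_r^2=\frac1n\sum_j\|\bm a_j\|^2v_{j,r}$.

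\textbf{Synthetic gradients.} First I will establish an invariance claim: for every $s\in\mathcal F_j(r)$ we have $\jacob^j_s=\jacob^j_r$, $\grad_{j,s}=\grad_{j,r}$, and $v_{j,s}=v_{j,r}$. The justification is that expert $j$ sees only $\bm u_j=P_j\node$, and this is identical across the class. Hence $y_{j,s}=f_j(\bm u_{j,s})=y_{j,r}$, the Jacobian of $f_j\circ P_j$ coincides, and $\partial\ell_j(\bm u_j,y_j)/\partial y_j$ coincides. The noise variance must be checked case by case:
\begin{itemize}
\item Case 1: $v_{j,s}=\sigma^2$.
\item Case 2: $v_{j,s}=\grad_{j,s}^2(1-p)/p$, which depends on $s$ only through $\grad_{j,s}$.
\item Case 3: the law of $\invar'_j$ does not depend on $s$, and $\ell_j$ depends on $s$ only through $(\bm u_j,y_j)$.
\end{itemize}

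Given this claim, the pooled estimator satisfies
\begin{align}
\hat\sgrad^{\para_r}-\grad^{\para_r}=\sum_{j}\bm a_j\,\frac{1}{n|\mathcal F_j(r)|}\sum_{s\in\mathcal F_j(r)}\sum_{i=1}^n\varepsilon_{j,s,i},
\end{align}
so it is unbiased. The same independence argument kills the cross terms, both across $j$ and across distinct $(s,i)$. The variance of the inner average is $\frac{1}{n^2|\mathcal F_j(r)|^2}\cdot n|\mathcal F_j(r)|\,v_{j,r}=\frac{v_{j,r}}{n2^{d-k}}$, using $|\mathcal F_j(r)|=2^{d-k}$. This gives the stated formula for $\smse_r^2$. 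The final claim then follows immediately when $v_{j,r}$ is constant in $j$, and in fact the ratio of the two formulas is exactly $2^{d-k}$ termwise.

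\textbf{Main obstacle.} The only nontrivial step is the invariance claim, and specifically that $v_{j,s}=v_{j,r}$ on the equivalence class. The lemma is phrased with a general $v_{j,s}$, so this must be derived from the structure of the three cases rather than assumed. I would handle it as above. I would also note explicitly that independence across $s$ holds because each partition's $n$ samples are drawn independently in the phased sampling model.
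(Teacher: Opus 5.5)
Your proposal is correct and follows the paper's proof step for step: both estimators are written as the truth plus a $\jacob_r^{\para^r}\jacob^j_r$-weighted sum of independent zero-mean noises, cross terms vanish by independence, and the pooled variance uses $|\mathcal F_j(r)|=2^{d-k}$. Your one addition is the case-by-case check that $\grad_{j,s}$ and $v_{j,s}$ are constant on $\mathcal F_j(r)$ (for Case 3 this reads $\invar'_j$ as independent of the input). The paper uses this fact without listing it as a hypothesis, and it is genuinely needed because the pooled variance is the class average of $v_{j,s}$. Your invariance $\jacob^j_s=\jacob^j_r$ is true but unnecessary, since $\hat\synth^j$ applies $\jacob^j$ once at the query point.
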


\begin{proof}
For ordinary backpropagation, define
\begin{align}
    \overline \varepsilon_{j,r}
    :=
    \frac1n\sum_{i=1}^n \varepsilon_{j,r,i}.
\end{align}
Then
\begin{align}
    \frac1n\sum_{i=1}^n \hat\grad_{j,r,i}
    =
    \grad_{j,r}+\overline \varepsilon_{j,r},
\end{align}
and therefore
\begin{align}
    \hat\grad^{\para_r}-\grad^{\para_r}
    =
    \sum_{j=1}^{m_2}\jacob_r^{\para^r}\jacob^j_r\overline \varepsilon_{j,r}.
\end{align}
Since $\E[\overline \varepsilon_{j,r}]=0$, the estimator is unbiased. Also,
\begin{align}
    \E[\overline \varepsilon_{j,r}^2]
    =
    \frac{v_{j,r}}{n}.
\end{align}
The variables $\overline \varepsilon_{j,r}$ are independent across $j$.
Hence the cross terms vanish, and
\begin{align}
\begin{aligned}
    \mse_r^2=\E
    \left[
        \left\|
        \hat\grad^{\para_r}-\grad^{\para_r}
        \right\|^2
    \right]
    &=
    \E
    \left[
        \left\|
        \sum_{j=1}^{m_2}\jacob_r^{\para^r}\jacob^j_r\overline \varepsilon_{j,r}
        \right\|^2
    \right] \\
    &=
    \sum_{j=1}^{m_2}
    \|\jacob_r^{\para^r}\jacob^j_r\|^2
    \E[\overline \varepsilon_{j,r}^2] =
    \frac1n
    \sum_{j=1}^{m_2}\|\jacob_r^{\para^r}\jacob^j_r\|^2v_{j,r}.
\end{aligned}
\end{align}

We next analyze the synthetic-gradient estimator. Since
$\grad_{j,s}=\grad_{j,r}$ for every $s\in\mathcal F_j(r)$,
\begin{align}
\begin{aligned}
    \hat\synth_j(\node_r)
    &=
    \frac1{n|\mathcal F_j(r)|}
    \sum_{s\in\mathcal F_j(r)}
    \sum_{i=1}^n
    (\grad_{j,s}+\varepsilon_{j,s,i}) \\
    &=
    \grad_{j,r}
    +
    \frac1{n|\mathcal F_j(r)|}
    \sum_{s\in\mathcal F_j(r)}
    \sum_{i=1}^n
    \varepsilon_{j,s,i}.
\end{aligned}
\end{align}
Define
\begin{align}
    \widetilde \varepsilon_{j,r}
    :=
    \frac1{n|\mathcal F_j(r)|}
    \sum_{s\in\mathcal F_j(r)}
    \sum_{i=1}^n
    \varepsilon_{j,s,i}.
\end{align}
Then
\begin{align}
    \hat\synth_j(\node_r)=\grad_{j,r}+\widetilde \varepsilon_{j,r}.
\end{align}
Since the feedback errors are mean zero,
\begin{align}
    \E[\synth_j(\node_r)]=\grad_{j,r},
\end{align}
and hence
\begin{align}
    \E[\hat\sgrad^{\para_r}]=\grad^{\para_r}.
\end{align}

Because $P_j$ selects exactly $k$ coordinates and $\node_r$ ranges
uniformly over $\{\pm1\}^d$,
\begin{align}
    |\mathcal F_j(r)|=2^{d-k}.
\end{align}
Using $v_{j,s}=v_{j,r}$ for $s\in \gF_j(r)$ and independence,
\begin{align}
    \E[\widetilde \varepsilon_{j,r}^2]
    =
    \frac{v_{j,r}}{n|\mathcal F_j(r)|}
    =
    \frac{v_{j,r}}{n2^{d-k}}.
\end{align}
Finally,
\begin{align}
    \hat\sgrad^{\para_r}-\grad^{\para_r}
    =
    \sum_{j=1}^{m_2}\jacob_r^{\para^r}\jacob^j_r\widetilde \varepsilon_{j,r}.
\end{align}
The feedback-error variables are independent across experts $j$, so the cross
terms again vanish. Hence
\begin{align}
\begin{aligned}
    \smse_r^2=\E
    \left[
        \left\|
        \hat\sgrad^{\para_r}-\grad^{\para_r}
        \right\|^2
    \right]
    &=
    \sum_{j=1}^{m_2}
    \|\jacob_r^{\para^r}\jacob^j_r\|^2
    \E[\widetilde \varepsilon_{j,r}^2] \\
    &=
    \frac1{n2^{d-k}}
    \sum_{j=1}^{m_2}\|\jacob_r^{\para^r}\jacob^j_r\|^2 v_{j,r}.
\end{aligned}
\end{align}
Therefore, when $v_{j,r}$ is a constant across expert $j$, we have 
\begin{align}
    \mse_r^2=2^{d-k}\smse^2_r.
\end{align}
\end{proof}

\textbf{Theorem \ref{thm:per-state-mse}.}
Consider the settings described in this section. Assume that the norm of the gradient feedback to each expert node has a constant norm across experts, i.e., $\forall j, j': |\grad_{j,r}|=|\grad_{j',r}|$, and for Case 3: $\forall j, j': \Var(\hat\grad_{j,r}\mid r)=\Var(\hat\grad_{j',r}\mid r)$ has a constant variance across experts. Then, for all three cases, we have
\begin{align}
    \mse^2=2^{d-k}\, \smse^2.
\end{align}

\begin{proof}
The proof for all three cases are applications of
Lemma~\ref{lem:common-per-state-mse}.

For Case 1, set $\varepsilon_{j,s,i}=\xi_{j,s,i}$. Then, $\varepsilon_{j,s,i}$ is i.i.d. across $j, s, i$ with zero mean and constant variance $v_{s}=\sigma^2$.
Invoking  Lemma~\ref{lem:common-per-state-mse} gives the result for Case 1.

For Case 2, set
\begin{align}
    \varepsilon_{j,s,i}
    =
    \left(\frac{\eta_{j,s,i}}{p}-1\right)\grad_{j,s}.
\end{align}
Since $\E[\eta_{j,s,i}/p]=1$, the error has mean zero. Also,
\begin{align}
    \E
    \left[
        \left(
        \frac{\eta_{j,s,i}}{p}-1
        \right)^2
    \right]
    =
    \frac{1-p}{p},
\end{align}
and hence
\begin{align}
    v_{j,s}
    =
    \frac{1-p}{p}\grad_{j,s}^2.
\end{align}
As each $\grad_{j,s}^2$ is constant across expert $j$, we know $v_{j, s}$ is constant across experts. Then, invoking Lemma~\ref{lem:common-per-state-mse} proves the result for Case 2.

It remains to prove Case 3.  Recall
\begin{align}
    \hat\grad_{j}
    =\frac{\partial \gL'}{\partial y_j}=
    \frac{\partial \ell_j(\bm u_j,y_j, \invar'_j)}{\partial y_j}.
\end{align}
The mean gradient for parameter block $\para^s$ is
\begin{align}
    \E\,\grad_s^{\para_s}= \E\left[\jacob_s^{\para_s}\sum_{j\in [m_2]} \jacob^j_s \hat\grad_j \mid \invar_s\right]= \jacob_s^{\para_s}\sum_{j\in [m_2]} \jacob^j_s \ \E\, [\hat\grad_j\mid \invar_s] . 
\end{align}
Define $\bar\grad_{j,s}:=E\, [\hat\grad_j\mid \invar_s]$ and set
\begin{align}
    \epsilon_{j,s,i}= \hat\grad_{j,s,i}-\bar\grad_{j,s}.
\end{align}
By definition, $\epsilon_{j,s,i}$ has zero mean, independent across $j, s, i$. Moreover, by assumption in the theorem statement, its variance $v_{j, s}=\E[\epsilon_{j,s,i}^2]$ is constant across experts. Therefore, invoking Lemma~\ref{lem:common-per-state-mse} proves:
\begin{align}
    \mse_r^2=2^{d-k}\, \smse_r^2, \quad \forall r\in [R].
\end{align}
Then, applying \eqref{eq:mse-to-mser} proves the theorem.

\end{proof}

\section{Experiment Details}
\label{sec:full-experiments}

This section provides the implementation details and full results for the experiments in Section~\ref{sec:exp}.

\subsection{Simulation of the Expert Network}
\label{subsec:full-exp-expert}

We first simulate the expert network from Section~\ref{sec:example}. For each dimension $d$, the input space is represented by $R=2^d$ partition representatives. The target node output is a bijection onto Boolean states $\node_r\in\{-1,+1\}^d$, and the target node is piecewise parameterized as $\para=(\para_1,\ldots,\para_R)$, so only block $\para_r$ receives gradient on state $r$. In the simulation, the local map for each block is linear, $f_{\para_r}(x_r)=\bm{A}_r\para_r$, with $\bm{A}_r\sim \gN(0,\tfrac{1}{d}\bm I)$; the Boolean state $\node_r$ is used as the expert-visible representation, while $\bm{A}_r$ supplies the local Jacobian for pulling gradients back to $\para_r$.

The second layer contains $m$ expert nodes. Expert $j$ observes a $k$-coordinate projection $\bm{u}_{j,r}=P_j\node_r\in\{-1,+1\}^k$, computes $y_{j,r}=\bm a_j^\top \bm{u}_{j,r}+b_j$, and is trained against a linear target $t_j(\bm{u}_{j,r})=\bm \beta_j^\top \bm{u}_{j,r}+\gamma_j$ with squared loss. The clean scalar expert feedback is therefore $g_{j,r}=y_{j,r}-t_j(\bm{u}_{j,r})$, which depends only on $P_j\node_r$ and is constant on the preimage $\gF(r):=\{s:P_j\node_s=P_j\node_r\}$ of size $2^{d-k}$.

We compare ordinary backpropagation, which averages stochastic feedback separately at each state, with synthetic-gradient propagation, which estimates the conditional expert feedback table by pooling all states sharing the same projected expert input. The full population gradient is computed by exact enumeration over all $R$ representatives; estimator error is the squared Euclidean distance to this exact gradient.

We evaluate the three stochasticity sources from Section~\ref{sec:example}: noisy terminal feedback, stochastic expert activation, and partial observation. For noisy terminal feedback, each expert receives its clean feedback corrupted by independent additive Gaussian noise, $\widehat{g}_{j,r}=g_{j,r}+\xi_j$, where $\xi_j\sim\mathcal{N}(0,\sigma^2)$. For stochastic expert activation, expert $j$ is active with probability $p$, and its feedback is reweighted as $\widehat{g}_{j,r}=(\eta_j/p)g_{j,r}$ with $\eta_j\sim\operatorname{Bernoulli}(p)$. For partial observation, the expert target depends on an unobserved zero-mean Gaussian variable $q_j\sim\mathcal{N}(0,\tau^2)$, producing feedback $\widehat{g}_{j,r}=g_{j,r}+\alpha_jq_j$, where each $\alpha_j$ is a scale constant sampled once from the standard Gaussian for each expert. Each construction is unbiased, so $\mathbb{E}[\widehat{g}_{j,r}\mid \mathbf{z}_r]=g_{j,r}$.

The sweep uses $d\in\{4,\ldots,12\}$, $k\in\{2,3,4\}$, $m=5$ experts and $20$ Monte Carlo trials. Every possible input variable is sampled once. The reported metric is the ratio $\Mse^2/\sMse^2$ between the squared error of the backpropagation estimator and the squared error of the synthetic-gradient estimator for the full population gradient.

\begin{figure}[t]
    \centering
    \begin{subfigure}[b]{0.32\textwidth}
        \centering
        \includegraphics[width=\linewidth]{figures/expert-example-figures/ratio_vs_d_noisy.png}
        \caption{Noisy terminal feedback.}
        \label{fig:expert-ratio-noisy}
    \end{subfigure}
    \hfill
    \begin{subfigure}[b]{0.32\textwidth}
        \centering
        \includegraphics[width=\linewidth]{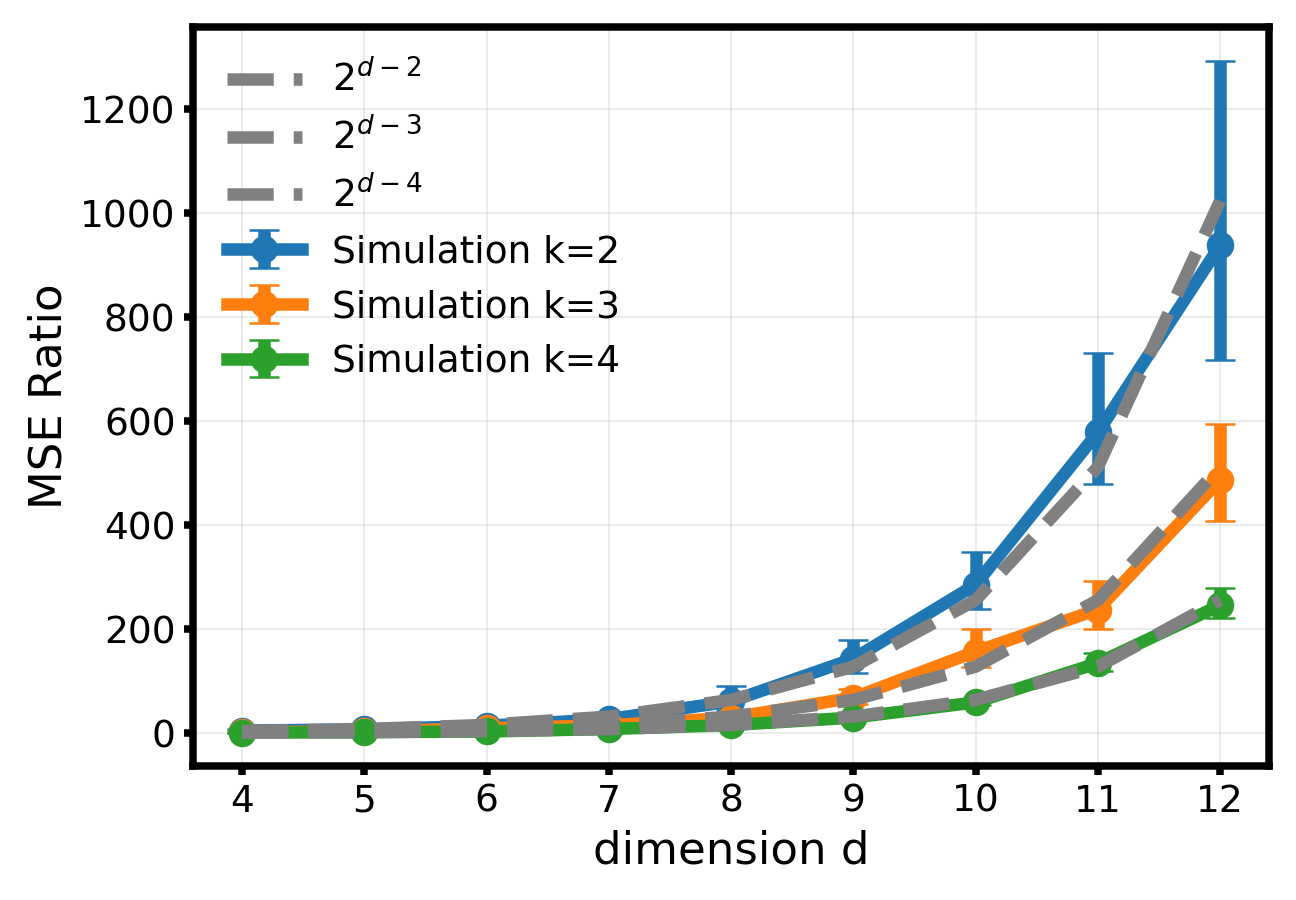}
        \caption{Stochastic expert activation.}
        \label{fig:expert-ratio-activation}
    \end{subfigure}
    \hfill
    \begin{subfigure}[b]{0.32\textwidth}
        \centering
        \includegraphics[width=\linewidth]{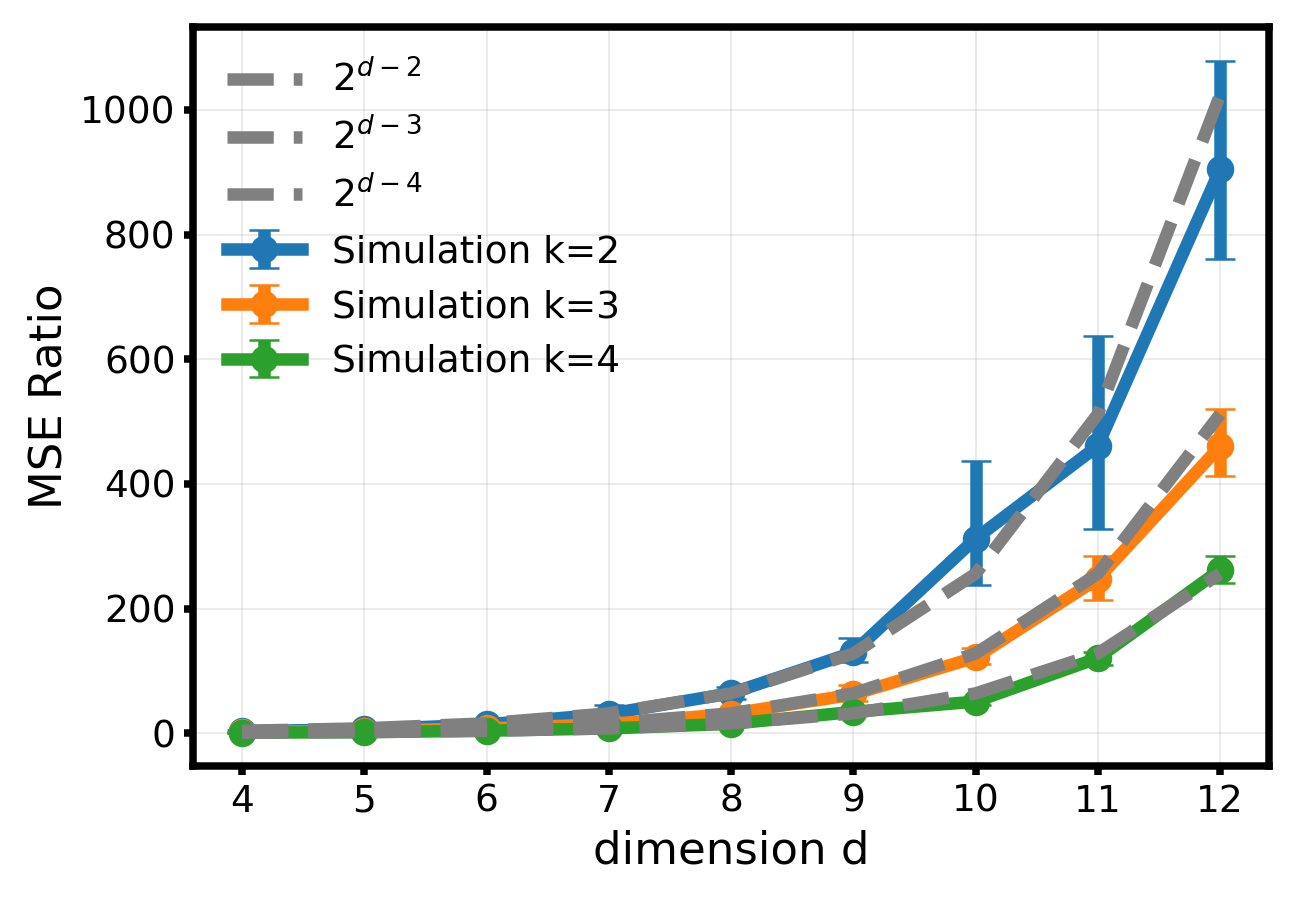}
        \caption{Partial observation.}
        \label{fig:expert-ratio-partial}
    \end{subfigure}
    \caption{\textbf{Finite-partition expert simulation.}  MSE ratio $\Mse^2/\sMse^2$ between the squared error of the backpropagation estimator and the squared error of the synthetic-gradient estimator for the full population gradient. In all three cases, the ratio grows with dimension reduction $d-k$, matching the predicted advantage from theory.}
    \label{fig:expert-ratio-d}
\end{figure}

\subsection{SparseNet and Adaptive Gradient Mixing}
\label{subsec:full-exp-sparsenet}

The MNIST bandit and POPGym experiments use the same SparseNet neuron template. A SparseNet hidden state is a collection of $W$ neurons,
\begin{align}
    H=\{h_i\}_{i=1}^W,\qquad h_i\in\sR^q,
\end{align}
where $q$ is called the neuron dimension. Each neuron connects to $n_{\mathrm{parents}}$ other neurons, where the connections are uniformly randomly picked and fixed at initialization.
Each neuron observes the concatenated outputs $\bm{u}\in \sR^d$ from its parent neurons and optionally from external inputs.

Each neuron $i$ contains $M=n_{\mathrm{slots}}$ slot maps. For slot $s$, the policy parameters are a key $k_{i,s}$, a matrix $A_{i,s}$, and a bias $b_{i,s}$. The neuron first routes its local input over slots,
\begin{align}
    \alpha_{i,s}
    =
    \frac{
    \exp\left(\langle \bm{u}_i,k_{i,s}\rangle/\sqrt{d}\right)}
    {\sum_{r=1}^M
    \exp\left(\langle \bm{u}_i,k_{i,r}\rangle/\sqrt{d}\right)},
\end{align}
then computes slot outputs
\begin{align}
    y_{i,s}=\tanh(A_{i,s}\bm{u}_i+b_{i,s}),
\end{align}
and returns the mixture
\begin{align}
    \bar h_i=\sum_{s=1}^M\alpha_{i,s}y_{i,s}.
\end{align}
For MNIST, hidden layers after the first use the residual update implemented in the sweep:
\begin{align}
    h_{\ell,i}
    =
    \tanh(\bar h_{\ell,i})
    + h_{\ell-1,i},
    \qquad \ell>0,
\end{align}
while POPGym uses the sparse transition recurrently through time. 
The MNIST action layer flattens the final hidden state and forms scaled dot-product logits. The POPGym actor uses an analogous slot mechanism at the action layer: for each action, actor keys route over action slots and the slot logits are mixed to produce the final action logit.

This neuron design makes the synthetic gradient predictor convenient in a sparse network: it uses the same local input $\bm{u}_i$ and the same routing weights $\alpha_{i,s}$, but has predictor-specific slot parameters $\tilde A_{i,s},\tilde b_{i,s}$:
\begin{align}
    \hat\synth_i(\bm{u}_i)
    =
    \sum_{s=1}^M
    \alpha_{i,s}
    \left(\tilde A_{i,s}\bm{u}_i+\tilde b_{i,s}\right).
\end{align}
Thus, the predictor estimates an incoming hidden-state gradient in $\sR^q$ using only the information available at that neuron.

The adaptive $\lambda$ rule estimates the per-neuron bias-variance tradeoff from the current mini-batch. Let $\sss_b$ be the received gradient on this neuron's parameter on sample $b$. Let $\qqq_b$ be the gradient on this neuron's parameter induced by its synthetic gradient prediction. We use the batch empirical mean of $\sss_b$ to approximate the true expected gradient, and use \eqref{eq:lambda-star} to calculate a mixing weight $\lambda_{\mathrm{batch}}$ for this batch. In practice, we found it to be more stable to normalize the gradients to unit $\ell_2$ norm before estimating the mixing weight. 
Then, the mixing weight is updated with momentum,
\begin{align}
    \lambda
    \leftarrow
    \mu\lambda+(1-\mu)\lambda_{\mathrm{batch}}.
\end{align}

\subsection{MNIST Contextual Bandit}
\label{subsec:full-exp-mnist}

The experiment is a contextual bandit simulation. The context is an MNIST image, and the action set consists of the class predictions $\{0,1,2, 3,4,5,6,7,8,9\}$. The policy samples an action from its softmax distribution. The true reward is one for a correct sampled label and zero otherwise; during training this reward is flipped with probability $p_{\mathrm{flip}}$, while evaluation uses the noiseless reward.

The SparseNet policy is multi-layer sparse feedforward network (SparseNet neuron operation and adaptive gradient-mixing rule detailed in Section~\ref{subsec:full-exp-sparsenet}). The first layer samples $d_{\mathrm{in}}=qn_{\mathrm{parents}}$ random  raw pixel coordinates per neuron, where $q$ is the neuron dimension and $n_{\mathrm{parents}}$ is the number of parents. In later layers, each neuron receives input from the concatenation of the states of its $n_{\mathrm{parents}}$ parents. Sparse connections are uniformly randomly picked, fixed at initialization. In this experiment, it has five sparse layers of width $200$, neuron output dimension $10$, neuron slots number $5$, and number of parents $5$ in the base setting. The final action layer flattens the last sparse hidden state and uses learned action keys to produce logits. We compare SparseNet trained by synthetic gradient propagation (sgprop), backpropagation (backprop), and a dense MLP backpropagation baseline. The MLP uses five layers, width $745$, matching the SparseNet parameter count. For every batch, once the gradient is estimated, we use Adam to optimize every network. 
We plot the evaluation rewards along training steps in Figure~\ref{fig:mnist-main-results}.

We observe that each method has a different optimal learning rate. For example, as synthetic gradient reduces variance, it may benefit from a bigger learning rate. we use their corresponding optimal learning rate according to Figure~\ref{fig:mnist-ablations} (a). More ablations (reward noise scale, number of parents, adaptive mixing weight dynamics) are shown in Figure~\ref{fig:mnist-ablations}; detailed hyperparameters are  detailed in Table~\ref{tab:mnist-settings}.  

\begin{table}[t]
    \centering
    \caption{\textbf{MNIST contextual bandit settings.} Learning rate are chosen to be their optimal ones respectively according to Figure~\ref{fig:mnist-ablations} (a). MLP layers and width are chosen to match the parameter count of SparseNet. }
    \label{tab:mnist-settings}
    \begin{tabular}{@{}p{0.2\textwidth}p{0.3\textwidth}p{0.34\textwidth}@{}}
        \toprule
        Component & Setting & Value \\
        \midrule
        Task & Base reward flip probability & $p_{\mathrm{flip}}=0.4$ \\
        Training & Steps / seeds & $1500$ / $\{0,1,2,3,4,5\}$ \\
        Training & Batch size / eval batch size & $64$ / $4000$ \\
        SparseNet & synthetic grad learning rate & $0.005$ \\
        SparseNet & Layers / width / neuron dim & $5$ / $200$ / $10$ \\
        SparseNet & Num. of parents & $5$  \\
        SparseNet SGProp & Base  learning rate & $0.005$ \\
        SparseNet backprop & Base  learning rate & $0.001$ \\
        MLP backprop & Layers / width / learning rate & $5$ / $745$ / $2\cdot 10^{-5}$ \\
        \bottomrule
    \end{tabular}
\end{table}

\begin{figure}[t]
    \centering
    \begin{subfigure}[b]{0.32\textwidth}
        \centering
        \includegraphics[width=\linewidth]{figures/mnist-bandit-figures/mnist_base_training_dynamics.png}
        \caption{Training dynamics}
        \label{fig:mnist-base-training}
    \end{subfigure}
    \begin{subfigure}[b]{0.32\textwidth}
        \centering
        \includegraphics[width=\linewidth]{figures/mnist-bandit-figures/mnist_theta_ablation_training_dynamics.png}
        \caption{Compare adaptive and fixed-$\lambda$}
        \label{fig:mnist-theta-ablation-training}
    \end{subfigure}
    \caption{\textbf{MNIST contextual bandit.} Mean evaluation rewards along training steps. Each curve aggregates six seeds, with the standard error of the mean (SEM) reported. For SparseNet, synthetic gradient improves sample efficiency relative to backpropagation, out performing the dense MLP baseline. The fixed-$\lambda$ comparison shows that the benefit depends on balancing synthetic and backpropagated gradients. }
    \label{fig:mnist-main-results}
\end{figure}

\begin{figure}[t]
    \centering
    \begin{subfigure}[b]{0.33\textwidth}
        \centering
        \includegraphics[width=\linewidth]{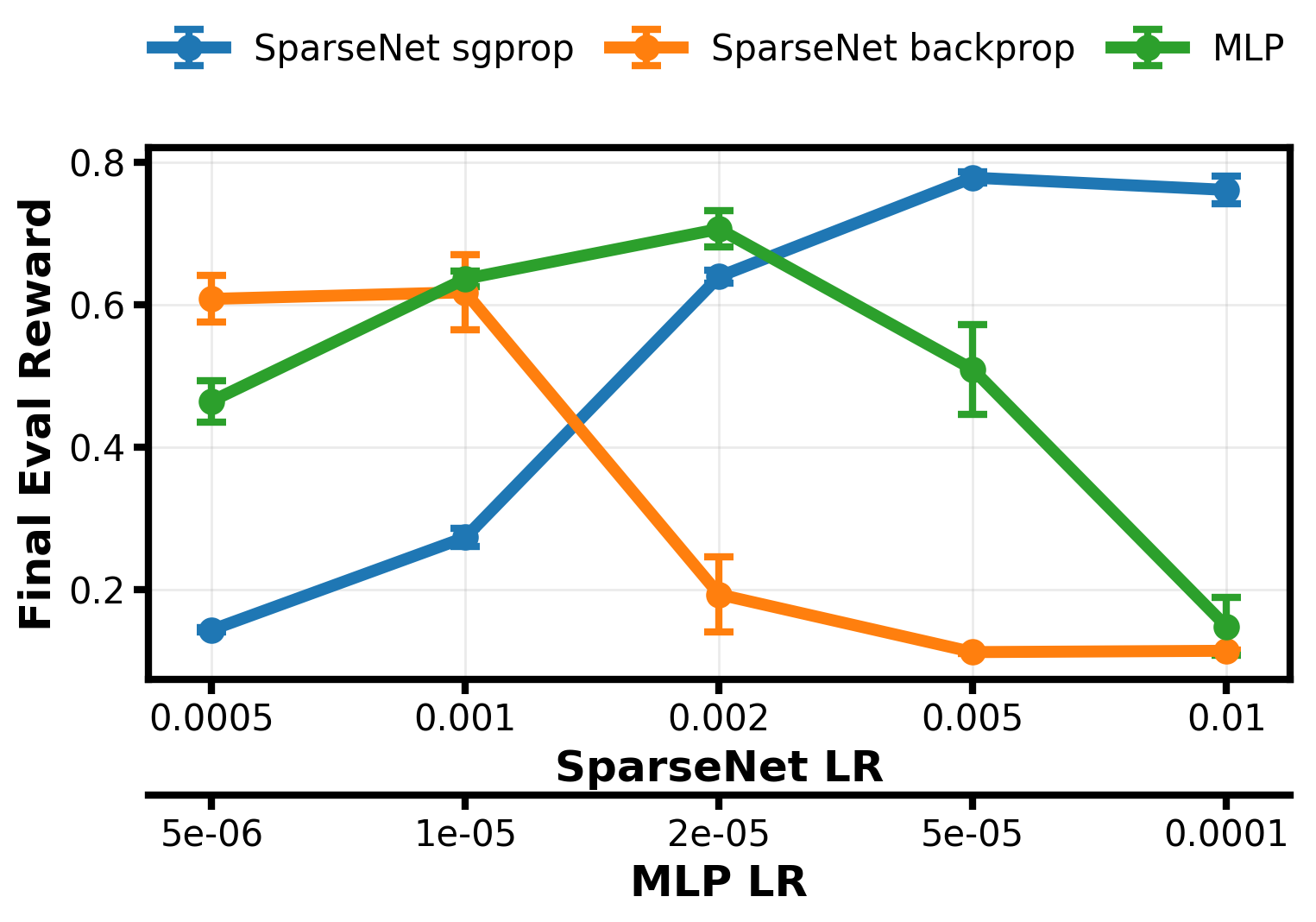}
        \caption{Learning rate.}
        \label{fig:mnist-policy-lr}
    \end{subfigure}
    \begin{subfigure}[b]{0.33\textwidth}
        \centering
        \includegraphics[width=\linewidth]{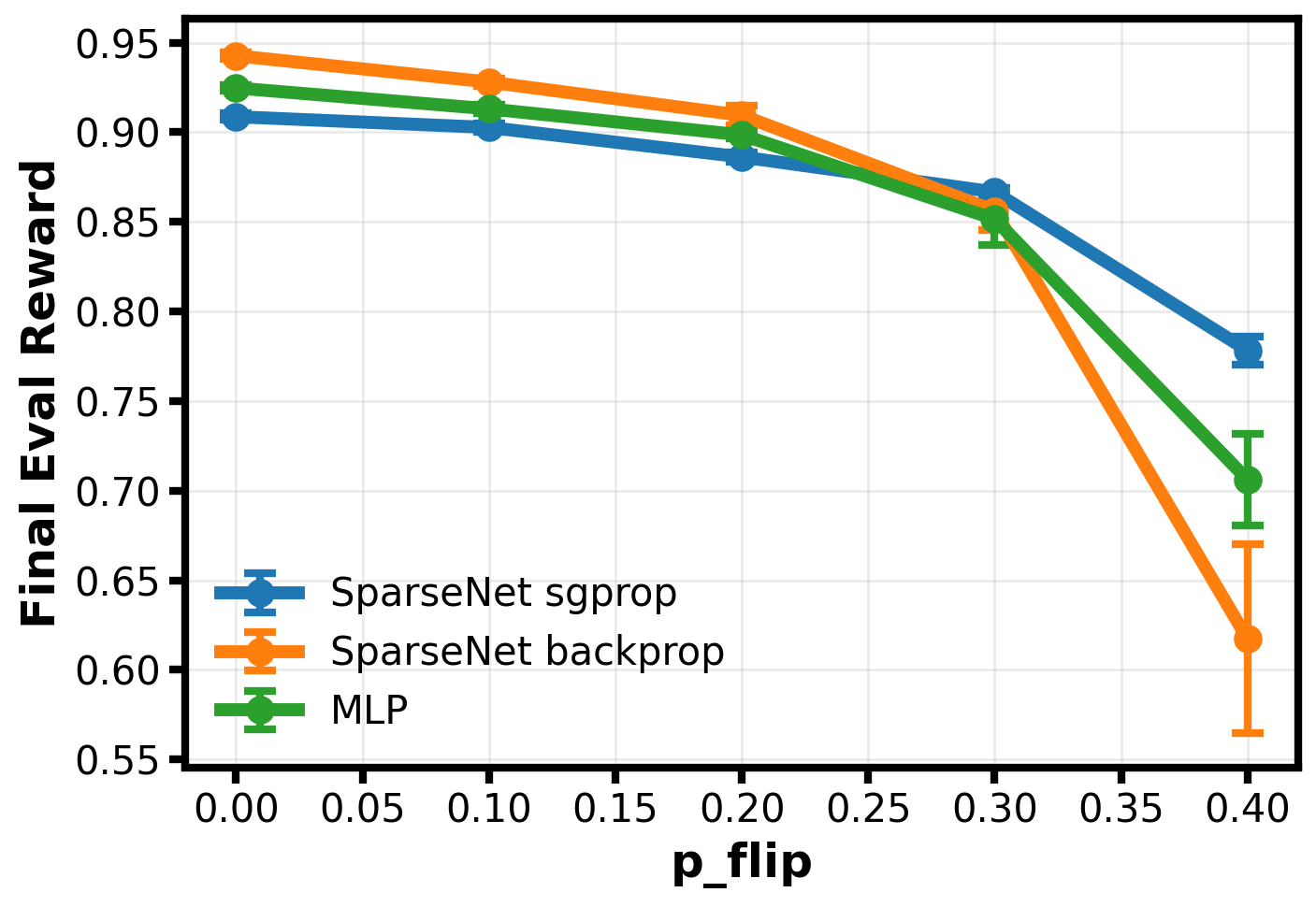}
        \caption{Scale of the reward noise.}
        \label{fig:mnist-p-flip}
    \end{subfigure}

    \begin{subfigure}[b]{0.33\textwidth}
        \centering
        \includegraphics[width=\linewidth]{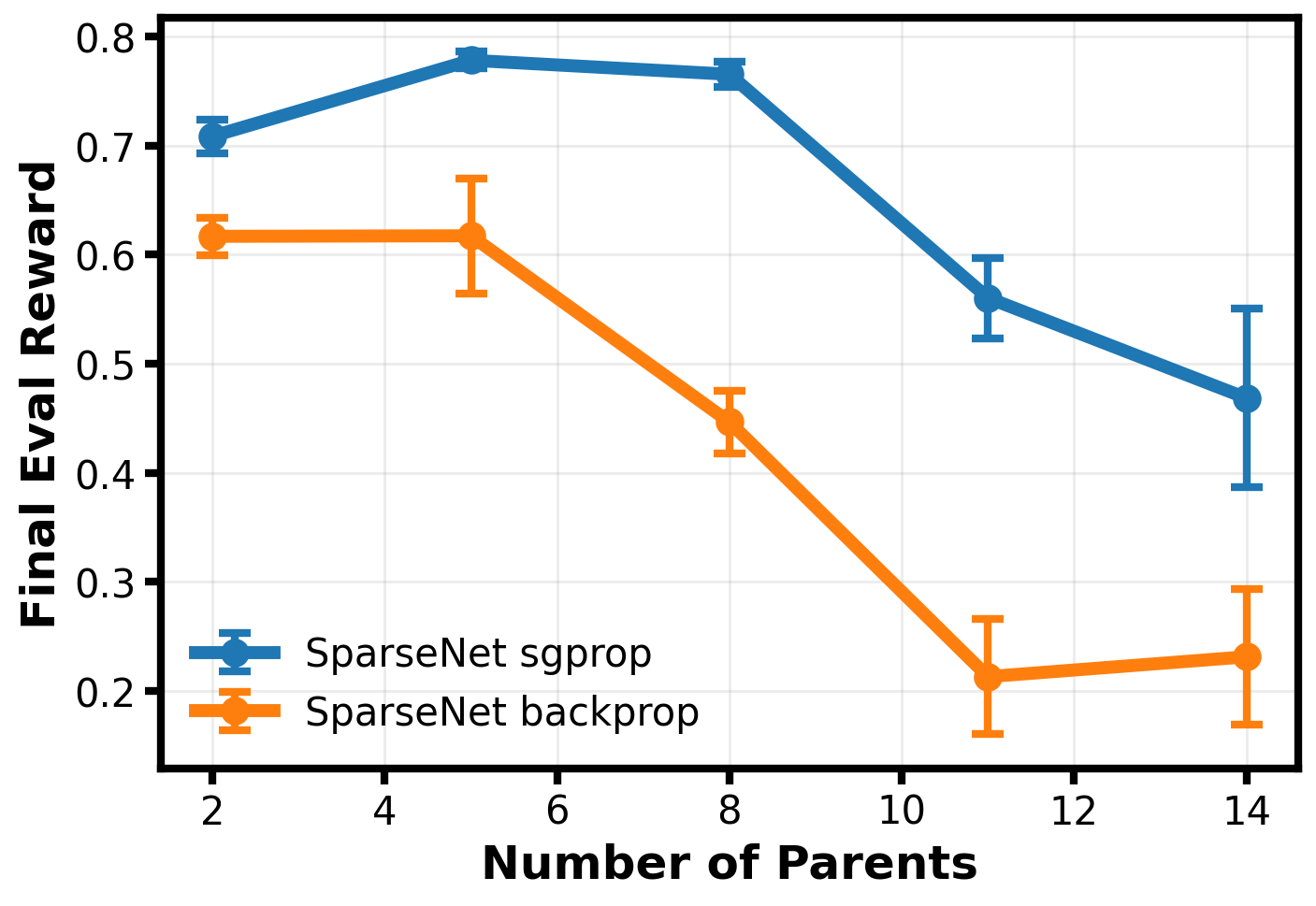}
        \caption{Num. of parents.}
        \label{fig:mnist-num-parents}
    \end{subfigure}
    \begin{subfigure}[b]{0.33\textwidth}
        \centering
        \includegraphics[width=\linewidth]{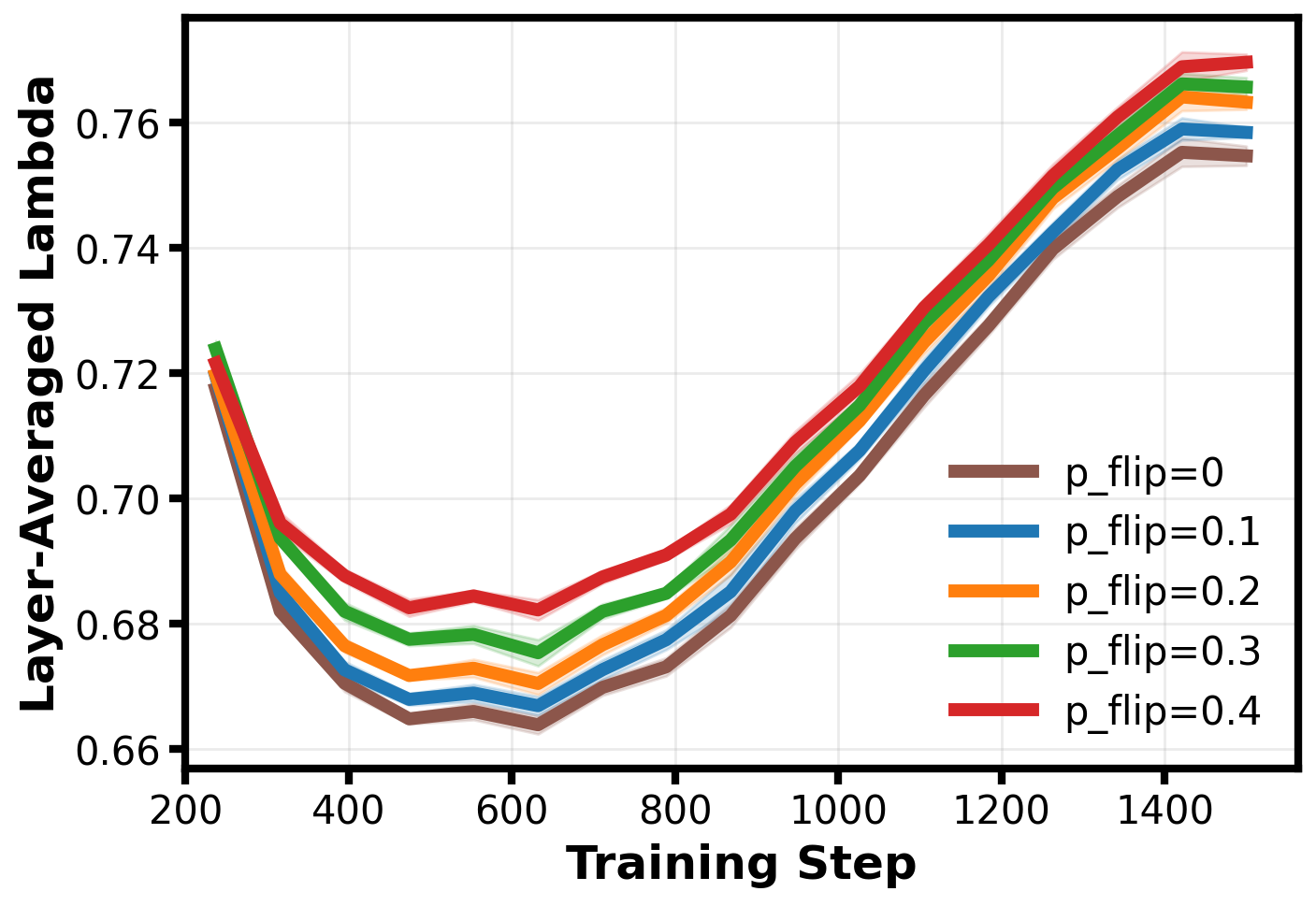}
        \caption{Adaptive $\lambda$ along training steps.}
        \label{fig:mnist-adaptive-theta-p-flip}
    \end{subfigure}
    \caption{\textbf{MNIST contextual bandit ablations.} (a) Each method has different optimal learning rate due. (b) Synthetic gradient propagation shows advantages in high variance regime. (c) Synthetic gradient propagation favors sparse connectivity. (d) Dynamics of the adaptive mixing weight along training steps.}
    \label{fig:mnist-ablations}
\end{figure}

\subsection{POPGym Reinforcement Learning}
\label{subsec:full-exp-popgym}

\begin{figure}[t]
    \centering
    \captionsetup[subfigure]{font=footnotesize,skip=2pt}
    \begin{subfigure}[b]{0.31\textwidth}
        \centering
        \includegraphics[width=\linewidth]{figures/popgym-figures/escape_envs20_training_dynamics.png}
        \caption{\texttt{escape}, 20 envs: training.}
        \label{fig:popgym-escape20-training}
    \end{subfigure}
    \hfill
    \begin{subfigure}[b]{0.31\textwidth}
        \centering
        \includegraphics[width=\linewidth]{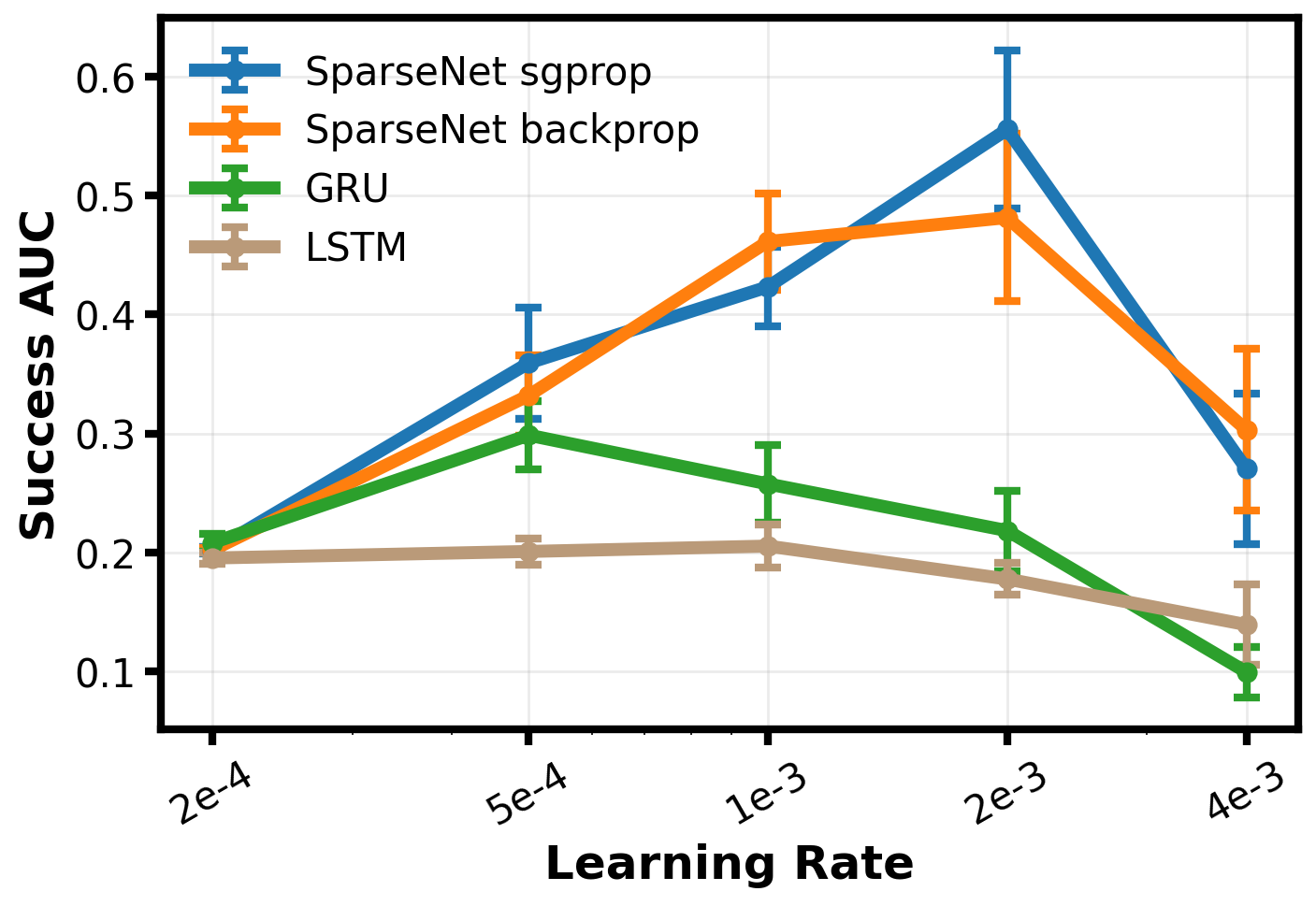}
        \caption{\texttt{escape}, 20 envs: LR sweep.}
        \label{fig:popgym-escape20-lr}
    \end{subfigure}
    \hfill
    \begin{subfigure}[b]{0.31\textwidth}
        \centering
        \includegraphics[width=\linewidth]{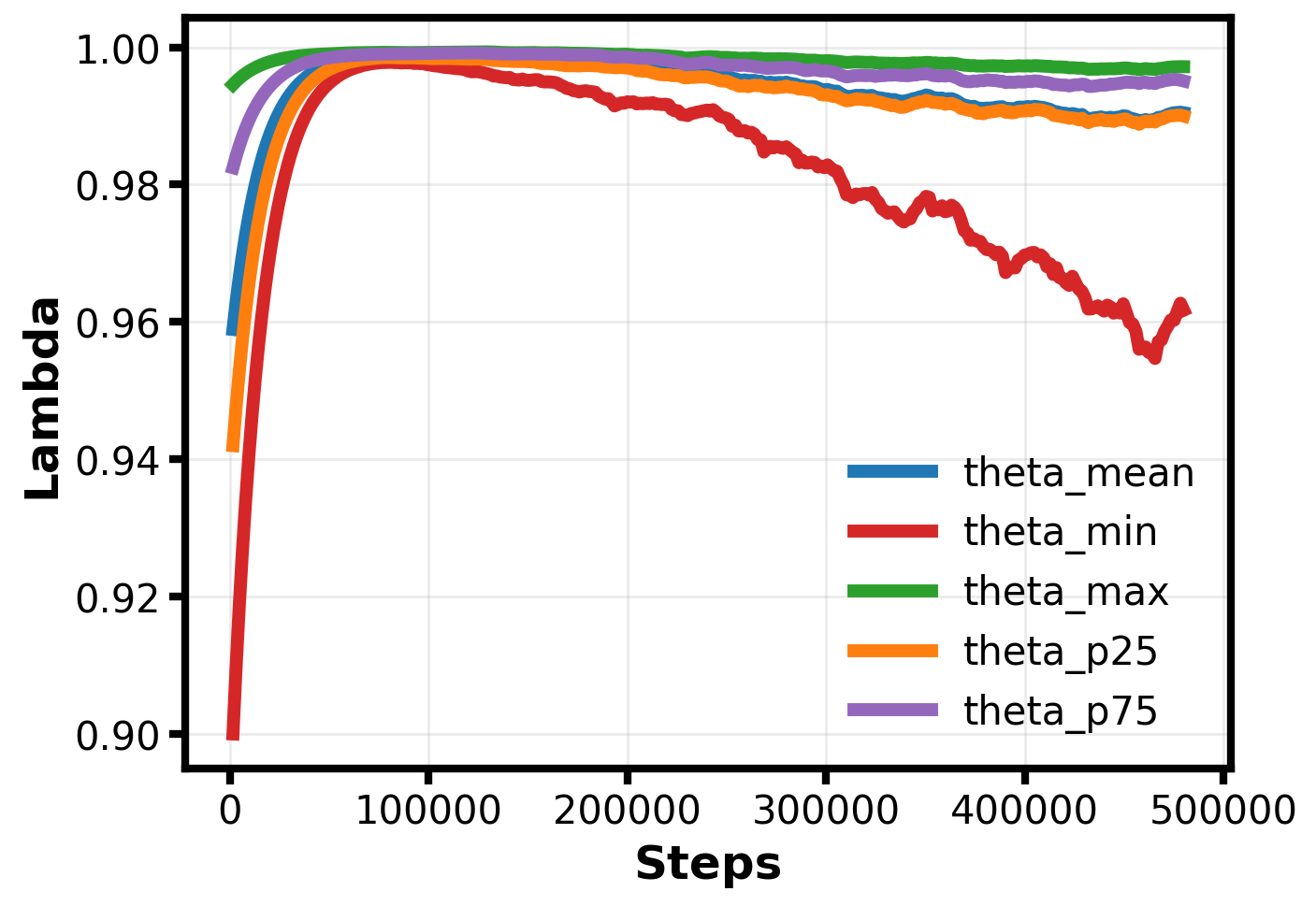}
        \caption{\texttt{escape}, 20 envs: $\lambda$.}
        \label{fig:popgym-escape20-theta}
    \end{subfigure}

    \vspace{0.35em}
    \begin{subfigure}[b]{0.31\textwidth}
        \centering
        \includegraphics[width=\linewidth]{figures/popgym-figures/escape_envs40_training_dynamics.png}
        \caption{\texttt{escape}, 40 envs: training.}
        \label{fig:popgym-escape40-training}
    \end{subfigure}
    \hfill
    \begin{subfigure}[b]{0.31\textwidth}
        \centering
        \includegraphics[width=\linewidth]{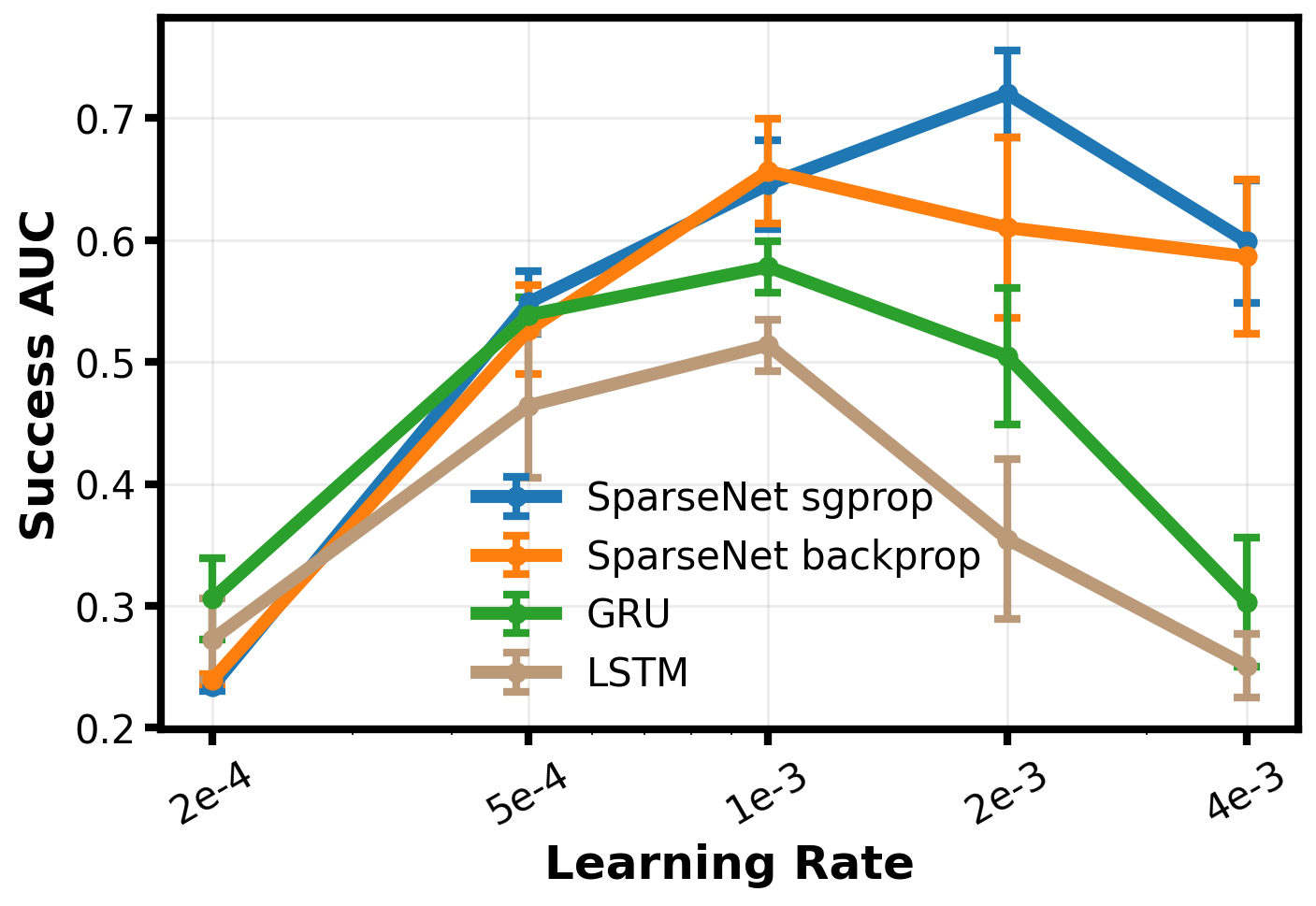}
        \caption{\texttt{escape}, 40 envs: LR sweep.}
        \label{fig:popgym-escape40-lr}
    \end{subfigure}
    \hfill
    \begin{subfigure}[b]{0.31\textwidth}
        \centering
        \includegraphics[width=\linewidth]{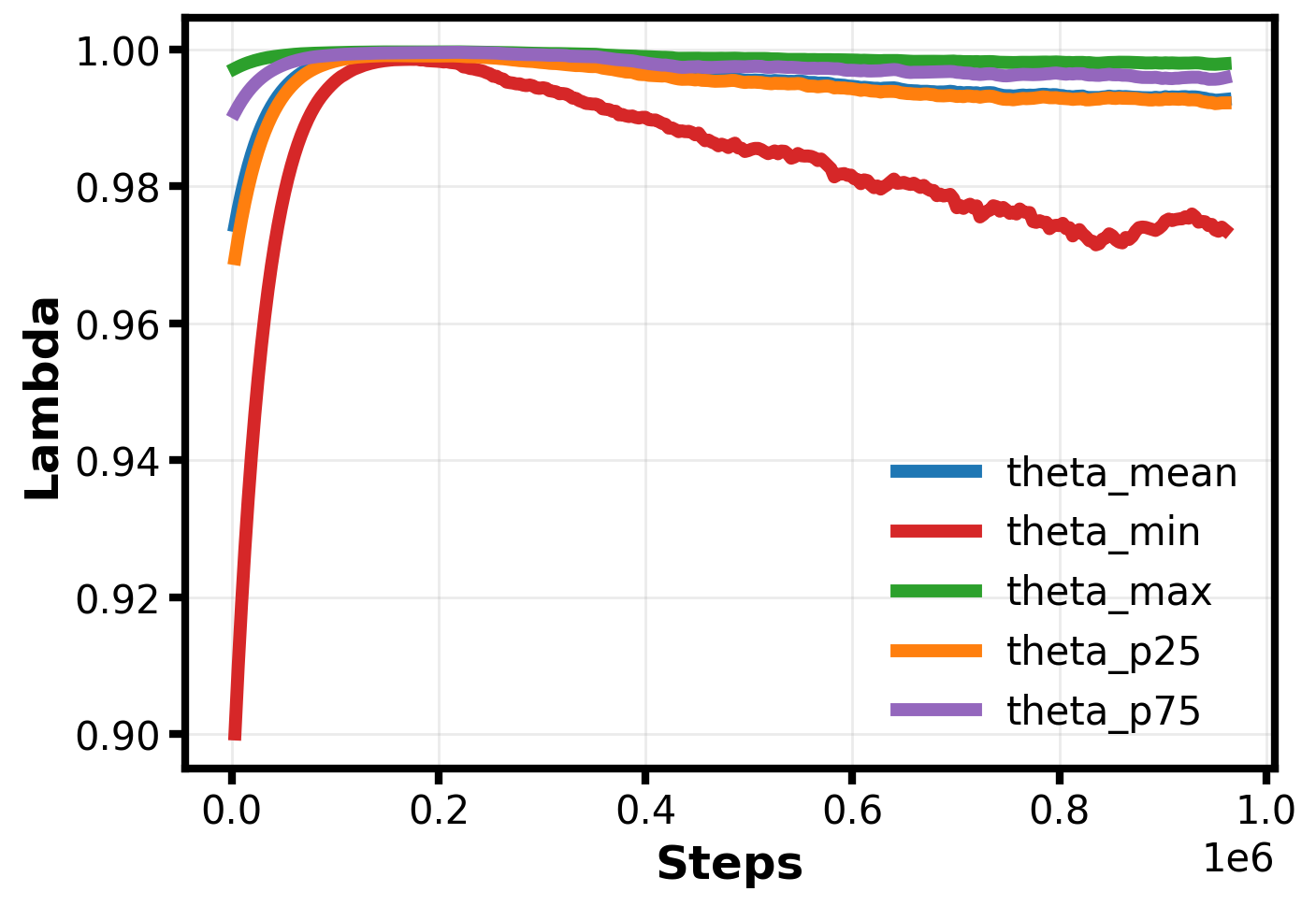}
        \caption{\texttt{escape}, 40 envs: $\lambda$.}
        \label{fig:popgym-escape40-theta}
    \end{subfigure}

    \vspace{0.35em}
    \begin{subfigure}[b]{0.31\textwidth}
        \centering
        \includegraphics[width=\linewidth]{figures/popgym-figures/explore_envs20_training_dynamics.png}
        \caption{\texttt{explore}, 20 envs: training.}
        \label{fig:popgym-explore20-training}
    \end{subfigure}
    \hfill
    \begin{subfigure}[b]{0.31\textwidth}
        \centering
        \includegraphics[width=\linewidth]{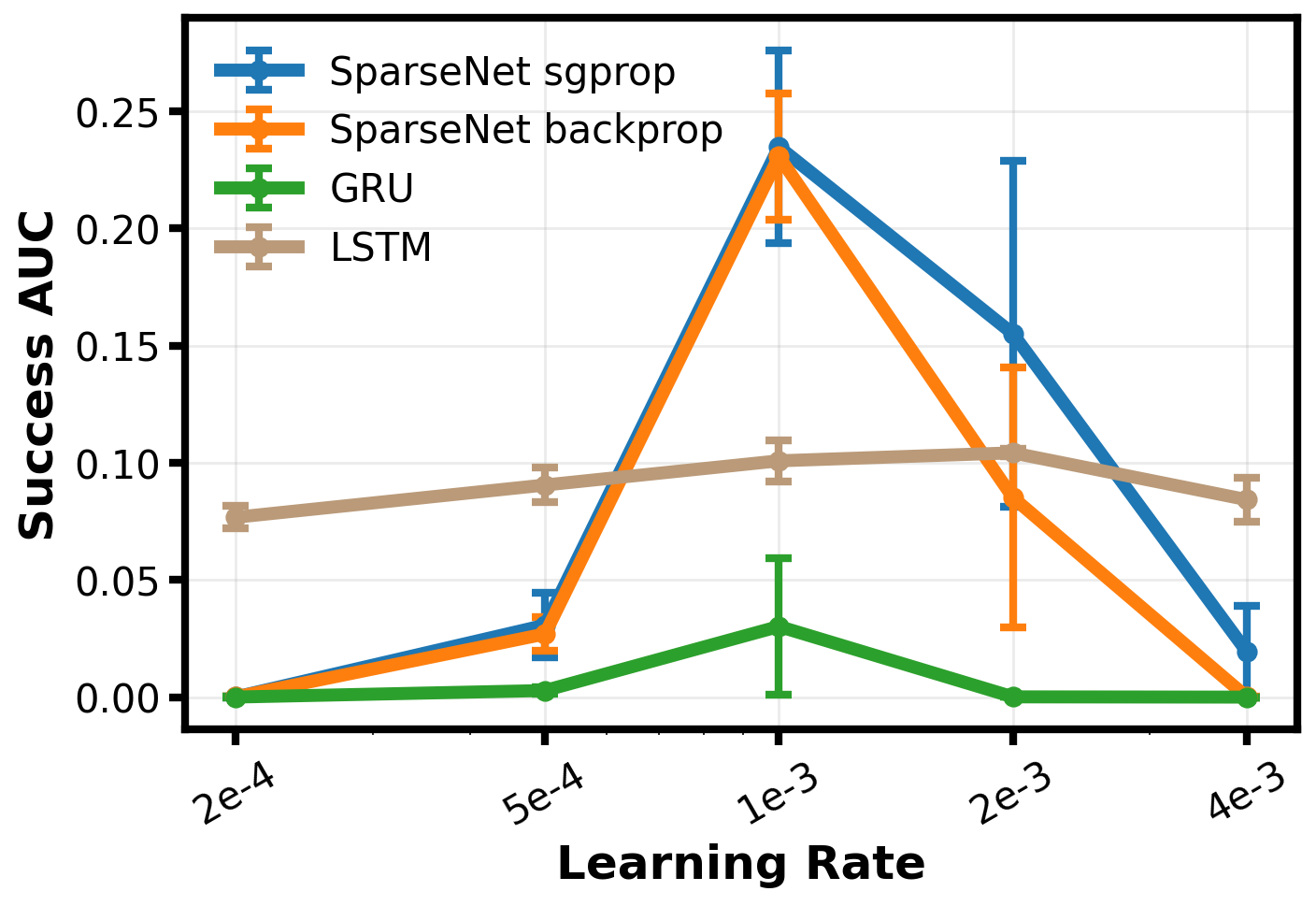}
        \caption{\texttt{explore}, 20 envs: LR sweep.}
        \label{fig:popgym-explore20-lr}
    \end{subfigure}
    \hfill
    \begin{subfigure}[b]{0.31\textwidth}
        \centering
        \includegraphics[width=\linewidth]{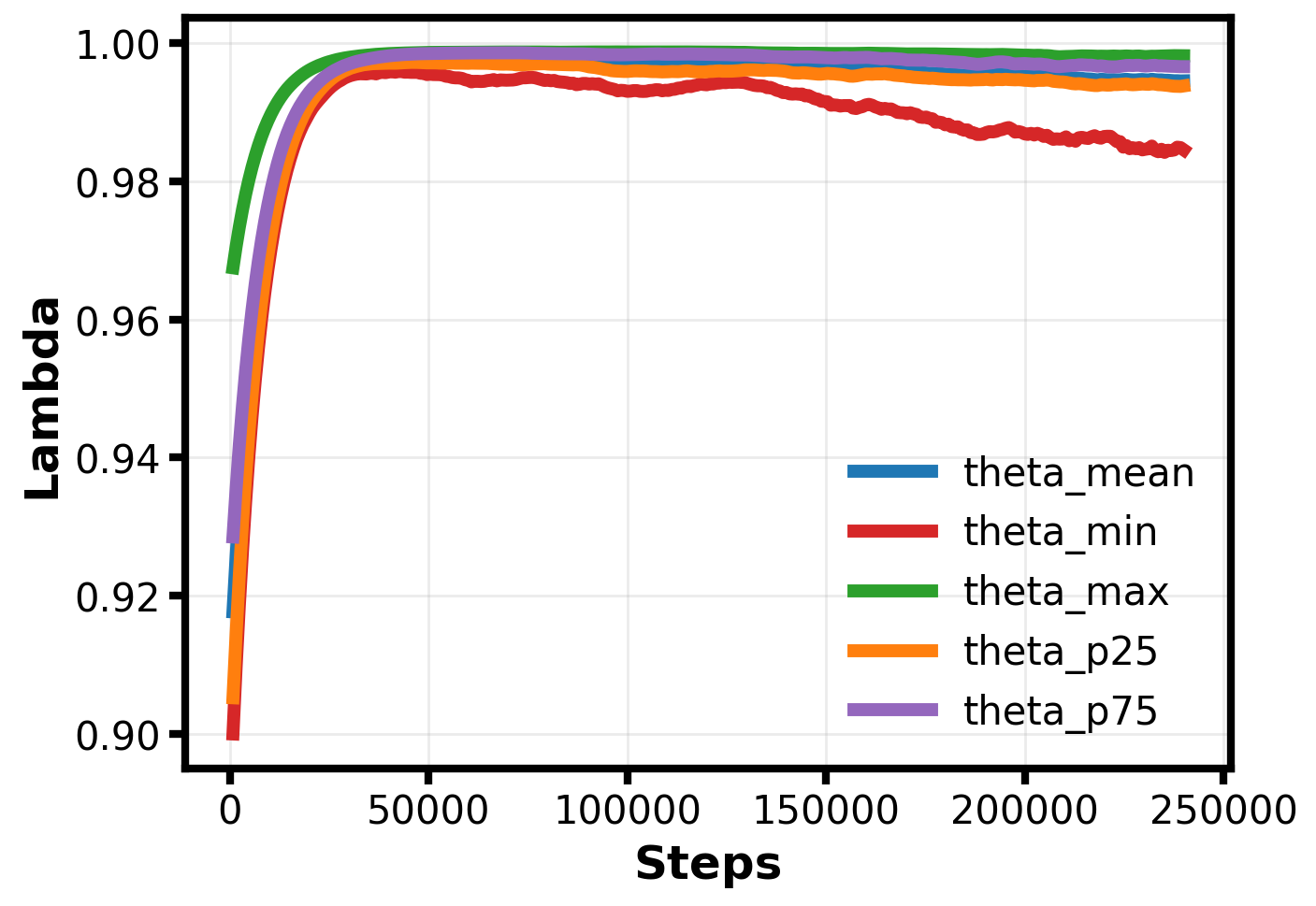}
        \caption{\texttt{explore}, 20 envs: $\lambda$.}
        \label{fig:popgym-explore20-theta}
    \end{subfigure}

    \vspace{0.35em}
    \begin{subfigure}[b]{0.31\textwidth}
        \centering
        \includegraphics[width=\linewidth]{figures/popgym-figures/explore_envs40_training_dynamics.png}
        \caption{\texttt{explore}, 40 envs: training.}
        \label{fig:popgym-explore40-training}
    \end{subfigure}
    \hfill
    \begin{subfigure}[b]{0.31\textwidth}
        \centering
        \includegraphics[width=\linewidth]{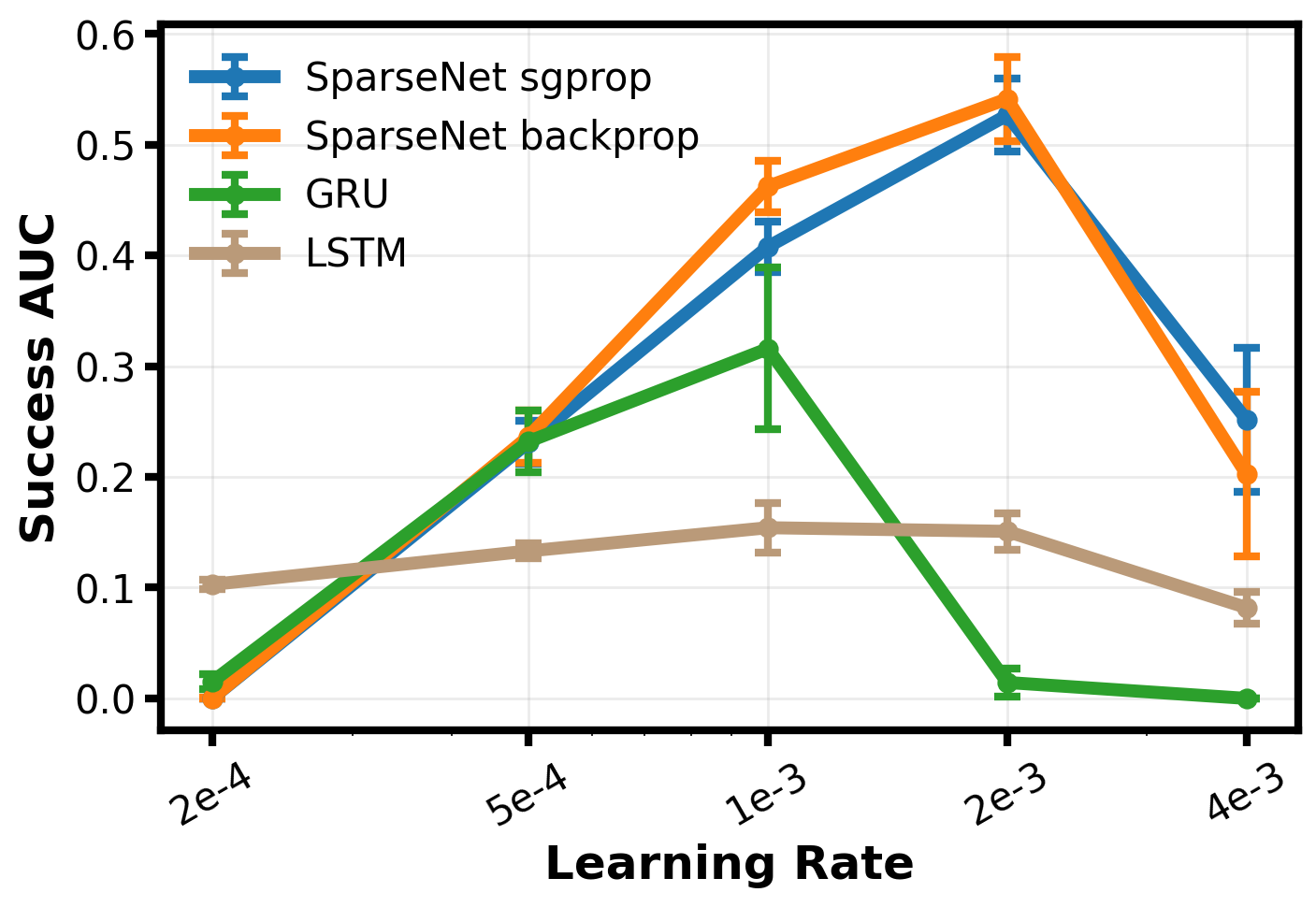}
        \caption{\texttt{explore}, 40 envs: LR sweep.}
        \label{fig:popgym-explore40-lr}
    \end{subfigure}
    \hfill
    \begin{subfigure}[b]{0.31\textwidth}
        \centering
        \includegraphics[width=\linewidth]{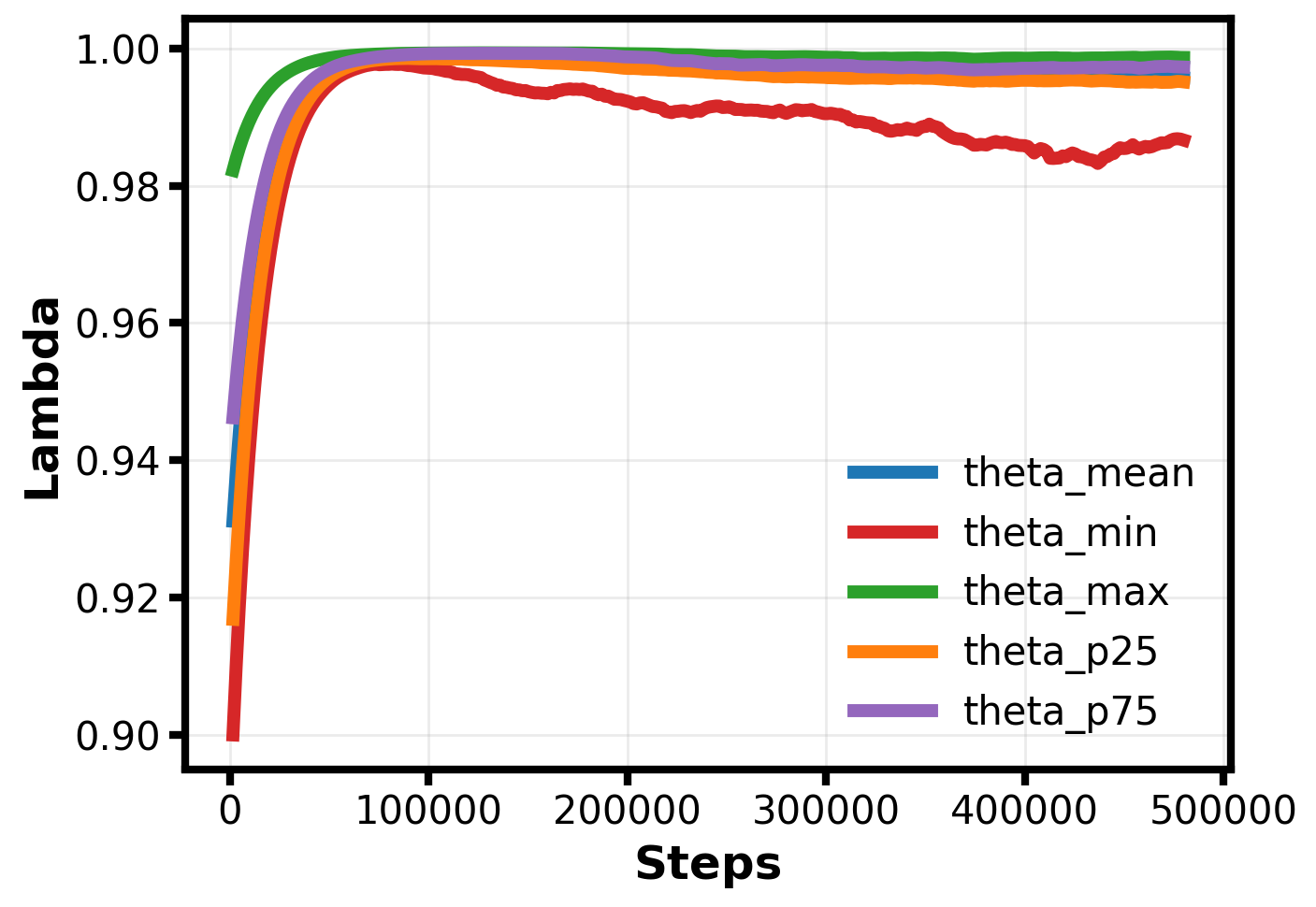}
        \caption{\texttt{explore}, 40 envs: $\lambda$.}
        \label{fig:popgym-explore40-theta}
    \end{subfigure}
    \caption{\textbf{POPGym labyrinth results.} Each row shows one environment and parallel-environment setting; columns show training dynamics, learning-rate sensitivity measured by the area-under-curve (AUC) of the success rate curve, and the adaptive mixing weight $\lambda$ along steps. The more parallel environments, the more sampled trials the agent sees before each parameter update. Results show that synthetic gradient propagation helps the most in few-sample regime. Each model has different optimal learning rate, and the training dynamis show their optimal ones. }
    \label{fig:popgym-results}
    \label{fig:popgym-escape20}
    \label{fig:popgym-escape40}
    \label{fig:popgym-explore20}
    \label{fig:popgym-explore40}
\end{figure}

This experiment tests synthetic gradients in a recurrent SparseNet for partially observable RL. We use two maze navigation environments from POPGym: \texttt{labyrinth\_escape} (the agent escapes the maze) and \texttt{labyrinth\_explore} (the agent explores the entire maze). If the agent successfully escapes the maze in \texttt{labyrinth\_escape} or successfully visits all maze squares in \texttt{labyrinth\_explore}, a terminal reward $r=1$ is given. For both environments, there is a small penalty $r=-0.01$ for each step taken. In the exploration task, a reward $r=1/(\text{total maze squre count})$ is given with probability $0.75$ for each unseen square the agent visits. 
For both environments we use maze size $(6, 6)$. Both POPGym labyrinth environments are partially observable. The raw maze observation (adjacent tiles) is wrapped with the previous action and then flattened as the resulting observation.  A learnable linear mapping is used  to map the observation from the environment to an input embedding. For SparseNet, we use $128$ neurons, $16$ of which are used to receive the input embedding (each receives the same input embedding), while the other neurons only receive inputs from their connected neurons.

Training uses a full-episode REINFORCE policy gradient update through backpropagation through time (BPTT): episodes are collected from parallel environments, shorter episodes are padded and masked, discounted Monte Carlo returns are computed, and one policy gradient update is applied. For \texttt{labyrinth\_escape} we use an episode length of $80$; for \texttt{labyrinth\_escape} we use an episode length of $40$. Rollout length is the same as the episode length in both environments. As mentioned in Appendix~\ref{subsec:full-exp-sparsenet}, to estimate the mixing weight adaptively, since the true gradient in  \eqref{eq:lambda-star} is not available, we simply use the downstream propagated gradient to replace the true gradient. This, however, could accumulate excessive bias in the gradient estimation, especially for earlier time steps. To make sure that the portion of synthetic gradients is consistent across time steps, we employ the following scheduling heuristic.
Each neuron estimates its mixing weight $\theta$ normally, but a schedule is applied during  gradient propagation through time. For node $i$ in the BPTT graph, let $\mathcal{C}(i)$ be its children and define the mean child deficit as $\delta_{t+1,i}=|\mathcal{C}(i)|^{-1}\sum_{j\in\mathcal{C}(i)}(1-\theta_{t+1,j})$. The update is $\theta_{t,i}=\min\{1,\theta_{t+1,i}+\rho\,\delta_{t+1,i}\}$, so earlier values increase in proportion to the downstream deficit and are maxed out at $1$. In this experiment, we use $\rho=0.1$.

SparseNet is recurrent in this experiment; the common SparseNet functions and adaptive mixing rule are described in Section~\ref{subsec:full-exp-sparsenet}.  Here, the synthetic gradient mixing is applied during BPTT on the time unrolled computational graph. We compare SparseNet sgprop, SparseNet backprop, GRU~\citep{cho2014learning}, and LSTM~\citep{schmidhuber1997long}. The GRU and LSTM baselines use backpropagation and use the same observation preprocessing and actor head layout as SparseNet. The recurrent hidden states of all models are reset to zero at the episode starting step. For the baseline choices: LSTM and GRU are both popular RNN architectures, and in particular GRU is shown to be the best in POPGym environments, outperforming other baselines including MLP, LSTM, and Fast Autoregressive Transformers (FART)~\citep{katharopoulos2020transformers}.

The results are plotted and discussed in Figure~\ref{fig:popgym-results}. Other detailed experimental settings are shown in Table~\ref{tab:popgym-models} and Table~\ref{tab:popgym-sweeps}. 

\begin{table}[t]
    \centering
    \caption{\textbf{POPGym model dimensions.}}
    \label{tab:popgym-models}
    \begin{tabular}{@{}p{0.12\textwidth}p{0.8\textwidth}@{}}
        \toprule
        Model & Dimensions \\
        GRU & input size $128$, hidden size $256$, actor hidden size $128$, one recurrent layer \\
        LSTM & input size $128$, hidden size $256$, actor hidden size $128$, one recurrent layer \\
        SparseNet & input size $128$, num. of neuron $128$, neuron dim $6$,  num. of slots $6$, input neurons $16$,  each neuron is sparsely connected with  $6$ parents\\
        \bottomrule
    \end{tabular}
\end{table}

\begin{table}[t]
    \centering
    \caption{\textbf{POPGym settings for \texttt{labyrinth\_escape} and  \texttt{labyrinth\_explore}.} }
    \label{tab:popgym-sweeps}
    \begin{tabular}{@{}p{0.5\textwidth}p{0.3\textwidth}p{0.1\textwidth}@{}}
        \toprule
        Parallel environments & $\{20,40\}$\\
        Learning rates & \multicolumn{2}{p{0.62\textwidth}@{}}{$\{2\cdot10^{-4},5\cdot10^{-4},10^{-3},2\cdot10^{-3},4\cdot10^{-3}\}$} \\
        Synhtetic grad predictor learning rate & \multicolumn{2}{p{0.62\textwidth}@{}}{$10^{-5}$} \\
        Seeds & \multicolumn{2}{p{0.62\textwidth}@{}}{$\{0,1,2,3,4,5\}$} \\
        Total batches & \multicolumn{2}{p{0.62\textwidth}@{}}{$300$} \\
        Discount factor & \multicolumn{2}{p{0.62\textwidth}@{}}{$\gamma=0.95$ } \\
        Entropy coefficient & \multicolumn{2}{p{0.62\textwidth}@{}}{$0.01$} \\
        \bottomrule
    \end{tabular}
\end{table}

\end{document}